\title{Multi-Armed Bandits with Minimum Aggregated Revenue Constraints}
\author{%
\begin{tabular}{cc}
\textbf{Ahmed Ben Yahmed} & \textbf{Hafedh El Ferchichi} \\
CREST, ENSAE, Palaiseau, France & CREST, ENSAE, Palaiseau, France \\
Criteo AI Lab, Paris, France & FairPlay joint team\\
FairPlay joint team &  \\
%\texttt{a.benyahmed@criteo.com} & \texttt{h.elferchichi@criteo.com} 
\\[1em]
\textbf{Marc Abeille} & \textbf{Vianney Perchet} \\
Criteo AI Lab, Paris, France & CREST, ENSAE, Palaiseau, France \\
FairPlay joint team & Criteo AI Lab, Paris, France \\
 & FairPlay joint team \\
%\texttt{m.abeille@criteo.com} & \texttt{v.perchet@ensae.fr}
\end{tabular}
}
\date{}
\begin{document}

\maketitle

\begin{abstract}
We examine a multi-armed bandit problem with contextual information, where the objective is to ensure that each arm receives a minimum aggregated reward across contexts while simultaneously maximizing the total cumulative reward. This framework captures a broad class of real-world applications where fair revenue allocation is critical and contextual variation is inherent. The cross-context aggregation of minimum reward constraints, while enabling better performance and easier feasibility, introduces significant technical challenges—particularly the absence of closed-form optimal allocations typically available in standard MAB settings. We design and analyze algorithms that either optimistically prioritize performance or pessimistically enforce constraint satisfaction. For each algorithm, we derive problem-dependent upper bounds on both regret and constraint violations. Furthermore, we establish a lower bound demonstrating that the dependence on the time horizon in our results is optimal in general and revealing fundamental limitations of the free exploration principle leveraged in prior work.
\end{abstract}

\section{Introduction}
The Multi-Armed Bandit (MAB) problem provides a foundational model for sequential decision-making under uncertainty~\citep{thompson,lattimore2020bandit,Auer,Bubeck}. At each step of a $T$ period run, an agent selects one of $K$ actions (arms), each yielding stochastic rewards, with the goal of maximizing cumulative reward. A central challenge is to balance \emph{exploration}—gathering information about unknown rewards—and \emph{exploitation}—leveraging current knowledge to optimize performance. Many variants and extensions of the synthetic bandit framework have been proposed to address specific challenges arising in real-world applications.  In particular, for clinical trials, stringent safety constraints require the selection of treatment–dosage combinations that balance efficacy with the mitigation of adverse effects~\citep{SafeMAB,MohamedGhav,SafeLinearAMANI}. Similarly, budget-constrained scenarios give rise to \emph{knapsack bandits}, where the objective is to maximize cumulative rewards while adhering to a fixed resource allocation~\citep{Knapsacks,chzhen2023small}. Additionally, fairness considerations may impose further constraints, such as ensuring equitable exposure across arms~\citep{FairnessExp,Exposure} or guaranteeing minimum revenue thresholds for each arm~\citep{pmlr-v238-baudry24a}. Those settings require to extending the MAB framework to accommodate with reward maximization under various constraints.

We investigate a contextual MAB problem subject to per-arm minimum revenue guarantees. The learner's objective is to maximize the cumulative reward over time while ensuring that, on average, each arm \( k \) achieves a reward of at least \( \lambda_k \), a predefined minimum aggregated reward over all contexts. The learner must balance the trade-off between selecting the best arm in a given context and favoring a suboptimal arm to ensure it meets its minimum revenue requirement. As illustrated in Figure~\ref{Fig: Challenge Illustration}, depending on the problem parameters, different regimes can arise: (i) \textit{infeasibility}, where the constraints cannot be satisfied; (ii) \textit{feasibility with high cost}, where satisfying the constraints requires playing significantly suboptimal arms; and (iii) \textit{feasibility with moderate cost}, where the performance gap is small and balancing reward and constraint satisfaction is relatively easy. This rich setting is motivated by several real-world applications. For instance, consider a movie recommendation platform that collaborates with multiple content providers. Each provider offers a catalog of movies spanning various categories, such as action, romance, and comedy.  Users interact with the platform by selecting a category, and the system recommends a movie accordingly. While the platform aims to match users with the most relevant content (i.e., to maximize the reward), it must also ensure that each provider receives a minimum level of user engagement or revenue. This guarantee is essential to maintain providers' incentives for participating in the platform.

\begin{figure}[ht]
    \centering
    %\vspace{-5pt}
    \begin{tikzpicture}
        \begin{axis}[
            width=8cm, height=5cm,
            xlabel={Contexts \( c \)},
            ylabel={Means \( \mu \)},
            xmin=1, xmax=10,
            ymin=0, ymax=1.2,
            xtick={1,2,3,4,5,6,7,8,9,10},
            ytick={0,0.2,0.4,0.6,0.8,1.0},
            grid=both,
            legend pos=north west
        ]
            % Arm 1 points
            % \addplot[ blue,only marks, mark=*] coordinates {
            %     (1, {0.26 + 0.8 / (1 + exp(-1.2*(1-5)))})
            %     (2, {0.25 + 0.8 / (1 + exp(-1.2*(2-5)))})
            %     (3, {0.25 + 0.8 / (1 + exp(-1.2*(3-5)))})
            %     (4, {0.25 + 0.8 / (1 + exp(-1.2*(4-5)))})
            %     (5, {0.25 + 0.8 / (1 + exp(-1.2*(5-5)))})
            %     (6, {0.25 + 0.8 / (1 + exp(-1.2*(6-5)))})
            %     (7, {0.25 + 0.8 / (1 + exp(-1.2*(7-5)))})
            %     (8, {0.25 + 0.8 / (1 + exp(-1.2*(8-5)))})
            %     (9, {0.25 + 0.8 / (1 + exp(-1.2*(9-5)))})
            %     (10,{0.25 + 0.8 / (1 + exp(-1.2*(10-5)))})
            % };
            \addplot[ blue,only marks, mark=*] coordinates {
                (1, {0.3 + 0.8 / (1 + exp(-1.2*(1-5)))})
                (2, {0.3 + 0.8 / (1 + exp(-1.2*(2-5)))})
                (3, {0.3 + 0.8 / (1 + exp(-1.2*(3-5)))})
                (4, {0.3 + 0.8 / (1 + exp(-1.2*(4-5)))})
                (5, {0.25 + 0.8 / (1 + exp(-1.2*(5-5)))})
                (6, {0.25 + 0.8 / (1 + exp(-1.2*(6-5)))})
                (7, {0.25 + 0.8 / (1 + exp(-1.2*(7-5)))})
                (8, {0.25 + 0.8 / (1 + exp(-1.2*(8-5)))})
                (9, {0.25 + 0.8 / (1 + exp(-1.2*(9-5)))})
                (10,{0.25 + 0.8 / (1 + exp(-1.2*(10-5)))})
            };
            \addlegendentry{\tiny{Arm 1}}

            % Arm 2 points
            \addplot[red,only marks, mark=*] coordinates {
                (1, {0.2 + 0.4 / (1 + exp(-1.2*(1-5)))})
                (2, {0.2 + 0.4 / (1 + exp(-1.2*(2-5)))})
                (3, {0.2 + 0.4 / (1 + exp(-1.2*(3-5)))})
                (4, {0.2 + 0.4 / (1 + exp(-1.2*(4-5)))})
                (5, {0.2 + 0.4 / (1 + exp(-1.2*(5-5)))})
                (6, {0.2 + 0.4 / (1 + exp(-1.2*(6-5)))})
                (7, {0.2 + 0.4 / (1 + exp(-1.2*(7-5)))})
                (8, {0.2 + 0.4 / (1 + exp(-1.2*(8-5)))})
                (9, {0.2 + 0.4 / (1 + exp(-1.2*(9-5)))})
                (10,{0.2 + 0.4 / (1 + exp(-1.2*(10-5)))})
            };
            \addlegendentry{\tiny{Arm 2}}

            % Horizontal lines
            % \addplot[dashed,ultra thick, red] coordinates {(1,0.18) (10,0.18)} 
            \addplot[dashed,ultra thick, red] coordinates {(1,0.1) (10,0.1)} 
                node[pos=0.89, right, xshift= -2pt, yshift=7pt] {\color{red}\tiny\(\lambda_{2}^{(1)}\)};
            \addplot[dashed,ultra thick, red] coordinates {(1,0.49) (10,0.49)} 
                node[pos=0.89, right, xshift= -2pt, yshift=7pt] {\tiny\(\lambda_{2}^{(2)}\)};
            \addplot[dashed,ultra thick, red] coordinates {(1,0.8) (10,0.8)} 
                node[pos=0.89, right, xshift= -2pt, yshift=7pt] {\tiny\(\lambda_{2}^{(3)}\)};
            % \addplot[dashed,ultra thick, blue] coordinates {(1,0.07) (10,0.07)} 
            \addplot[dashed,ultra thick, blue] coordinates {(1,0.25) (10,0.25)} 
                node[pos=0.8, right, xshift= -3pt, yshift=4pt] {\tiny\(\lambda_{1}\)};
        \end{axis}
    \end{tikzpicture}
    \vspace{-5pt}
    \caption{
Illustration of a MAB problem with minimum aggregated reward constraints with two arms and multiple contexts. Arm 1 is optimal in all contexts. We fix the threshold \( \lambda_1 \) for arm 1 and we consider how different thresholds \( \lambda_{2}^{(.)} \) for arm 2 change the problem. If \( \lambda_2 = \lambda_{2}^{(3)} \), the problem becomes infeasible. If \( \lambda_2 = \lambda_{2}^{(2)} \), then arm 2 must be played frequently in contexts \( c \geq 6 \), which substantially reduces overall performance due to the large performance gap. However, if \( \lambda_2 = \lambda_{2}^{(1)} \), it is sufficient to play arm 2 in contexts \( c \leq 5 \), where the performance gap is small, thus preserving overall reward.   }
    \label{Fig: Challenge Illustration}
\end{figure}
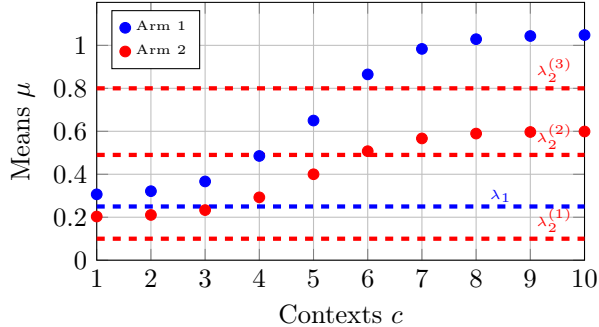

\subsection{Related Work}

Motivated by the demands of real-world applications, several extensions of the MAB framework have been proposed~\cite{NEURIPS2024_Ahmed,pmlr-v235-ferchichi,Ben_Yahmed_2024}. The problem studied in this work belongs to the broader class of constrained bandit problems, which have been investigated under various motivations such as safety~\citep{SafeMAB}, fairness~\citep{FairnessExp}, return-on-spend guarantees~\citep{return_on_invistement}, conservative behavior~\citep{ConservativeTor,ConservativeMohamed}, and knapsack constraints~\citep{Knapsacks,bernasconiItaly}.

Constrained bandit problems has been studied under two distinct perspectives: a first stream of research focuses on hard constraints where violations are strictly prohibited, for instance in the linear constrained bandits setting (see ~\citep{MohamedGhav,SafeLinearAMANI}). These approaches require prior knowledge of a feasible (safe) policy and employ carefully constructed pessimistic confidence sets to maintain zero constraint violation while achieving $\mathcal{O}(\sqrt{T})$ regret. An alternative approach relaxes the initial safe action requirement, allowing round-wise constraint violations while studying the fundamental trade-off between performance and constraint satisfaction, the two central metrics in constrained bandit problems. Notably, \citep{SafeMAB} developed two algorithms for the non-contextual MAB setting: the first achieves $\mathcal{O}(\sqrt{T})$ regret with logarithmic constraint violation, while the second exhibits the inverse behavior. \citep{SafePolytopes} introduced an algorithm for safe linear bandits to learn the optimal action defined by $\max_{x\in \lR^d} x^\top \theta^\star \text{ s.t. } Ax \leq b$, achieving poly-logarithmic regret with $\mathcal{O}(\sqrt{T})$ constraint violation.

Bandits with knapsacks (BwK) have been extensively studied~\citep{tranthanh2012,pmlr-v35-badanidiyuru14,pmlr-v162-sivakumar22a,10.5555/3600270.3601669,pmlr-v206-han23b,slivkins2024contextualbanditspackingcovering,guo2025knapsack}. In particular, \citep{Agrawal_2016} extend BwK to the contextual case using the optimization framework of~\citep{agrawal_2014}, incorporating constraints via a primal-dual approach to achieve $\mathcal{O}(\sqrt{T})$ regret. However, there is a fundamental difference between BwK and our setting: in BwK, costs accrue until a fixed budget is exhausted, which inherently guarantees constraint satisfaction, whereas in our setting the constraint is stochastic and enforced only in expectation, allowing for trade-offs between regret and constraint violation.

A special case of our setting is the context-free stochastic MAB problem with minimum revenue guarantees per arm studied in~\citep{pmlr-v238-baudry24a}. In this setting, the optimal solution consists in sampling arms proportionally to the revenue guarantee over performance ratio for each arm which translates in pulling \emph{all arms} a linear fraction of time. They propose different strategies to learn the optimal allocation online, building on optimistic/pessimistic estimates for the arms' performance: optimistic estimation enhances reward performance at the cost of higher constraint violations, while pessimistic estimation improves constraint satisfaction but increases regret. Interestingly, they show one can favor \emph{constant} performance regret at the cost of $\sqrt{T}$ constraint violation and vice versa, thus improving the results of \citep{SafeMAB}. From a learning-theoretic standpoint, the constant regret is attained thanks to the \emph{free exploration} induced by the optimal allocation structure where all arms are sampled with constant probability. 

A generalization of our setting is the stochastic contextual bandit problem under general constraints, considered by~\citep{CoveringPacking}. Assuming strict feasibility characterized by a {known} Slater constant $\gamma^\star$, they introduce a primal-dual optimization algorithm and establish $\mathcal{O}(\sqrt{T}/\gamma^\star)$ upper bounds on both regret and constraint violation. \citep{guo_2024} extend this setting by removing the Slater condition, proving $\mathcal{O}(T^{3/4})$ bounds for both regret and constraint violation. Furthermore, when Slater’s condition does hold, their method achieves improved bounds of $\mathcal{O}(\sqrt{T}/{\gamma^\star}^2)$, notably without requiring prior knowledge of $\gamma^\star$. The recent work of~\citep{guo_2025} considers exactly the same setting and assumptions as~\citep{guo_2024}, but proposes an algorithm that integrates a primal-dual approach with optimistic estimation, yielding $\mathcal{O}(\sqrt{T})$ bounds on both metrics.

\subsection{Challenges and Contributions}
How to leverage contextual information while preserving revenue guarantee for each arm is a challenging (and open) question~\citep{pmlr-v238-baudry24a}. The reason is that most contextual learning problems can be reduced to a family of "local" independent problems. For instance, a contextual bandit problem reduces to many standard multi-armed bandits \citep{perchet2013multi}, where the optimal decision can be computed based solely on the reward functions, context by context. With revenue constraints, this would be possible if the latter were defined context-wise -- i.e., by specifying \( \lambda_{k,c} \) for all pairs \((k, c)\),  and would result in straightforward extension (by considering \( |\cC| \) parallel instances). Unfortunately, with aggregated constraints, this local reduction is impossible: it is not enough to learn that an arm is sub-optimal for a given context; what really matters is \emph{how much} sub-optimal it is compared to others, so that a global planning can be computed. Even the planning problem becomes more complex with global constraints; we shall show that it can be reduced to solving a \textit{linear program}, which preserves computational efficiency but sacrifices the closed-form solution that played a central role in the theoretical analysis of~\citep{pmlr-v238-baudry24a}.
  
While the works of~\citep{CoveringPacking,guo_2024,guo_2025} address contextual MAB problems that subsume our setting, we claim that their primal-dual approach—which guarantees $\cO(\sqrt{T})$ bounds for both performance and constraint regret— is suboptimal in our case, as it overlooks the finer structure inherent to our problem formulation.

Consequently, our work is the first to bridge the gap between the non-contextual MAB with minimum revenue constraints studied in~\citep{pmlr-v238-baudry24a}, and the contextual MAB frameworks with general stochastic constraints explored in~\citep{CoveringPacking,guo_2024,guo_2025}. We achieve this by introducing a novel approach that circumvents the absence of a closed-form solution to the planning problem, without relying on the primal-dual methodology. Our main contributions are the following: %\vspace{-2mm}
\begin{enumerate}%[leftmargin=*]
    \item We introduce two novel algorithms, \textbf{OLP} and \textbf{OPLP}, that seamlessly integrate linear programming with optimistic and pessimistic estimation techniques. These algorithms effectively navigate the trade-off between performance and constraint satisfaction, each capturing distinct points along the Pareto frontier of these competing objectives. They provably achieve poly-logarithmic regret with $\cO(\sqrt{T})$ constraint violation, and vice versa.
    \item The analytical techniques employed in this work are non-standard and may be of independent interest to the MAB literature. In particular, we derive poly-logarithmic bounds by introducing a novel and more refined notion of the sub-optimality gap, which leverages the structure of the underlying linear program and quantifies the complexity of learning the optimal policy. The proposed methodology extends naturally to a wide range of MAB problems that require enforcing global linear constraints.

    \item  We establish a lower bound that confirms the (near) optimality of our algorithms and we highlight a more intriguing interpretation of the exploration-exploitation trade-off in MAB with revenue guarantees. While non-contextual setting enjoys a \emph{free-exploration} property inherited from the constrained structure, this no longer holds in the contextual setting, where the exploration-exploitation trade-off is reinstated. 
\end{enumerate}

\section{Problem Statement}\label{sec: Problem Statement}

\paragraph{Notations}\label{sec: Notations}
Let \( a \) denote a generic quantity of interest. We use the notation \( a_{k,c} \in \mathbb{R} \) to represent the value associated with arm \( k \) in context \( c  \). The vector \( \bd{a}_c = (a_{k,c})_{k \in \{1,\ldots,K\}} \) in \( \mathbb{R}^K \) collects these values across arms for a fixed context \( c \), and the matrix \( \underline{\bd{a}} = (a_{k,c})_{k \in \{1,\ldots,K\}, c \in \cC} \) in \( \mathbb{R}^{K \times \abs{\cC}} \) gathers all arm-context values. We denote by \( a_{k,c}(t) \), \( \bd{a}_c(t) \), and \( \underline{\bd{a}}(t) \) the time-dependent versions of these quantities at round \( t \).
We denote by \( \bd{e}_{kk} \) the matrix in \( \mathbb{R}^{K \times K} \) with a 1 in the \( (k,k) \)-th position and zeros elsewhere, and by \( \bd{e}_k \) the \( k \)-th canonical basis vector in \( \mathbb{R}^K \). The set of arms is \( \cK = \{1,\ldots,K\} \), and the set of arm-context pairs is \( \cJ = \{(k,c) \mid k \in \cK, c \in \cC\} \), with total cardinality \( \kappa = K|\cC| \).
Finally, \( \pi_K \) denotes the \( K \)-dimensional probability simplex, $\pi_K^{|C|}$ denotes the set of $K \times |C|$ matrices whose columns each belong to the simplex $\pi_K$, \( (x)_+ = \max(x, 0) \) represents the positive part of \( x \), and \( |\cS| \) stands for the cardinality of a set \( \cS \).

 \subsection{Setting} \label{sec: setting}
 
 We study a multi-armed bandit problem involving \( K \) stochastic arms and a number of contextual scenarios. Let \(\mathcal{C}\) denote the set of all possible contexts where each $c \in \mathcal{C}$ occurs with probability $p_{c}$. The expected reward of arm \( k \) in a given context \( c \) is represented by \( \mu_{k,c} \).
 
 \begin{minipage}[h]{0.52\textwidth}
 At each time step \( t \), the learner observes a context \( c_{t} \), selects an arm \( k_{t} \in \cK\) and receives a reward \( r_{t} \), which is independently drawn from the distribution \( \mathcal{D}_{k_{t},c_{t}} \). At each time \( t \), the choice of the arm is based on history of past interactions $\mathcal{H}_{t-1}=(c_{1}, k_{1}, r_{1}, \ldots, c_{t-1}, k_{t-1}, r_{t-1})$ and current context $c_t$. The interaction structure of this Multi-Armed Bandit with Aggregated Revenue Constraints (\textbf{MAB-ARC}) is summarized on the right.
 \end{minipage}\hfill
 \begin{minipage}[h]{0.45\textwidth}
 \LinesNumberedHidden
 \SetAlgoVlined
    \begin{algorithm2e}[H]
        \RemoveAlgoNumber
        \SetAlgorithmName{MAB-ARC}{}{}
        \caption{MAB with Aggregated Revenue Constraints}
        \label{Model}
        %\SetAlgoVLined
        \DontPrintSemicolon
        \textbf{Inputs:} $\{\lambda_{k}\}_{k \in \cK}$ \\
        \For{$t=1$ \KwTo $T$}{
            Observe context $c_t$\;
            Choose arm $k_t$\;
            Receive reward $r_t \sim \mu_{k_t,c_t}$\;
            Update $\mathcal{H}_t = \mathcal{H}_{t-1} \cup \{c_t, k_t, r_t\}$\;
        }
    \end{algorithm2e}
\end{minipage}
The learner aims to maximize the expected cumulative revenue over \( T \) time steps while ensuring that the expected aggregated revenue from each arm \( k \) over all contexts is larger than a predefined threshold \( \lambda_k  \).

\paragraph{Planning Problem.} Formally, given known thresholds \( \lambda_k \) and mean reward
$\underline{\bd{\mu}}$, the optimization problem is defined as:
\begin{align*}
    \textbf{OBJ}: \; \max_{\{k_t\}_{t:1,\ldots,T}} \quad & \lE \left[ \sum_{t=1}^{T} r_t \right] \quad
    \text{subject to:} \;\,  \forall k \in \cK , \; \lE \left[ \sum_{t=1}^{T} r_t \indicator{k_t = k} \right] \geq \lambda_k T.
\end{align*}

Optimal solutions to this constrained optimization problem consist in policies, i.e. allocation rules that map the current context to the probability of sampling an arm. We summarize allocation rules by $\underline{\bd{w}}$, where $w_{k,c} = \mathbb{P}(k |c)$. Interestingly, there is in general no unique optimal solution to \textbf{OBJ} - but a set of time-varying allocation rules, of the form $\{\underline{\bd{w}}^\star(t)\}_{t\leq T}$. Indeed, the objective and constraint criteria do not penalize strategy that periodically violates then over-satisfies the constraint as long as it remains met over the whole trajectory. In contrast, an optimal stationary solution, denoted by $\underline{\bd{w}}^\star$, would ensure uniform performance and constraint satisfaction over the trajectory. This stability is crucial in applications where constraint satisfaction is monitored over sliding windows, or where oscillatory behavior must be strictly avoided, such as in clinical or medical trials. Interestingly, an optimal stationary policy can always be derived from an optimal time-varying one by leveraging the linearity of the problem and the use of expectations, constructing $\underline{\bd{w}}^\star = \frac{1}{T} \mathbb{E} \left[ \sum_{t=1}^{T} \underline{\bd{w}}^\star(t) \right]$.

We focus from now on the stationary solution of the planning problem, which can be reformulated as solution of the linear program:
\begin{small}
\begin{equation*}
\textbf{LP}(\bd{\underline{\mu}}^{obj},\bd{\underline{\mu}}^{cons}):
\begin{aligned}[t]
\max_{\bd{\underline{w}}} \quad & f(\bd{\underline{\mu}}^{obj}, \bd{\underline{w}}) \\
\text{s. t.} \quad 
& g_{k}(\bd{\underline{\mu}}^{cons}, \bd{\underline{w}}) \geq \lambda_k, \quad \forall k \in \cK, \\
& h_{k}(c, \bd{\underline{w}}) \geq 0, \quad \forall (k, c) \in \cJ, \\
& q_{c}(\bd{\underline{w}}) = 1, \quad \forall c \in \cC,
\end{aligned} \quad \text{with} \quad 
\begin{aligned}[t]
f(\bd{\underline{\mu}}, \bd{\underline{w}}) &= \sum\nolimits_{c \in \cC} p_c \, \bd{\mu}_c^\top \bd{w}_c \\
g_{k}(\bd{\underline{\mu}}, \bd{\underline{w}}) &= \sum\nolimits_{c \in \cC} p_c \, \bd{\mu}_c^\top \bd{e}_{kk} \bd{w}_c \\
h_{k}(c, \bd{\underline{w}}) &= \bd{e}_{k}^\top \bd{w}_c,\quad 
q_{c}(\bd{\underline{w}}) = \mathbf{1}^\top \bd{w}_c
\end{aligned}
\end{equation*}
\end{small}

% {\small
% \begin{minipage}[t]{0.5\textwidth}
% \begin{align*}
% \textbf{LP}(\bd{\underline{\mu}}^{obj},\bd{\underline{\mu}}^{cons}): \, \underset{\bd{\underline{w}}}{\text{maximize}} \quad & f(\bd{\underline{\mu}}^{obj}, \bd{\underline{w}}) \\
% \text{subject to} \quad 
% & g_{k}(\bd{\underline{\mu}}^{cons}, \bd{\underline{w}}) \geq \lambda_k, \quad \forall k \in \cK, \\
% & h_{k}(c, \bd{\underline{w}}) \geq 0, \quad \forall (k, c) \in \cJ, \\
% & q_{c}(\bd{\underline{w}}) = 1, \quad \forall c \in \cC,
% \end{align*}
% \end{minipage}%
% \begin{minipage}[t]{0.25\textwidth}
% \begin{align*}
% \text{With: }f(\bd{\underline{\mu}}, \bd{\underline{w}}) &= \sum_{c \in \cC} p_c \, \bd{\mu}_c^\top \bd{w}_c \\
% g_{k}(\bd{\underline{\mu}}, \bd{\underline{w}}) &= \sum_{c \in \cC} p_c \, \bd{\mu}_c^\top \bd{e}_{kk} \bd{w}_c \\
% h_{k}(c, \bd{\underline{w}}) &= \bd{e}_{k}^\top \bd{w}_c,\quad 
% q_{c}(\bd{\underline{w}}) = \mathbf{1}^\top \bd{w}_c
% \end{align*}
% \end{minipage}
% }
Namely, \( f \) denotes the expected total revenue; \( g_k \) represents the expected aggregated revenue of arm \( k \) over all contexts; \( h_k(c, \underline{\bd{w}}) = w_{k,c} \) is the probability of selecting arm \( k \) in context \( c \); and \( q_c \) is the \( \ell_1 \)-norm of \( \bd{w}_c \), used to ensure that the vector lies in the \( K \)-dimensional probability simplex. Hence, the optimal stationary allocation \( \underline{\bd{w}}^\star \) is the solution to \( \textbf{LP}(\underline{\bd{\mu}}, \underline{\bd{\mu}}) \) \footnote{For brevity, we adopt the shorthand:
\(
\underline{\bd{w}}^\star = \underset{\underline{\bd{w}}}{\arg\max} \, \textbf{LP}(\underline{\bd{\mu}}, \underline{\bd{\mu}})
\).} and sampling arm \( k \) according to the optimal weights \( \bd{w}_c^\star \) upon observing context \( c \) yields the optimal strategy of interest for \textbf{OBJ}.

\paragraph{Learning Problem.} \label{sec: Metrics Discussion}
At each round \( t \), the learner utilizes the available information \(  \cH_{t-1} \cup c_t \), and selects an arm according to learned allocations  \( \underline{\bd{w}}(t)\). To evaluate the performance of the learner's algorithm, two key metrics are considered: the performance regret and the constraint violation. \\
\begin{definition}  \label{def:metrics}
The cumulative regret \( \mathcal{R}_{T} \) and the cumulative constraint violation \( \mathcal{V}_{T} \) are respectively defined as:
\[
    \mathcal{R}_{T} = \sum_{t=1}^{T} \Big(f(\bd{\underline{\mu}}, \bd{\underline{w}}^\star) - f(\bd{\underline{\mu}}, \bd{\underline{w}}(t)) \Big)_{+}, \quad
    \mathcal{V}_{T} = \sum_{t=1}^{T} \sum_{k \in \cK} \Big( \lambda_{k} - g_{k}(\bd{\underline{\mu}}, \bd{\underline{w}}(t)) \Big)_{+}.
\]

\end{definition}
The positive part accounts only for non-negative deviations and its role is twofold. First, it favors stable long-term behavior and prevents convergence to optimal time-varying allocation. Second, it penalizes strategies that oscillate during the learning between being overly conservative and severely violating the constraints to ease the exploration, hence favoring a tracking of $\underline{\bd{w}}^\star$ in a round wise stable manner.

\subsection{Assumptions}\label{sec: Assumptions}
We rely on the following standard assumptions, which concern the stochastic nature of the setting and the feasibility of the associated planning problem.

\begin{assumption}[\textbf{Sub-Gaussian rewards}]\label{Assump: Sub-Gaussian rewards}
    The reward distributions are conditionally $1$-sub-Gaussian:
    \[
    \forall b \in \mathbb{R}, \quad \mathbb{E} \left[ \exp\Big(b \,( r_t -\lE[r_t\mid (c_t,k_t) ] )\Big) \mid (c_t,k_t) \right] \leq \exp\left(\frac{b^2}{2}\right).
    \]
\end{assumption}

\begin{assumption}[\textbf{Known Contexts Probabilities}]\label{Assump: Known contexts probabilities}
The contexts probabilities % of context appearances 
\( \{ p_{c} \}_{c \in \cC} \) are known.
\end{assumption}
% Assumption~\ref{Assump: Known contexts probabilities} is mild and made for simplicity, as \( \{ p_c \}_{c \in \mathcal{C}} \) can be estimated online, irrespectively of the learning strategy, and do not affect the exploration–exploitation trade-off.
Assumption~\ref{Assump: Known contexts probabilities} is mild and primarily adopted for simplicity. Since context arrivals are exogenous to the decision-making process, their distribution can be estimated online by leveraging the full-information nature of context observations. Although this may introduce an additive estimation error, the error vanishes naturally, as context occurrences are fully observable and independent of the learner’s actions. Consequently, this assumption is common in the literature (see \cite{guo_2025}).
\vspace{0.5em}
\begin{assumption}[\textbf{Strict Feasibility and Non-degeneracy}]\label{Assump: Strict feasibility}
    The optimization problem \(\textbf{LP}(\underline{\bd{\mu}}, \underline{\bd{\mu}})\) is strictly feasible and non-degenerate.
\end{assumption}

% Strict feasibility ensures the existence of an allocation that strictly satisfies the revenue constraints and is only needed for \textbf{OPLP} algorithm. 
Strict feasibility is required only for \textbf{OPLP}. 
The corresponding feasibility margin is quantified by \( \gamma^\star \).

\begin{definition}[\textbf{Feasibility Margin}]\label{def: gamma} 
\begin{align*}
    \gamma^\star &:= \max \left\{ s \in \lR_{+}^{\star} \;\middle|\; \Phi(s) \neq \emptyset \right\}, \;\text{where } \;
    \Phi(s) := \left\{ \underline{\bd{w}} \in \pi_{K}^{\abs{\cC}} \;\middle|\; \forall k \in \cK, \; g_k(\underline{\bd{\mu}}, \underline{\bd{w}}) \geq \lambda_k + s \right\}.
\end{align*}
\end{definition}

%In addition, one can quantify the sensitivity of the optimal performance to uniform perturbations of the revenue constraints through a instance dependent constant that is intrinsically linked to \( \gamma^\star \).
In addition, we quantify the sensitivity of the optimal performance w.r.t. uniform constraint perturbations through a problem dependent constant $S_{\gamma^\star}$, intrinsically linked to the feasibility margin. We prove in Appendix~\ref{sec: oracle behaviour proofs}, Prop.~\ref{Prop: Sgamma} that $S_{\gamma^\star} < \infty$ for any \textbf{MAB-ARC} instance satisfying Assumption~\ref{Assump: Strict feasibility}.
\begin{definition}[\textbf{Performance Sensitivity Coefficient}]\label{def: Sgamma}
% \begin{equation*}
% S_{\gamma^\star} := \max_{ 0 \leq s_1 < s_2 \leq \gamma^{\star}} \frac{\max\limits_{\underline{\bd{w}} \in \Phi(s_1)} f(\underline{\bd{\mu}}, \underline{\bd{w}}) - 
%     \max\limits_{\underline{\bd{w}} \in \Phi(s_2)} f(\underline{\bd{\mu}}, \underline{\bd{w}})}{s_2-s_1}
% \end{equation*}
\begin{equation*}
S_{\gamma^\star} := \min \{S\in\lR_{+}: \forall  0 \leq s_1 < s_2 \leq \gamma^{\star}, \; \max\limits_{\underline{\bd{w}} \in \Phi(s_1)} f(\underline{\bd{\mu}}, \underline{\bd{w}}) - 
    \max\limits_{\underline{\bd{w}} \in \Phi(s_2)} f(\underline{\bd{\mu}}, \underline{\bd{w}}) \leq S (s_2 -s_1) \}.
\end{equation*}
\end{definition}
%Under Assumption~\ref{Assump: Strict feasibility}, one can show that $S_{\gamma^\star} < \infty$ (see Appendix~\ref{sec: oracle behaviour proofs}, Prop BLAH).

% \begin{assumption}[\textbf{Moderate problem size}]\label{Assump: Moderate problem size}
%     The number of contexts $\abs{\cC}$ and the number of arms $K$ are sufficiently small relative to the time horizon $T$, such that upper bounds involving $\kappa^2$ remain meaningful in the metrics analysis.
% \end{assumption}

% \begin{assumption}[\textbf{Vanishing failure probability}]\label{Assump: Vanishing failure probability}
%     The confidence parameter \(\delta\) satisfies \(\delta T = o(1)\) as \(T \to \infty\).
% \end{assumption}

\section{Oracle-Guided Behavior and Optimality Characterization}\label{sec: Oracle Behavior}
\paragraph{Proper Tracking of the Optimal Planning.}
% The problem formulation gives rise to a linear program (\textbf{LP}) that aims to determine the optimal allocation over the simplex. Due to the structure of linear programs, the solution lies at a vertex of the simplex, characterized by the intersection of the binding constraints~\citep{SafePolytopes,Boyd_Vandenberghe_2004,NestrovOpt}. In other words, the problem reduces to identifying the optimal active subset of revenue constraints and computing an allocation that simultaneously saturates them while satisfying the predefined equality constraints. Let \( \cI^\star \) denote the set of binding constraints associated with \( \textbf{LP}(\underline{\bd{\mu}}, \underline{\bd{\mu}}) \). 
The problem formulation yields a linear program (\textbf{LP}) optimizing allocations over the probability simplex, whose solution lies at a vertex determined by binding constraints~\citep{Boyd_Vandenberghe_2004,NestrovOpt}. The characterization of the optimal solution can thus be decomposed into identifying the optimal active set of constraints $\mathcal{I}^\star$ for $\textbf{LP}(\underline{\boldsymbol{\mu}}, \underline{\boldsymbol{\mu}})$, then computing the best allocation that saturates these constraints.
More precisely: 
\begin{enumerate}[label=(\roman*)]
    \item If \( k \in \cI^\star \cap \cK \), then \( g_{k}(\underline{\bd{\mu}}, \underline{\bd{w}}^\star) = \lambda_k \), indicating that arm \( k \) exactly attains its minimum required revenue.
    \item If \( (k, c) \in \cI^\star \cap \cJ \), then \( h_{k}(c, \underline{\bd{w}}^\star) = 0 \), which implies that arm \( k \) is never selected in context \( c \), i.e., \( w_{k,c}^\star = 0 \).
\end{enumerate}

Consequently, the oracle effectively solves the optimization problem \( \textbf{OPT}(\underline{\bd{\mu}}, \underline{\bd{\mu}}, \cI^\star) \), defined as:
\begin{align*}
   \textbf{OPT}(\underline{\bd{\mu}}^{obj}, \underline{\bd{\mu}}^{cons}, \cI): \quad 
   \underset{\underline{\bd{w}}}{\text{maximize}} \quad & f(\underline{\bd{\mu}}^{obj}, \underline{\bd{w}}) \\
   \text{subject to} \quad 
   & g_{k}(\underline{\bd{\mu}}^{cons}, \underline{\bd{w}}) = \lambda_k, \quad \forall k \in \cK \cap \cI, \\
   & h_{k}(c, \underline{\bd{w}}) = 0, \quad \forall (k, c) \in \cJ \cap \cI, \\
   & q_{c}(\underline{\bd{w}}) = 1, \quad \forall c \in \cC.
\end{align*}

%In contrast to the previously studied MAB setting in~\citep{pmlr-v238-baudry24a}, where a closed-form solution is available, the contextual problem lacks such analytical tractability. 

Directly quantifying the sub-optimality of an allocation $\underline{\bd{w}}$ w.r.t. $\underline{\bd{w}}^\star$ is challenging due to the linear nature of the allocation problem. Yet, the introduction of the intermediate quantity $\cI$ allows us to retrieve a notion of gap, similarly to standard MAB problem. In what follow, we define $\rho(\cI)$ which quantifies the sub-optimality of a candidate set $\cI$. $\rho(\cI)$ will play a key role in showing that $\cI^\star$ can be quickly identified by a learning strategy. 

\paragraph{Optimality Characterization.}One form of sub-optimality arises from infeasibility—that is, the absence of any allocation that satisfies the constraints in \(\cI\). We formalize this as follows.
\vspace{0.5em}
\begin{definition}[\textbf{Feasibility Gap}] \label{def: Psi set and S(I)}
For any set \( \cI \), define:
\(
s(\cI) = \min \left\{ s \geq 0 \;:\; \psi(s, \cI) \neq \emptyset \right\},
\)
where:
\[
\psi(s, \cI) = \left\{ \underline{\bd{w}} \in \pi_{K}^{\abs{\cC}} \;:\;
\begin{array}{l}
\forall k \in \cK, \quad g_k(\underline{\bd{\mu}}, \underline{\bd{w}}) \geq \lambda_k - s, \\
\forall k \in \cK \cap \cI, \quad g_k(\underline{\bd{\mu}}, \underline{\bd{w}}) \leq \lambda_k + s, \\
\forall (k, c) \in \cJ \cap \cI, \quad h_k(c, \underline{\bd{w}}) = 0
\end{array}
\right\}.
\]
\end{definition}

The quantity \( s(\cI) \) captures the minimum slack required to make the revenue constraints feasible
%—i.e., the smallest relaxation that admits a feasible allocation satisfying those constraints,
while saturating - up to some margin - the arms in $\cK \cap \cI$ and satisfying the allocation sparsity prescribed by $\cJ\cap\cI$ (i.e, the set of $(k,c)$ s.t. $w_{k,c} = 0$).
%The quantity \( s(\cI) \) captures the minimum slack required to make the revenue constraints  in \(\cK\cap\cI\) feasible
%—i.e., the smallest relaxation that admits a feasible allocation satisfying those constraints, while satisfying the allocation sparsity prescribed by $\cJ\cap\cI$ (i.e, the set of $(k,c)$ s.t. $w_{k,c} = 0$).

Beyond feasibility, we also quantify sub-optimality from a performance standpoint.
\vspace{0.5em}
\begin{definition}[\textbf{Performance Gap}] \label{def: Perf I}
For any candidate set $\cI$, we define the performance sensitivity $\cL(\cI)$ and the performance gap $\cP(\cI)$ as:
% \begin{equation*}
% \cL(\cI) := \max_{ s(\cI) \leq s_1 < s_2 } \frac{\max\limits_{\underline{\bd{w}} \in \psi(s_2,\cI)} f(\underline{\bd{\mu}}, \underline{\bd{w}}) - 
%     \max\limits_{\underline{\bd{w}} \in \psi(s_1,\cI)} f(\underline{\bd{\mu}}, \underline{\bd{w}})}{s_2-s_1}
% \end{equation*}
\begin{align*}
\cL(\cI) &:= \min \{\cL\in\lR_{+}: \forall  s(\cI) \leq s_1 < s_2 , \; \max\limits_{\underline{\bd{w}} \in \psi(s_2,\cI)} f(\underline{\bd{\mu}}, \underline{\bd{w}}) - 
\max\limits_{\underline{\bd{w}} \in \psi(s_1,\cI)} f(\underline{\bd{\mu}}, \underline{\bd{w}}) \leq \cL (s_2 -s_1) \}, \\
\cP(\cI) &:= \nicefrac{ \left( f(\underline{\bd{\mu}}, \underline{\bd{w}}^\star) - \max\limits_{\underline{\bd{w}} \in \psi(s(\cI),\cI)} f(\underline{\bd{\mu}}, \underline{\bd{w}}) \right)}{ \left(\max(1, S_{\gamma^\star}) + \cL(\cI) \right)}.
\end{align*}
%We then define the performance gap of \( \cI \) as:
% \begin{equation*}
% \cP(\cI) := \nicefrac{ \left( f(\underline{\bd{\mu}}, \underline{\bd{w}}^\star) - \max\limits_{\underline{\bd{w}} \in \psi(s(\cI),\cI)} f(\underline{\bd{\mu}}, \underline{\bd{w}}) \right)}{ \left(\max(1, S_{\gamma^\star}) + \cL(\cI) \right)}.
% \end{equation*}
\end{definition}
\vspace{-0.3em}
The denominator of $\cP(\cI)$, beyond its technical role in the proof, can be interpreted as a scaling factor that reflects both the geometry of the candidate set $\cI$ and the problem's sensitivity to perturbations. Proposition~\ref{Prop: L(I)} in Appendix~\ref{sec: oracle behaviour proofs} shows that, for any candidate $\cI$, $\cL(\cI)$ is finite . Combining feasibility and performance considerations, we define the overall sub-optimality gap as follows:
\vspace{0.1em}
\begin{definition}[\textbf{Sub-optimality Gap}] \label{def: suboptimality gap}
For a candidate set \( \cI \), the sub-optimality gap is defined as \( \rho(\cI) := \max\left( s(\cI), \cP(\cI) \right) \), with the worst-case sub-optimality given by \( \rho^{\star} := \underset{\cI:\cI \neq \cI^\star}{\min} \rho(\cI) \).
\end{definition}
This characterization enables distinguishing optimal from suboptimal sets of saturated constraints. Lemma~\ref{lem: subotimality characterisation} formalizes this, showing that \( \rho(\mathcal{I}) \) plays a role analogous to the gap in classical MAB. The proof is deferred to Appendix~\ref{sec: proof of suboptimality gap rho}.

\vspace{0.3em}
\begin{restatable}[\textbf{Suboptimality Characterization}]{lemma}{LemRhoGap}\label{lem: subotimality characterisation}
    Under Assumption~\ref{Assump: Strict feasibility},
    \( \quad
    \rho(\cI^\star) = 0 \quad \text{and} \quad \rho^\star > 0.
    \)
\end{restatable}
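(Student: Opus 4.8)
The plan is to prove the two claims separately, treating $\rho(\cI^\star)=0$ as the easy direction and $\rho^\star>0$ as the substantive one.

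First, for $\rho(\cI^\star)=0$: recall $\rho(\cI^\star)=\max(s(\cI^\star),\cP(\cI^\star))$, so it suffices to show both terms vanish. For $s(\cI^\star)$, observe that $\underline{\bd{w}}^\star$ itself lies in $\psi(0,\cI^\star)$: by the characterization of the optimal active set, $g_k(\underline{\bd{\mu}},\underline{\bd{w}}^\star)=\lambda_k$ for $k\in\cK\cap\cI^\star$ (so both the $\geq\lambda_k-s$ and $\leq\lambda_k+s$ conditions hold with $s=0$), $g_k(\underline{\bd{\mu}},\underline{\bd{w}}^\star)\geq\lambda_k$ for all $k$ by feasibility, and $h_k(c,\underline{\bd{w}}^\star)=0$ for $(k,c)\in\cJ\cap\cI^\star$. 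Hence $\psi(0,\cI^\star)\neq\emptyset$ and $s(\cI^\star)=0$. For $\cP(\cI^\star)$, the numerator is $f(\underline{\bd{\mu}},\underline{\bd{w}}^\star)-\max_{\underline{\bd{w}}\in\psi(0,\cI^\star)}f(\underline{\bd{\mu}},\underline{\bd{w}})$; since $\underline{\bd{w}}^\star\in\psi(0,\cI^\star)$ the max is at least $f(\underline{\bd{\mu}},\underline{\bd{w}}^\star)$, while it cannot exceed it because any $\underline{\bd{w}}\in\psi(0,\cI^\star)$ is feasible for $\textbf{LP}(\underline{\bd{\mu}},\underline{\bd{\mu}})$ (the constraints of $\psi(0,\cI^\star)$ imply $g_k\geq\lambda_k$, $h_k\geq0$, $q_c=1$). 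So the numerator is $0$, the denominator is positive and finite (using $\cL(\cI^\star)<\infty$ from Prop.~\ref{Prop: L(I)} and $S_{\gamma^\star}<\infty$ from Prop.~\ref{Prop: Sgamma}), giving $\cP(\cI^\star)=0$.

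For $\rho^\star>0$: since $\rho^\star=\min_{\cI\neq\cI^\star}\rho(\cI)$ is a minimum over the finite collection of subsets $\cI\subseteq\cJ\cup\cK$ distinct from $\cI^\star$, it suffices to show $\rho(\cI)>0$ for each such $\cI$, i.e. that $s(\cI)>0$ or $\cP(\cI)>0$. Argue by contradiction: suppose $\rho(\cI)=0$ for some $\cI\neq\cI^\star$. Then $s(\cI)=0$, so $\psi(0,\cI)\neq\emptyset$, and $\cP(\cI)=0$, so (the denominator being positive and finite) $\max_{\underline{\bd{w}}\in\psi(0,\cI)}f(\underline{\bd{\mu}},\underline{\bd{w}})=f(\underline{\bd{\mu}},\underline{\bd{w}}^\star)$. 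Let $\underline{\bd{w}}'$ attain this maximum. As in the previous paragraph, $\underline{\bd{w}}'$ is feasible for $\textbf{LP}(\underline{\bd{\mu}},\underline{\bd{\mu}})$ and achieves the optimal value, so $\underline{\bd{w}}'$ is an optimal solution of the LP. By non-degeneracy (Assumption~\ref{Assump: Strict feasibility}) the LP has a unique optimal vertex solution, namely $\underline{\bd{w}}^\star$, with a uniquely determined active set $\cI^\star$; hence $\underline{\bd{w}}'=\underline{\bd{w}}^\star$. But then $\underline{\bd{w}}^\star\in\psi(0,\cI)$ forces $h_k(c,\underline{\bd{w}}^\star)=0$ for every $(k,c)\in\cJ\cap\cI$ and $g_k(\underline{\bd{\mu}},\underline{\bd{w}}^\star)=\lambda_k$ for every $k\in\cK\cap\cI$; combined with non-degeneracy (which pins down exactly which constraints are tight at $\underline{\bd{w}}^\star$), this yields $\cI\subseteq\cI^\star$. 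The remaining work is to upgrade this inclusion to equality, and this is the step I expect to be the main obstacle: one must show that a strict subset $\cI\subsetneq\cI^\star$ cannot satisfy $\cP(\cI)=0$, which requires leveraging that dropping an active constraint from $\cI^\star$ either strictly enlarges the feasible set (so the constrained max strictly exceeds $f(\underline{\bd{\mu}},\underline{\bd{w}}^\star)$—impossible since that is the global LP optimum) or else that the constraint was redundant, contradicting non-degeneracy of $\cI^\star$. Carefully formalizing this dichotomy, i.e. that every proper sub-collection of the active set is "genuinely less constrained" under non-degeneracy, is where the argument needs the most care; the rest is bookkeeping with the definitions of $\psi$, $s$, $\cP$ and feasibility of the associated allocations.
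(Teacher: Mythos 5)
Your proof of $\rho(\cI^\star)=0$ is correct and is exactly the paper's argument, and the first half of your $\rho^\star>0$ argument (finiteness of the candidate collection, the case split on whether $\psi(0,\cI)$ is empty, and the use of uniqueness of the LP optimum under non-degeneracy to force any maximizer of $f$ over $\psi(0,\cI)$ to coincide with $\underline{\bd{w}}^\star$, hence $\cI\subseteq\cI^\star$) is a correct and in fact more detailed fleshing-out of the paper's rather terse second bullet. The gap you flag at the end, however, is real, and the dichotomy you propose to close it does not work. For \emph{any} candidate $\cI$, the set $\psi(0,\cI)$ imposes all the constraints $g_k(\underline{\bd{\mu}},\underline{\bd{w}})\geq\lambda_k$ together with the simplex constraints, so $\psi(0,\cI)$ is always contained in the feasible region of $\textbf{LP}(\underline{\bd{\mu}},\underline{\bd{\mu}})$; consequently $\max_{\underline{\bd{w}}\in\psi(0,\cI)}f(\underline{\bd{\mu}},\underline{\bd{w}})\leq f(\underline{\bd{\mu}},\underline{\bd{w}}^\star)$ holds unconditionally, and the branch of your dichotomy in which ``the constrained max strictly exceeds the global LP optimum'' can never occur. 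Worse, if $\cI\subsetneq\cI^\star$ were an admissible candidate, then $\underline{\bd{w}}^\star\in\psi(0,\cI)$ (it saturates every constraint of $\cI^\star$, hence every constraint of $\cI$), which forces $s(\cI)=0$ \emph{and} $\cP(\cI)=0$, i.e.\ $\rho(\cI)=0$. So no argument can rescue the statement for strict subsets: they are genuine zero-gap sets under the literal definitions.

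The resolution — implicit in the paper but visible from the count $\binom{\kappa+K}{\kappa-\abs{\cC}}$ appearing in its proof — is that candidate sets $\cI$ are not arbitrary subsets of $\cK\cup\cJ$ but index potential optimal bases: a non-degenerate vertex of the LP saturates exactly $\kappa$ constraints, of which $\abs{\cC}$ are the equalities $q_c(\underline{\bd{w}})=1$, so every candidate $\cI$ has cardinality exactly $\kappa-\abs{\cC}$ (this is also what the algorithms produce as $\cI_t$). With $\abs{\cI}=\abs{\cI^\star}$, your inclusion $\cI\subseteq\cI^\star$ immediately upgrades to equality and your contradiction closes. In short: your argument is complete once you add the cardinality convention on candidate sets; without it the lemma as literally stated is false, and the repair you sketch (strict enlargement of the feasible set versus redundancy contradicting non-degeneracy) is not the right mechanism.
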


%\section{Algorithms and results}\label{sec: Algorithms and Results}
%\input{NeurIPS_2025/Content_NeurIPS_2025/Confidence_set}
\section{Algorithms}\label{sec: Algorithms}
The learner's objective is to find the optimal allocation $\underline{\bd{w}}^\star$ without prior knowledge of the true parameters $\underline{\bd{\mu}}$. While estimates can be constructed from data, the agent must carefully trade-off between exploration and exploitation. We summarize in this section the confidence set construction and our proposed strategies that focus either on performance or on constraints satisfaction.
% The learner seeks the optimal allocation $\underline{\boldsymbol{w}}^\star$ without knowledge of $\underline{\boldsymbol{\mu}}$, requiring careful exploration-exploitation trade-offs. This section presents our confidence set construction and proposed algorithms.

\paragraph{Confidence Set.} \label{sec: Confidence Set}
 The unknown parameter $\underline{\bd{\mu}}$ can be estimated online from past interactions with a standard empirical mean, formally given by:
\begin{equation}\label{eq:empirical_mean}
\hat{\mu}_{k,c}(t) = \frac{1}{n_{k,c}(t)} \underset{s=1}{\overset{t}{\sum}} r_s \indicator{k_s = k,\, c_s = c}, \text{ where } n_{k,c}(t) = \underset{s=1}{\overset{t}{\sum}} \indicator{k_s = k,\, c_s = c}.
\end{equation}
Further, the concentration of the empirical estimator is prescribed by the set 
\begin{equation}\label{eq:conf-set-def}
\cS_t(\underline{\bd{\hat{\mu}}}(t), \delta) = \left\{ \underline{\bd{\mu}} : \forall (k, c), \; \left| \hat{\mu}_{k,c}(t) - \mu_{k,c} \right| \leq \epsilon_{k,c}(t) \right\}, \text{with }
 \epsilon_{k,c}(t) = \sqrt{\frac{2\log\left(\frac{2\kappa}{\delta}\right)}{n_{k,c}(t-1)}}.
\end{equation}
 Proposition~\ref{prop:conf-set} ensures that $\cS_t$ is a valid confidence set for $\underline{\bd{\mu}}$ and provides prediction error bounds on the performance and constraints violation.
 % , tailored to the allocation nature of the actions played over rounds. 
 The proof  is deferred to Appendix~\ref{Sec: proof of Confidence Set}.

\begin{restatable}[\textbf{Confidence set}]{proposition}{PropConfSet}\label{prop:conf-set} Under Assumption~\ref{Assump: Sub-Gaussian rewards}, let $\underline{\bd{\hat{\mu}}}(t)$ and $\cS_t$ defined in Eq.~\ref{eq:empirical_mean} and \ref{eq:conf-set-def}, then:
\begin{enumerate}
    \item [$(i)$] $\forall t\geq 1$, $\P\left( \underline{\bd{\mu}} \in  \cS_t(\underline{\bd{\hat{\mu}}}(t), \delta) \right) \geq 1 - \delta,$
    \item [$(ii)$] $\forall t\geq 1$, w.p. at least $1-\delta$, for any $\underline{\bd{w}} \in \pi_{K}^{\abs{\cC}}$, $\underline{\bd{\tilde{\mu}}}(t) \in \cS_t(\underline{\bd{\hat{\mu}}}(t), \delta)$ and $k \in \cK$, 
\end{enumerate}
\begin{equation*}
    \begin{aligned}[t]
    &\left| g_k(\underline{\bd{\tilde{\mu}}}(t), \bd{\underline{w}}) - g_k(\bd{\underline{\mu}}, \bd{\underline{w}}) \right| 
    \leq \rho_{k}(\underline{\bd{\epsilon}}(t),\underline{\bd{w}})
    ,\;\\[5pt]
    &\left| f(\underline{\bd{\tilde{\mu}}}(t), \bd{\underline{w}}) - f(\bd{\underline{\mu}}, \bd{\underline{w}}) \right| 
    \leq \rho(\underline{\bd{\epsilon}}(t),\underline{\bd{w}}),
\end{aligned}
\quad \text{ where } \quad 
    \begin{aligned}[t]
    & \rho_k(\underline{\bd{\epsilon}}(t), \underline{\bd{w}}) = \sum\nolimits_{c \in \cC} 2\epsilon_{k,c}(t) w_{k,c}(t)
    ,\;\\
    &\rho(\underline{\bd{\epsilon}}(t), \underline{\bd{w}}) = \sum\nolimits_{k \in [K]} \rho_k(\underline{\bd{\epsilon}}(t), \underline{\bd{w}}).
\end{aligned}
\end{equation*}
\end{restatable}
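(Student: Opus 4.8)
The plan is to prove the two parts separately, since they rely on different arguments. Part $(i)$ is a standard sub-Gaussian concentration argument: for each fixed pair $(k,c)$ and each fixed value of the count $n_{k,c}(t-1)$, the centered empirical mean $\hat{\mu}_{k,c}(t) - \mu_{k,c}$ is an average of $n_{k,c}(t-1)$ conditionally $1$-sub-Gaussian increments, so by the standard sub-Gaussian tail bound together with a union bound over the (at most $t$) possible values of the count — or, more cleanly, via a time-uniform / maximal-inequality (Azuma–Hoeffding on the martingale $\sum_s (r_s - \mu_{k_s,c_s})\indicator{k_s=k, c_s=c}$, peeling over dyadic blocks of the count) — one gets $\P(|\hat{\mu}_{k,c}(t) - \mu_{k,c}| > \epsilon_{k,c}(t)) \leq \delta/\kappa$ with the stated radius $\epsilon_{k,c}(t) = \sqrt{2\log(2\kappa/\delta)/n_{k,c}(t-1)}$. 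A union bound over the $\kappa = K|\cC|$ pairs then yields $\P(\underline{\bd{\mu}} \in \cS_t) \geq 1-\delta$. I would keep an eye on the off-by-one in the count ($n_{k,c}(t-1)$ versus $n_{k,c}(t)$) and on the factor $2$ inside the log (which absorbs the peeling/union cost), but these are routine.

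For part $(ii)$, I would work on the event $\{\underline{\bd{\mu}} \in \cS_t\}$ from part $(i)$, which has probability at least $1-\delta$. On this event, both $\underline{\bd{\mu}}$ and any $\underline{\bd{\tilde\mu}}(t) \in \cS_t$ lie within $\epsilon_{k,c}(t)$ of $\hat\mu_{k,c}(t)$ coordinatewise, so by the triangle inequality $|\tilde\mu_{k,c}(t) - \mu_{k,c}| \leq 2\epsilon_{k,c}(t)$ for every $(k,c)$. The bounds then follow from the explicit linear forms of $g_k$ and $f$. Concretely, using $g_k(\underline{\bd\mu},\underline{\bd w}) = \sum_{c} p_c\, \mu_{k,c}\, w_{k,c}$ (since $\bd\mu_c^\top \bd e_{kk}\bd w_c = \mu_{k,c} w_{k,c}$), linearity gives
\[
\left| g_k(\underline{\bd{\tilde\mu}}(t), \underline{\bd w}) - g_k(\underline{\bd\mu}, \underline{\bd w}) \right| = \left| \sum_{c\in\cC} p_c (\tilde\mu_{k,c}(t) - \mu_{k,c}) w_{k,c} \right| \leq \sum_{c\in\cC} p_c\, |\tilde\mu_{k,c}(t)-\mu_{k,c}|\, w_{k,c} \leq \sum_{c\in\cC} 2\epsilon_{k,c}(t) w_{k,c},
\]
where the last step uses $p_c \leq 1$ and $w_{k,c} \geq 0$; this matches $\rho_k(\underline{\bd\epsilon}(t),\underline{\bd w})$ provided one reads $w_{k,c}$ for the argument $\underline{\bd w}$ (I note the statement writes $w_{k,c}(t)$, which I would treat as a typo for $w_{k,c}$, or equivalently instantiate $\underline{\bd w} = \underline{\bd w}(t)$). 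For $f$, decompose $f = \sum_{k} g_k$ — indeed $\sum_k \mu_{k,c} w_{k,c} = \bd\mu_c^\top\bd w_c$ — so $|f(\underline{\bd{\tilde\mu}}(t),\underline{\bd w}) - f(\underline{\bd\mu},\underline{\bd w})| \leq \sum_{k} |g_k(\underline{\bd{\tilde\mu}}(t),\underline{\bd w}) - g_k(\underline{\bd\mu},\underline{\bd w})| \leq \sum_{k\in[K]} \rho_k(\underline{\bd\epsilon}(t),\underline{\bd w}) = \rho(\underline{\bd\epsilon}(t),\underline{\bd w})$, using part $(i)$ of the chain.

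The only genuine subtlety — the part I'd expect to be the main obstacle — is establishing part $(i)$ with a radius that is uniform in the random count $n_{k,c}(t-1)$ and valid for the stopping-time-like quantity it represents: the number of pulls of $(k,c)$ is not predictable, so one cannot naively treat the empirical mean as an average of a fixed number of i.i.d. terms. The clean fix is a maximal inequality (stopped martingale / method of mixtures or a peeling argument over $n_{k,c} \in \{2^j\}$), whose logarithmic overhead is exactly what the constant $2$ inside $\log(2\kappa/\delta)$ is there to absorb; alternatively, since the confidence set is only required to hold "$\forall t\geq 1$" with a per-$t$ guarantee (not simultaneously over all $t$), a simple union bound over the at most $t$ possible values of the count suffices, and then the radius as written is actually slightly loose. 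Everything else is bookkeeping: the triangle inequality to pass from $\hat\mu$-centered to $\mu$-centered deviations, and the linearity of $f$ and $g_k$ to push coordinatewise bounds through the objective and constraint functionals.
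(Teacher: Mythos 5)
Your proof follows essentially the same route as the paper's: Hoeffding plus a union bound over the $\kappa$ arm--context pairs for $(i)$, then the triangle inequality through $\hat{\mu}_{k,c}(t)$ and the linearity of $f$ and $g_k$ (using $p_c \le 1$ and $w_{k,c}\ge 0$) for $(ii)$. The extra care you take with the randomness of the count $n_{k,c}(t-1)$ --- via a union bound over its possible values or a maximal/peeling inequality --- is a real subtlety that the paper's own proof silently glosses over by invoking fixed-$n$ Hoeffding directly, so your version is if anything the more rigorous one; you also correctly identify the $w_{k,c}(t)$ versus $w_{k,c}$ typo in the statement.
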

Equipped with the confidence set construction, we define the upper and lower confidence bounds as \(\underline{\textbf{UCB}}(t) = \underline{\bd{\hat{\mu}}}(t) + \underline{\bd{\epsilon}}(t)\) and \(\underline{\textbf{LCB}}(t) = \underline{\bd{\hat{\mu}}}(t) - \underline{\bd{\epsilon}}(t)\), both of which belong to \(\cS_t(\underline{\bd{\hat{\mu}}}(t), \delta)\).

\setcounter{algocf}{0}
\begin{minipage}[t]{0.48\textwidth}
    \begin{algorithm2e}[H]
    \RemoveAlgoNumber
    \SetAlgorithmName{OLP}{}{}
        \caption{Optimistic Linear Programming}\label{Alg: Optimism}
        \SetAlgoLined
        \DontPrintSemicolon
        \textbf{Inputs:} $\{\lambda_{k}\}_{k \in \{1,\ldots,K\}}, \{p_c\}_{c\in\cC} $ \\
        \For{$t=1,\ldots,T$ }{
            Observe context $c_t$\\
            Set $\delta \leftarrow \sfrac{1}{t}$\\
            $\underline{\bd{w}}(t)= \underset{\underline{\bd{w}} \in \pi_{K}^{\abs{\cC}}}{\argmax}\textbf{ LP}(\underline{\textbf{UCB}}(t),\underline{\textbf{UCB}}(t))$ \label{step: optimitic opt in OPL}\\
            Sample arm $k_t \sim \bd{w}_{c_t}(t)$ \\
            Receive reward $r_{t}\sim \mu_{k_{t},c_{t}}$ \\
            Update $\underline{\bd{n}}(t),\underline{\bd{\hat{\mu}}}(t), \underline{\bd{\epsilon}}(t)$\\
            Update history $\mathcal{H}_{t}= \mathcal{H}_{t-1} \cup \{c_{t},k_{t},r_{t}\}$
        }
    \end{algorithm2e}
    We propose two algorithms that focus either on the performance or on the constraint violation. \textbf{OLP} adopts an optimistic approach by solving the underlying \textbf{LP} problem using $\textbf{UCB}(t)$ as a parameter for both the objective function and the constraints. Under Asm.~\ref{Assump: Strict feasibility}, the inner maximization problem remains feasible at all times. 
\end{minipage}
\hfill
\begin{minipage}[t]{0.48\textwidth}
    \begin{algorithm2e}[H]
        \RemoveAlgoNumber
        \SetAlgorithmName{OPLP}{}{}
        \caption{Optimistic-Pessimistic Linear Programming}\label{Alg: Pessimism}
        \SetAlgoLined
        \DontPrintSemicolon
        \textbf{Inputs:} $\{\lambda_{k}\}_{k \in \{1,\ldots,K\}}, \{p_c\}_{c\in\cC}$ \\
        \For{$t=1,\ldots,T$}{
            Observe context $c_t$ \\
            Set $\delta \leftarrow \sfrac{1}{t}$\\
            \If{$\textbf{ LP}(\underline{\textbf{UCB}}(t),\underline{\textbf{LCB}}(t))$ is feasible}{
                $\underline{\bd{w}}(t)= \underset{\underline{\bd{w}} \in \pi_{K}^{\abs{\cC}}}{\argmax}\textbf{ LP}(\underline{\textbf{UCB}}(t),\underline{\textbf{LCB}}(t))$\label{step: pessimistic opt in OPPL}
            }
            \Else{
                $\underline{\bd{w}}(t)= \underset{\underline{\bd{w}} \in \pi_{K}^{\abs{\cC}}}{\argmax}\textbf{ LP}(\underline{\textbf{UCB}}(t),\underline{\textbf{UCB}}(t))$ \label{step: optimistic opt in OPPL}
            }
            Sample arm $k_t \sim \bd{w}_{c_t}(t)$ \\
            Receive reward $r_{t} \sim \mu_{k_{t},c_{t}}$ \\
            Update $\underline{\bd{n}}(t),\underline{\bd{\hat{\mu}}}(t),  \underline{\bd{\epsilon}}(t)$\\
            Update history $\mathcal{H}_{t} = \mathcal{H}_{t-1} \cup \{c_{t}, k_{t}, r_{t}\}$
        }
    \end{algorithm2e}
\end{minipage}

On the other hand, \textbf{OPLP} proposes an asymmetric estimation strategy that leverages an optimistic estimate $\textbf{UCB}(t)$ for the objective and a pessimistic estimate $\textbf{LCB}(t)$ for the constraints parameters. In contrast with \textbf{OLP}, the inner maximization problem may not always be feasible. In such cases, a fallback procedure based on a doubly optimistic approach is used instead.

\section{Main results}\label{sec: Results}

\subsection{Algorithm guarantees} \label{sec: Algorithm guarantees}

The following theorems provide regret and constraint violation guarantees for \textbf{OLP} and \textbf{OPLP}, highlighting their respective focus on reward performance and constraint satisfaction. The proof sketches are presented in Appendix~\ref{sec: Proof_skecth}, while the detailed proofs are deferred to Appendix~\ref{sec: proof in Results}.
% Numerical illustrations are conducted with simulated data in Appendix~\ref{sec: simulation}.
\begin{restatable}[Upper bounds for \textbf{OLP}]{theorem}{ThOLP}\label{Th: Results of OLP}
Under Assumptions~\ref{Assump: Sub-Gaussian rewards},~\ref{Assump: Known contexts probabilities} and~\ref{Assump: Strict feasibility}, the performance and constraint regret of \textbf{OLP} satisfy:
%Under Algorithm~\ref{Alg: Optimism}, the following bounds hold:
\begin{align*}
    \mathbb{E}[\mathcal{R}_T] &\leq \mathcal{O} \left(\frac{\Log{T}^2}{\rho^\star}  \right),\\
    \mathbb{E}[\mathcal{V}_T] &\leq \mathcal{O} \left( \frac{\Log{T}^2}{\rho^\star}  + \sqrt{ { |\cK \cap \mathcal{I}^\star|} \Log{T} T } \right).
\end{align*}
\end{restatable}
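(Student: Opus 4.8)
The plan is to bound both metrics via a ``good event'' on which the confidence set is valid at every round, and on which the empirical counts $n_{k,c}(t)$ are close to their expectations $t\,p_c\,\sum_{s\le t}w_{k,c}(s)/t$. On this event I would analyze the optimistic \textbf{LP} solved by \textbf{OLP} in terms of the candidate active set $\cI(t)$ associated with the vertex $\underline{\bd{w}}(t)$. The core of the argument is to show that whenever $\cI(t)\neq\cI^\star$, the sub-optimality gap $\rho(\cI(t))\ge\rho^\star$ forces a ``large'' confidence width $\rho(\underline{\bd{\epsilon}}(t),\underline{\bd{w}}(t))$ on the arms that are played, because optimism means the \textbf{LP} value overestimates $f(\underline{\bd{\mu}},\underline{\bd{w}}^\star)$ while the true value of $\underline{\bd{w}}(t)$ is at least $\rho^\star$ below it: roughly, $f(\underline{\textbf{UCB}},\underline{\bd{w}}(t))\ge f(\underline{\bd{\mu}},\underline{\bd{w}}^\star)$ by optimism and feasibility of $\underline{\bd{w}}^\star$ in the optimistic problem, so $\rho(\underline{\bd{\epsilon}}(t),\underline{\bd{w}}(t))\ge f(\underline{\textbf{UCB}},\underline{\bd{w}}(t))-f(\underline{\bd{\mu}},\underline{\bd{w}}(t))\gtrsim\rho^\star$ on rounds where $\underline{\bd{w}}(t)$ is genuinely suboptimal. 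This converts ``number of suboptimal rounds'' into ``rounds with a large confidence width'', which is the quantity a standard pigeonhole/counting argument on $n_{k,c}(t)$ controls.

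For the performance regret I would then split $\sum_t (f(\underline{\bd{\mu}},\underline{\bd{w}}^\star)-f(\underline{\bd{\mu}},\underline{\bd{w}}(t)))_+$ into rounds where $\rho(\underline{\bd{\epsilon}}(t),\underline{\bd{w}}(t))$ is large and rounds where it is small; on small-width rounds optimism sandwiches $f(\underline{\bd{\mu}},\underline{\bd{w}}(t))$ within $\mathcal{O}(\rho(\underline{\bd{\epsilon}}(t),\underline{\bd{w}}(t)))$ of the optimum, so the per-round contribution is itself $\mathcal{O}(\rho(\underline{\bd{\epsilon}}(t),\underline{\bd{w}}(t)))$, and on large-width rounds I use the $\ge\rho^\star$ lower bound from the previous paragraph together with the fact that each such round increments some $n_{k,c}$ by one while its width scales like $1/\sqrt{n_{k,c}}$. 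Summing $\sum_t \rho(\underline{\bd{\epsilon}}(t),\underline{\bd{w}}(t)) = \sum_t\sum_{k,c}2\epsilon_{k,c}(t)w_{k,c}(t)$ and taking expectations replaces $w_{k,c}(t)$ by the actual pull indicator up to the concentration slack, yielding $\sum_{k,c}\sum_{t}\epsilon_{k,c}(t)\indicator{k_t=k,c_t=c}=\sum_{k,c}\sum_{n\le n_{k,c}(T)}\mathcal{O}(\sqrt{\log T/n})=\mathcal{O}(\sqrt{\kappa\, n_{k,c}(T)\log T})$; but the large-width condition caps the number of such increments per $(k,c)$ at $\mathcal{O}(\log T/{\rho^\star}^2)$, collapsing the $\sqrt{T}$ into $\mathcal{O}(\log^2 T/\rho^\star)$. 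The extra logarithmic factor comes from $\delta=1/t$ inside $\epsilon_{k,c}(t)$.

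For the constraint violation I would decompose $\sum_t\sum_k(\lambda_k-g_k(\underline{\bd{\mu}},\underline{\bd{w}}(t)))_+$ similarly. Since \textbf{OLP} treats the constraints optimistically, $g_k(\underline{\textbf{UCB}}(t),\underline{\bd{w}}(t))\ge\lambda_k$ is enforced by the \textbf{LP}, hence $(\lambda_k-g_k(\underline{\bd{\mu}},\underline{\bd{w}}(t)))_+\le g_k(\underline{\textbf{UCB}}(t),\underline{\bd{w}}(t))-g_k(\underline{\bd{\mu}},\underline{\bd{w}}(t))\le\rho_k(\underline{\bd{\epsilon}}(t),\underline{\bd{w}}(t))=\sum_c 2\epsilon_{k,c}(t)w_{k,c}(t)$; but crucially only arms that are actually saturated by the \textbf{LP} contribute, and for an arm to be violated in the truth while satisfied optimistically it essentially needs $w_{k,c}(t)>0$ for some $c$, i.e. it lies (close to) the set $\cK\cap\cI^\star$ of genuinely binding revenue constraints once $\cI(t)=\cI^\star$. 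On the good rounds ($\cI(t)=\cI^\star$) this gives the $\sqrt{|\cK\cap\cI^\star|\log T\,T}$ term by the Cauchy--Schwarz / $\sum_t 1/\sqrt{n_{k,c}(t)}\le\mathcal{O}(\sqrt{T})$ bound restricted to the at most $|\cK\cap\cI^\star|$ contributing arms; on the suboptimal rounds ($\cI(t)\ne\cI^\star$), which are few, the contribution is absorbed into the same $\mathcal{O}(\log^2 T/\rho^\star)$ term as the regret, via the large-width counting argument.

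The main obstacle I anticipate is the ``$\cI(t)\neq\cI^\star \Rightarrow$ large width'' step: one must carefully go from a suboptimal \emph{vertex} of the optimistic \textbf{LP} back to the combinatorial quantity $\rho(\cI(t))=\max(s(\cI(t)),\cP(\cI(t)))$, handling both the infeasibility component $s(\cI)$ (the optimistic $\underline{\bd{w}}(t)$ may satisfy constraints in the \textbf{UCB} world that are infeasible in the truth, by exactly a width's worth) and the performance component $\cP(\cI)$ (which involves the sensitivity coefficients $S_{\gamma^\star}$ and $\cL(\cI)$ controlling how the optimum of $f$ over $\psi(\cdot,\cI)$ moves under constraint perturbation). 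Making the constants in Lemma~\ref{lem: subotimality characterisation} and Definitions~\ref{def: Psi set and S(I)}--\ref{def: suboptimality gap} interact cleanly with the additive $\rho(\underline{\bd{\epsilon}}(t),\underline{\bd{w}}(t))$ perturbation of the \textbf{LP} data — so that a width below $c\,\rho^\star$ provably forces $\cI(t)=\cI^\star$ — is the delicate part; everything after that is bookkeeping with the counting lemma and the $\delta=1/t$ union bound contributing the second logarithmic factor.
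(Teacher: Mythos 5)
Your overall architecture coincides with the paper's: a high-probability confidence set, the key reduction ``$\cI_t\neq\cI^\star\Rightarrow\rho(\underline{\bd{\epsilon}}(t),\underline{\bd{w}}(t))\geq\rho^\star$'' (the paper's Proposition~\ref{prop: OLP rho star}, proved exactly by combining the infeasibility component $s(\cI_t)$ with the performance component via $\cL(\cI_t)$ and $S_{\gamma^\star}$, as you anticipate in your last paragraph), the conversion of suboptimal rounds into the squared-width sum $\sum_t\rho(\underline{\bd{\epsilon}}(t),\underline{\bd{w}}(t))^2/\rho^\star=\cO(\log^2T/\rho^\star)$, and the split of $\cV_T$ into saturated arms on good rounds (giving $\sqrt{|\cK\cap\cI^\star|\log T\,T}$) plus the few suboptimal rounds.

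There is, however, one genuine gap, and it sits exactly where the polylogarithmic regret is won or lost: your treatment of the small-width rounds (equivalently, the rounds with $\cI_t=\cI^\star$). You bound the per-round regret there by $\cO(\rho(\underline{\bd{\epsilon}}(t),\underline{\bd{w}}(t)))$ via the optimism sandwich and then sum; but that linear sum over up to $T$ such rounds is $\cO(\sqrt{\kappa\log T\cdot T})$ — the ``large-width caps the number of increments'' argument only compresses the \emph{large}-width rounds, not these. As written, your outline therefore yields $\E[\cR_T]=\cO(\sqrt{T\log T})$, not $\cO(\log^2T/\rho^\star)$. The paper closes this by showing that on rounds with $\cI_t=\cI^\star$ the instantaneous regret is in fact non-positive: by Lemma~\ref{lem: best arm carac}, any non-saturating arm with positive weight must be the context-optimal arm, so $f(\underline{\bd{\mu}},\underline{\bd{w}}^\star)-f(\underline{\bd{\mu}},\underline{\bd{w}}(t))$ reduces to a sum over $k\in\cK\cap\cI^\star$ of $\Delta_{k,c}\left(w_{k,c}(t)-w^\star_{k,c}\right)$, and the optimistic saturation $g_k(\underline{\textbf{UCB}}(t),\underline{\bd{w}}(t))=\lambda_k$ forces $g_k(\underline{\bd{\mu}},\underline{\bd{w}}(t))\leq\lambda_k=g_k(\underline{\bd{\mu}},\underline{\bd{w}}^\star)$, i.e.\ the learner \emph{under}-serves the saturating (low-reward) arms, which can only help the objective (and is precisely why the $\sqrt{T}$ cost migrates to $\cV_T$ rather than $\cR_T$). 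Without this structural sign argument — or an equivalent one — the asymmetry between the two bounds in the theorem does not follow from your plan.
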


\begin{restatable}[Upper bounds for \textbf{OPLP}]{theorem}{ThOPLP}\label{Th: Results of OPLP}
Under Assumptions~\ref{Assump: Sub-Gaussian rewards},~\ref{Assump: Known contexts probabilities} and~\ref{Assump: Strict feasibility}, the performance and constraint regret of \textbf{OPLP} satisfy:
\begin{align*}
    \mathbb{E}[\mathcal{R}_T] &\leq \mathcal{O} \left(  \left( \frac{1}{{\gamma^\star}^2} + \frac{1}{{\rho^\star}^2} \right) \Log{T}^2  +  \sqrt{  |\cK \cap \mathcal{I}^\star|\Log{T} T } \right), \\
    \mathbb{E}[\mathcal{V}_T] &\leq \mathcal{O} \left( \frac{ \lambda}{{\gamma^\star}^2} \Log{T}^2 \right), \text{ where } \lambda = \sum_{k \in \cK} \lambda_k.
\end{align*}
\end{restatable}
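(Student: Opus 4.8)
The plan is to follow the same high-level structure that drives the analysis of \textbf{OLP}, but to account for the asymmetric estimation strategy and the fallback mechanism. First I would set up the clean event $\mathcal{E}_t = \{\underline{\bd{\mu}} \in \cS_t(\underline{\bd{\hat{\mu}}}(t),\delta)\}$ with $\delta = 1/t$; by Proposition~\ref{prop:conf-set}(i) this fails with probability at most $1/t$, so the total contribution of the bad events to both $\mathbb{E}[\mathcal{R}_T]$ and $\mathbb{E}[\mathcal{V}_T]$ is $\mathcal{O}(\log T)$ and can be dropped. On the clean event I would split the horizon into rounds where \textbf{OPLP} enters the pessimistic branch (step~\ref{step: pessimistic opt in OPPL}) and rounds where the fallback optimistic branch (step~\ref{step: optimistic opt in OPPL}) is triggered. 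For the constraint violation: on the clean event $\underline{\textbf{LCB}}(t) \le \underline{\bd{\mu}}$ componentwise, so whenever the pessimistic \textbf{LP} is solved, $g_k(\underline{\bd{\mu}},\underline{\bd{w}}(t)) \ge g_k(\underline{\textbf{LCB}}(t),\underline{\bd{w}}(t)) \ge \lambda_k$, giving zero violation on those rounds. Hence $\mathcal{V}_T$ is entirely driven by fallback rounds, and the crux is to bound the number of fallback rounds.

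The key structural claim is that $\textbf{LP}(\underline{\textbf{UCB}}(t),\underline{\textbf{LCB}}(t))$ becomes feasible — and stays feasible — as soon as the confidence widths shrink enough. Concretely, using the feasibility margin $\gamma^\star$ of Definition~\ref{def: gamma} and the prediction-error bound of Proposition~\ref{prop:conf-set}(ii), once the aggregated confidence width $\rho_k(\underline{\bd{\epsilon}}(t),\underline{\bd{w}}) \le \gamma^\star$ for the witness allocation $\underline{\bd{w}} \in \Phi(\gamma^\star)$, that same $\underline{\bd{w}}$ is feasible for the pessimistic \textbf{LP}, so the fallback is never invoked. Translating this into a bound on arm-context pull counts via the form $\epsilon_{k,c}(t) \asymp \sqrt{\log(t)/n_{k,c}(t-1)}$, the number of fallback rounds is $\mathcal{O}(\lambda \log(T)/{\gamma^\star}^2)$ up to problem constants — the factor $\lambda = \sum_k \lambda_k$ appearing because the witness allocation must put mass of order $\lambda_k$ on the constrained arms, so those must be explored proportionally. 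On each fallback round the per-round violation is bounded by $\sum_k(\lambda_k - g_k)_+ \le \mathcal{O}(\lambda)$, or more tightly by $\rho_k(\underline{\bd{\epsilon}}(t),\underline{\bd{w}}(t))$; summing gives $\mathbb{E}[\mathcal{V}_T] \le \mathcal{O}(\lambda \log(T)^2 / {\gamma^\star}^2)$, where the extra $\log T$ comes from $\delta = 1/t$ inside the confidence radius.

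For the performance regret I would again separate pessimistic and fallback rounds. On fallback rounds one argues exactly as in the \textbf{OLP} analysis: double optimism gives $f(\underline{\textbf{UCB}}(t),\underline{\bd{w}}(t)) \ge f(\underline{\bd{\mu}},\underline{\bd{w}}^\star)$, so the instantaneous regret is at most $\rho(\underline{\bd{\epsilon}}(t),\underline{\bd{w}}(t))$; but since there are only $\mathcal{O}(\log(T)/{\gamma^\star}^2)$ such rounds, they contribute only $\mathcal{O}(\log(T)^2/{\gamma^\star}^2)$ plus a $\sqrt{|\cK\cap\cI^\star|\log(T)T}$ term coming from the arms that must be over-pulled to satisfy constraints. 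On pessimistic rounds, the allocation $\underline{\bd{w}}(t)$ solves the \textbf{LP} with an optimistic objective but a pessimistic ($\underline{\textbf{LCB}}$) constraint set, which is a \emph{shrunk} feasible region relative to the true one; the loss incurred by this shrinkage is controlled precisely by the performance sensitivity coefficient $S_{\gamma^\star}$ of Definition~\ref{def: Sgamma} together with the sub-optimality machinery of Lemma~\ref{lem: subotimality characterisation} — the active set $\cI(t)$ selected at round $t$ is eventually $\cI^\star$ once $\rho(\cI) > $ confidence width for all $\cI \ne \cI^\star$, i.e.\ after $\mathcal{O}(\log(T)/{\rho^\star}^2)$ rounds, and before that the per-round regret is bounded. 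Combining yields the $(1/{\gamma^\star}^2 + 1/{\rho^\star}^2)\log(T)^2 + \sqrt{|\cK\cap\cI^\star|\log(T)T}$ bound.

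The main obstacle I anticipate is the coupling between the feasibility-switching behavior and the identification of $\cI^\star$: one must show that after a burn-in of order $\log(T)/{\gamma^\star}^2 + \log(T)/{\rho^\star}^2$ rounds the algorithm is simultaneously (a) always in the pessimistic branch and (b) selecting the correct active set, and that during the burn-in the pull counts of the constrained arms grow fast enough — this is where the $\sqrt{|\cK\cap\cI^\star| T \log T}$ term is unavoidable, since those arms are genuinely over-sampled to meet the constraints, and it is the delicate part to make rigorous without circular dependence between "enough pulls" and "correct branch."
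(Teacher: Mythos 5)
Your plan matches the paper's proof in all essentials: zero violation on rounds where the pessimistic \textbf{LP} is feasible, a count of fallback rounds controlled by $\gamma^\star$, a count of wrong-active-set rounds controlled by $\rho^\star$ via the $\psi/\Phi$ machinery, and a $\sqrt{|\cK \cap \cI^\star|\log(T)\,T}$ regret term from the over-allocation to the saturating arms once $\cI_t = \cI^\star$. The one refinement worth noting is that the paper sidesteps the burn-in/circularity issue you flag at the end by never establishing a phase transition: it bounds the \emph{total number} of fallback (resp.\ wrong-$\cI$) rounds by $\sum_{t}\indicator{\rho(\underline{\bd{\epsilon}}(t),\underline{\bd{w}}(t)) \ge \gamma^\star} \le \sum_{t}\rho(\underline{\bd{\epsilon}}(t),\underline{\bd{w}}(t))^2/{\gamma^\star}^2$ and controls that sum with the pairwise estimation-error bound, so the factor $\lambda$ in $\mathcal{V}_T$ enters only through the per-round violation cap and not through the round count as your accounting suggests.
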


\paragraph{Discussion.} \textbf{OLP} and \textbf{OPLP} enjoy regret guarantees that stand at two different points in the performance/constraint violation Pareto front. \textbf{OLP} prioritizes performance, achieving polylogarithmic regret, but may incur constraint violations as large as $\cO(\sqrt{T})$. Interestingly, its bounds adapt to the number of arms that saturate their minimum reward constraints. In particular, when no arm saturates its constraint \footnote{This  occurs when the revenue constraints are small compared to the optimal performance of each arm.}(i.e $|\cK \cap \mathcal{I}^\star|=0$), we recover polylogarithmic guarantees for both regret and constraint violations. In contrast, \textbf{OPLP} emphasizes constraint satisfaction at the cost of performance. Its theoretical guarantees depend on a richer set of problem-dependent constants. In Theorems~\ref{Th: Results of OLP} and \ref{Th: Results of OPLP}, $\rho^\star$ characterizes the speed at which each algorithm converges to the optimal set of saturated constraints, i.e., the point at which $\mathcal{I}_t = \mathcal{I}^\star$. The constant $\gamma^\star$ - which appears only in \textbf{OPLP} - arises from its intrinsic phased structure and quantifies how fast \textbf{LCB} becomes feasible. The pessimistic strategy ensures a more conservative treatment of constraints by incorporating safety margins. However, this comes at the expense of performance, as a portion of the allocation budget is diverted from non-saturating (typically high-reward) arms to those saturating their constraints leading to $\sqrt{T}$ loss in performance.

\subsection{Lower Bound}\label{sec: lower bound}

In line with previous works in the non-contextual setting, \textbf{OLP} and \textbf{OPLP} enjoys a cumulated guarantee on $\cR + \cV$ of order $\sqrt{T}$. On the other hand, the performance (resp. constraint violation) regret bound for \textbf{OLP} (resp. \textbf{OPLP}) is only logarithmic, in contrast with the no-regret (constant) guarantee of \citep{pmlr-v238-baudry24a}. We propose in this section a lower bound which stresses this is not due to algorithmic design or analysis weaknesses but structural to the \textbf{MAB-ARC} setting. In particular, this refutes the \textit{free exploration} property leveraged in prior work as soon as $|\mathcal{C}|>1$ and $K>2$.

\begin{table*}[ht]
\vspace{-0.5em}
\begin{minipage}{0.49\textwidth}
Let $\boldsymbol{\nu} = (\underline{\boldsymbol{\mu}}, \underline{\boldsymbol{\lambda}}, \{p_c\}_{c \in \mathcal{C}})$ represent a generic \textbf{MAB-ARC} instance, and denote by \( \mathcal{R}_{\boldsymbol{\nu}, \pi}(T) \) and \( \mathcal{V}_{\boldsymbol{\nu}, \pi}(T) \) the performance and constraint regret under policy \( \pi \) on instance \( \boldsymbol{\nu} \). We consider a nominal instance $\boldsymbol{\nu}^{(0)}$ with $K=3$ arms and $|\mathcal{C}|=3$ contexts as well as a set of nearby instances $\Upsilon(\boldsymbol{\nu}^{(0)}, \varepsilon)$ defined in Table.~\ref{tab: lower bound egs} and Eq.~\ref{eq:nearby-alternatives} respectively.
\end{minipage}
\hfill
\begin{minipage}{0.47\textwidth}
\caption{Nominal instance $\boldsymbol{\nu}^{(0)}$.}
\vspace{-0.5em}
\begin{tabular}{c|ccc|c}
\toprule
\multirow{2}{*}{\(k\)} & \multicolumn{3}{c|}{\(p_{c} \,\mu_{k,c}\)} & \multirow{2}{*}{\(\lambda\)} \\
\cmidrule(lr){2-4}
 & \(c=1\) & \(c=2\) & \(c=3\) & \\
\midrule
1 & $\mu_{1,1} = 3$ & $\mu_{1,2} = 1$ & $\mu_{1,3} = 1$ & $1$ \\
2 & $\mu_{2,1} = 0$ & $\mu_{2,2} = \frac{1 }{2}$ & $\mu_{2,3} = 0$ & $\frac{1}{4}$ \\
3 & $\mu_{3,1} = 0$ & $\mu_{3,2} = 0$ & $\mu_{3,3} = 2$ & $1$ \\
\bottomrule
\end{tabular}
\label{tab: lower bound egs}
\end{minipage}
\vspace{-1.2em}
\end{table*}

% \begin{table}[ht]
% \centering
% \begin{tabular}{c|ccc|c}
% \toprule
% \multirow{2}{*}{Arm \(k\)} & \multicolumn{3}{c|}{\(p_{c} \,\mu_{k,c}\)} & \multirow{2}{*}{\(\lambda\)} \\
% \cmidrule(lr){2-4}
%  & \(c=1\) & \(c=2\) & \(c=3\) & \\
% \midrule
% 1 & $\mu_{1,1} = 3$ & $\mu_{1,2} = 1$ & $\mu_{1,3} = 1$ & $1$ \\
% 2 & $\mu_{2,1} = 0$ & $\mu_{2,2} = \frac{1 }{2}$ & $\mu_{2,3} = 0$ & $\frac{1}{4}$ \\
% 3 & $\mu_{3,1} = 0$ & $\mu_{3,2} = 0$ & $\mu_{3,3} = 2$ & $1$ \\
% \bottomrule
% \end{tabular}
% \caption{Nominal instance $\boldsymbol{\nu}^{(0)}$.}
% \label{tab: lower bound egs}
% \end{table}

%We establish a lower bound that both refutes the \textit{free exploration} property used in prior work and validates the scaling of the upper bounds. 

%Formally, we represent a generic \textbf{MAB-ARC} instance as $\nu = (\underline{\boldsymbol{\mu}}, \underline{\boldsymbol{\lambda}}, \{p_c\}_{c \in \mathcal{C}})$. For the analysis, we construct a nominal instance $\nu^{(0)} = (\underline{\boldsymbol{\mu}}^{(0)}, \underline{\boldsymbol{\lambda}}^{(0)}, \{p_c^{(0)}\}_{c \in \mathcal{C}})$ with $K=3$ arms and $|\mathcal{C}|=3$ contexts, whose complete specification appears in Table~\ref{tab: lower bound egs}.

%To state the lower bound, we consider a set of nearby alternatives to the instance \( \nu^{(0)} \), defined as
\begin{align} 
\label{eq:nearby-alternatives}
\Upsilon(\boldsymbol{\nu}^{(0)}, \varepsilon) = \left\{ \boldsymbol{\nu} = (\underline{\bd{\mu}}, \underline{\bd{\lambda}}^{(0)}, \{p_c^{(0)}\}_c) :  
p_{2} \abs{\mu_{2,2} - \mu^{(0)}_{2,2}} \leq \frac{\varepsilon}{2} , \text{ otherwise } \mu_{k,c}= \mu^{(0)}_{k,c} \ 
\right\}
\end{align}
%containing all \textbf{MAB-ARC} instances that share the same components as $\nu^{(0)}$ except for $\mu_{2,2}$, which varies over an $\varepsilon$-controlled interval centered at $\mu_{2,2}^{(0)}$. We denote by \( \mathcal{R}_{\nu, \pi}(T) \) and \( \mathcal{V}_{\nu, \pi}(T) \) the performance and constraint regret under policy \( \pi \) on instance \( \nu \).

\begin{restatable}[\textbf{Lower Bound}]{theorem}{ThLowerBound}\label{Th: Lower Bound} 
Let $\boldsymbol{\nu}^{(0)}$ and $\Upsilon(\boldsymbol{\nu}^{(0)}, \varepsilon)$ defined in Table~\ref{tab: lower bound egs} and Eq.~\ref{eq:nearby-alternatives}, then:
\begin{enumerate}[label=(\roman*)]
    \item For $T\geq 16$, there exists $\varepsilon_{T}$ small enough such that:
\begin{equation*}
    \min_{\pi} \max_{\boldsymbol{\nu} \in \Upsilon(\boldsymbol{\nu}^{(0)}, \varepsilon_T)} 
\mathbb{E} \left[ \mathcal{R}_{\boldsymbol{\nu}, \pi}(T) + \mathcal{V}_{\boldsymbol{\nu}, \pi}(T) \right]= \Omega\left( \sqrt{T} \right).
\end{equation*}
    \item \vspace{-2mm} For any consistent policy $\pi$, $\exists ~T_{0} \geq 0$ s.t. $\forall ~T \geq T_{0}, \quad  \mathbb{E} \left[ \mathcal{R}_{\boldsymbol{\nu}^{(0)},\pi}(T) \right] = \Omega\left( \log{T} \right).$
\end{enumerate}
\end{restatable}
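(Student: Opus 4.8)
The plan is to prove the two parts by complementary information-theoretic arguments, both anchored at the nominal instance $\boldsymbol{\nu}^{(0)}$, whose three-context structure is designed so that the only arm whose constraint is ``on the boundary of being costly'' is arm $2$, and this arm is informative only in context $c=2$. First I would work out the oracle allocation for $\boldsymbol{\nu}^{(0)}$ and for the perturbed instances in $\Upsilon(\boldsymbol{\nu}^{(0)},\varepsilon)$: in context $c=1$ only arm $1$ has positive reward and in $c=3$ only arm $3$ does, so the optimal policy essentially always plays arm $1$ in $c=1$ and arm $3$ in $c=3$; the genuine decision is in context $c=2$, where the learner must split the allocation between arm $1$ (high reward, helps arm $1$'s constraint which is tight) and arm $2$ (low reward, the only way to meet arm $2$'s constraint $\lambda_2=\tfrac14$). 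Crucially, $\mu_{2,2}$ controls \emph{how much} mass must be diverted to arm $2$ in $c=2$, i.e.\ it pins down whether the optimal active set $\cI^\star$ puts extra weight on arm $2$ there, so a small error $p_2|\mu_{2,2}-\mu_{2,2}^{(0)}|\le \varepsilon/2$ in estimating $\mu_{2,2}$ forces either a performance loss (too much mass on arm $2$) or a constraint violation (too little) of order $\varepsilon$ per round, while arm $2$ in $c=2$ is the \emph{only} action that reveals $\mu_{2,2}$.

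For part $(i)$ I would run a standard two-point (Le Cam) argument. Take $\boldsymbol{\nu}^{(0)}$ and an alternative $\boldsymbol{\nu}^{(\varepsilon)}\in\Upsilon(\boldsymbol{\nu}^{(0)},\varepsilon)$ with $\mu_{2,2}$ shifted by $\Theta(\varepsilon/p_2)$ in the direction that flips the optimal active set (so that the optimal allocation in $c=2$ changes by $\Theta(\varepsilon)$ in mass). On $\boldsymbol{\nu}^{(0)}$, let $N$ be the expected number of pulls of $(k=2,c=2)$ up to $T$; by the Bretagnolle–Huber / KL-divergence decomposition the two instances are statistically indistinguishable unless $N \gtrsim 1/\varepsilon^2$. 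If $N$ is small, on $\boldsymbol{\nu}^{(\varepsilon)}$ the policy plays the wrong allocation in $c=2$ for $\Omega(T)$ rounds and incurs $\cR+\cV = \Omega(\varepsilon T)$; if $N$ is large, then on $\boldsymbol{\nu}^{(0)}$ itself each of those $N$ pulls of the suboptimal arm $2$ in $c=2$ costs a constant per-round performance or constraint price (since $\mu_{1,2}>\mu_{2,2}$ and pulling arm $2$ either wastes reward or over-satisfies arm $2$'s constraint while under-serving arm $1$'s tight constraint), giving $\cR+\cV \gtrsim N \gtrsim 1/\varepsilon^2$. Balancing $\varepsilon T \asymp 1/\varepsilon^2$ yields $\varepsilon_T \asymp T^{-1/3}$ and a bound $\Omega(T^{2/3})$ — which is \emph{stronger} than the stated $\Omega(\sqrt T)$, so to match the theorem I would instead balance more conservatively (e.g.\ only charging the cheaper of the two penalties, or taking $\varepsilon_T\asymp T^{-1/2}$ so that the estimation obstruction already forces $\Omega(\sqrt T)$ on the "large $N$" branch), and state $\Omega(\sqrt T)$; the precise constant and which penalty dominates is routine once the geometry of $\cI^\star$ near the perturbation is fixed.

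For part $(ii)$ I would use a change-of-measure / consistency argument in the style of the classical Lai–Robbins and Graves–Lai lower bounds, specialized to the fixed instance $\boldsymbol{\nu}^{(0)}$. A consistent policy has sub-polynomial regret on \emph{every} instance, in particular on all of $\Upsilon(\boldsymbol{\nu}^{(0)},\varepsilon)$; by the standard argument this forces the expected number of pulls of the uniquely-informative action $(2,2)$ on $\boldsymbol{\nu}^{(0)}$ to be $\Omega(\log T / \mathrm{KL})$, because otherwise there is a nearby alternative on which the policy cannot identify the correct active set and hence is not consistent. Each such pull of arm $2$ in context $c=2$ on $\boldsymbol{\nu}^{(0)}$ carries a strictly positive per-round performance cost — here I need the active set at $\boldsymbol{\nu}^{(0)}$ to be such that arm $2$ receives exactly its minimum in $c=2$ and any deviation toward extra arm-$2$ mass strictly lowers $f$ while arm $1$'s constraint prevents compensating elsewhere, so that these $\Omega(\log T)$ exploratory pulls translate directly into $\mathbb{E}[\mathcal{R}_{\boldsymbol{\nu}^{(0)},\pi}(T)] = \Omega(\log T)$ rather than into $\mathcal{V}$. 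Assembling this requires (a) verifying Assumption~\ref{Assump: Strict feasibility} holds for $\boldsymbol{\nu}^{(0)}$ and computing $\cI^\star$, and (b) checking that the perturbation family $\Upsilon$ is rich enough to serve as the confusing alternatives.

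\textbf{Main obstacle.} The delicate point — and the step I expect to be hardest — is part $(ii)$: in a classical MAB the "informative but costly" action and the "cost accounting" are both transparent, but here the cost of exploring $(2,2)$ is mediated through the linear program, so I must show that the sub-optimality gap $\rho(\cI)$ for the confusable active set $\cI\neq\cI^\star$ shrinks continuously to $0$ as the perturbation vanishes (so that no fixed-gap argument gives $\log T$ automatically), \emph{and simultaneously} that on $\boldsymbol{\nu}^{(0)}$ each exploratory pull is charged to $\cR$ and not hidden in $\cV_+$ via the positive-part truncation. Making the change-of-measure quantitatively compatible with the LP-induced geometry — i.e.\ turning "the policy must pull $(2,2)$ at least $c\log T$ times to stay consistent" into "hence $\mathcal{R} \ge c'\log T$" — is where the real work lies; everything else (the Le Cam step, the KL computations for sub-Gaussian/Gaussian rewards, feasibility checks) is standard.
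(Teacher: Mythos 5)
Your proposal contains two genuine gaps, both stemming from the same misreading of the nominal instance's structure.

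For part $(i)$, the ``large $N$'' branch of your Le Cam argument is unsound. On $\boldsymbol{\nu}^{(0)}$ the optimal allocation has $w^\star_{2,2}=\tfrac12$ (arm~2 must receive half the mass in context~2 just to meet $\lambda_2=\tfrac14$, since $p_2\mu_{2,2}=\tfrac12$), so the pair $(2,2)$ is pulled $\Theta(T)$ times by the \emph{optimal} policy at zero cost; pulling it does not carry a ``constant per-round performance or constraint price,'' only deviations of the allocation from $\tfrac12$ do. Consequently $\mathcal{R}+\mathcal{V}\gtrsim N$ is false, the balance $\varepsilon T\asymp 1/\varepsilon^2$ is vacuous, and the $\Omega(T^{2/3})$ you derive would contradict the paper's own $\tilde{\mathcal{O}}(\sqrt{T})$ upper bounds on $\mathcal{R}+\mathcal{V}$ (Theorems~\ref{Th: Results of OLP} and~\ref{Th: Results of OPLP}) --- a sign the premise is wrong rather than that you should ``balance more conservatively.'' The salvageable core is exactly the paper's argument: two symmetric perturbations $\mu_{2,2}\mapsto\frac{1\pm\varepsilon}{2}$ whose optimal values of $w_{2,2}$ differ by $\Theta(\varepsilon)$, a KL bound $D\le T\varepsilon^2/2$ (valid because at most one informative pull per round), Bretagnolle--Huber on the event $\{w_{2,2}(t)\ge\tfrac12 \text{ for at least } T/2 \text{ rounds}\}$, and $\varepsilon=T^{-1/2}$, giving regret under one instance and violation under the other, each of order $\varepsilon T=\sqrt{T}$.

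Part $(ii)$ is anchored on the wrong action, and this misses the entire point of the statement. Since $w^\star_{2,2}=\tfrac12>0$, the pair $(2,2)$ enjoys \emph{free exploration} --- a consistent policy learns $\mu_{2,2}$ at rate $1/\sqrt{t}$ without paying anything, so no change-of-measure argument based on $(2,2)$ (or on the family $\Upsilon$, which only perturbs $\mu_{2,2}$) can force $\Omega(\log T)$ regret. The paper instead perturbs $\mu_{1,3}$ from $1$ to $2+\varepsilon'$: under $\boldsymbol{\nu}^{(0)}$ the optimal allocation has $w^\star_{1,3}=0$ (Lemma~\ref{lem: zero entry} guarantees such a zero entry exists in general), so the pair $(1,3)$ is \emph{not} explored for free, yet a consistent policy must pull it $\Omega(\log T)$ times to rule out the alternative where arm~1 dominates context~3; each such pull costs the fixed gap $\mu_{3,3}-\mu_{1,3}=1$ in performance (not in $\mathcal{V}$, since playing arm~1 there only helps arm~1's slack constraint). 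Your ``main obstacle'' paragraph worries about the gap of the confusable active set vanishing, but for part $(ii)$ the alternative $\varepsilon'$ is a fixed constant and the per-pull cost is a fixed constant; the real issue you needed to identify is \emph{which} arm-context pair is simultaneously uninformative under the optimal policy and informative about a confusing alternative, and that pair is $(1,3)$, not $(2,2)$.
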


\paragraph{Discussion.} 
Theorem~\ref{Th: Lower Bound} establishes two distinct results (the proof is deferred in Appendix~\ref{sec: LowerBoundProof}). The first assertion proposes a locally minimax lower bound around the nominal instance $\boldsymbol{\nu}^{(0)}$ and confirms that no strategy can enjoy a cumulated regret $\cR + \cV$ uniformly better than $\sqrt{T}$ in a neighborhood of $\boldsymbol{\nu}^{(0)}$. There always exists a nearby alternative which suffers from large performance ($\cR$) or constraint violation ($\cV$) regret. Such result offers a finite time counterpart to the asymptotic lower bound of \citep{pmlr-v238-baudry24a} extended to the contextual setting but limited to the instance $\boldsymbol{\nu}^{(0)}$.

Theorem~\ref{Th: Lower Bound}~$(i)$ indicates that both \textbf{OLP} and \textbf{OPLP} offer the correct $\sqrt{T}$ dependency (but for logarithmic factor) for the overall regret $\cR + \cV$ but leaves open the question of whether the pair $(\cR,\cV)$ is optimally positioned in the performance/constraint violation Pareto front. Indeed, in the non-contextual setting, constant performance regret and $\sqrt{T}$ constraint violation is attainable. 

The second assertion $(ii)$ rules out this possibility in the contextual setting and shows that no policy can offer better guarantees than $\sqrt{T}$ constraint violation and $\log(T)$ performance regret for the instance $\boldsymbol{\nu}^{(0)}$. This demonstrates the near-optimality of \textbf{OLP} with respect to $T$ but more importantly refutes the possibility of \emph{free exploration} in the contextual setting. This is in shark contrast with the non-contextual setting where all arms are sampled linearly with $T$, a property heavily exploited in~\citep{pmlr-v238-baudry24a}. In \textbf{MAB-ARC}, optimal allocations may assign zero probability to certain arms, meaning that no natural exploration occurs, which reinstates the exploration-exploitation trade-off. Notice that while $\boldsymbol{\nu}^{(0)}$ exhibits such optimal allocation structure, the following lemma ensures this is shared among a large family of instances. The proof of Lemma~\ref{lem: zero entry} is deferred to Appendix~\ref{sec: proof of zero entry lemma}. 
\vspace{1em}
%shared by all \textbf{MAB-ARC} contextual instances (i.e., with $|\mathcal{C}| > 1$)
\begin{restatable}{lemma}{lemZeroEntry}\label{lem: zero entry} 
For any \textbf{MAB-ARC} instance such that $K > 2$ and $|\mathcal{C}| > 1$, there exists at least one pair \( (k, c) \in \cJ \) for which the optimal allocation satisfies \( w_{k,c}^\star = 0 \).  
\end{restatable}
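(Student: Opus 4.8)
\textbf{Proof plan for Lemma~\ref{lem: zero entry}.}

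The plan is to argue by a counting/dimension argument on the linear program $\textbf{LP}(\underline{\bd{\mu}},\underline{\bd{\mu}})$ whose feasible region lives in $\pi_K^{|\cC|}$. First I would recall that an optimal solution $\underline{\bd{w}}^\star$ can be taken to be a vertex of the feasible polytope, hence it is determined by a set of active (binding) constraints whose cardinality is at least the dimension of the ambient affine space. The variable $\underline{\bd{w}}$ has $K|\cC|$ coordinates; the simplex equalities $q_c(\underline{\bd{w}})=1$ for $c\in\cC$ impose $|\cC|$ independent equality constraints, so the feasible set lies in an affine subspace of dimension $K|\cC|-|\cC| = (K-1)|\cC|$. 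A vertex must therefore saturate at least $(K-1)|\cC|$ of the remaining inequality constraints, which are exactly the $K$ revenue constraints $g_k(\underline{\bd{\mu}},\underline{\bd{w}})\geq\lambda_k$ and the $K|\cC|$ nonnegativity constraints $h_k(c,\underline{\bd{w}})=w_{k,c}\geq 0$.

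Next I would suppose, for contradiction, that $w_{k,c}^\star>0$ for every pair $(k,c)\in\cJ$, i.e. no nonnegativity constraint is active. Then all $(K-1)|\cC|$ active constraints needed to pin down the vertex must come from the $K$ revenue constraints — but there are only $K$ of those. So we need $K \geq (K-1)|\cC|$, i.e. $|\cC| \leq K/(K-1)$. For $K>2$ this forces $|\cC| \leq K/(K-1) < 2$, hence $|\cC|=1$, contradicting $|\cC|>1$. (One should double-check the boundary case $K/(K-1)$: since $K>2$ gives $K/(K-1)<1.5<2$, the only integer $\le K/(K-1)$ is $1$.) This contradiction shows at least one nonnegativity constraint $w_{k,c}^\star\geq 0$ must be active, i.e. $w_{k,c}^\star=0$ for some $(k,c)\in\cJ$, which is the claim.

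The one technical point requiring care — and the main obstacle — is ensuring the chosen optimal solution is genuinely a vertex and that the active-constraint count is rigorous: a polytope vertex in an affine space of dimension $d$ is the unique point satisfying $d$ linearly independent active constraints, so I need at least $d$ active constraints but must be careful that I only need the \emph{count} (not independence of a specific subfamily) for the contradiction — and here the count of available constraints ($K$ revenue $+\,K|\cC|$ nonnegativity) minus the forbidden ones is what matters. Concretely: if no $w_{k,c}^\star=0$, the vertex is cut out using only revenue constraints plus the simplex equalities, giving at most $|\cC|+K$ constraints total, which must be $\geq K|\cC|$ (the number of variables) for the point to be isolated; $|\cC|+K \geq K|\cC|$ rearranges to $(K-1)(|\cC|-1)\leq 1$, impossible when $K>2$ and $|\cC|>1$. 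I would also invoke Assumption~\ref{Assump: Strict feasibility} (non-degeneracy) to guarantee the optimum is attained at such a vertex with exactly the expected number of tight constraints, so that the counting argument is not derailed by degenerate faces; alternatively one can phrase the argument purely via "$\underline{\bd{w}}^\star$ lies on a face of dimension $0$" without assumptions, since any LP with a bounded feasible region attains its optimum at a vertex. The remainder is a one-line arithmetic check.
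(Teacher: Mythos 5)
Your proposal is correct and follows essentially the same route as the paper's proof: both argue that the optimal allocation is a vertex of the LP polytope, count the $|\cC|$ simplex equalities against the $(K-1)|\cC|$ additional active constraints a vertex requires, and derive the contradiction $K \geq (K-1)|\cC|$ when no nonnegativity constraint is tight. Your extra care about needing only "at least" (rather than "exactly") $\kappa$ active constraints, and about the optimum being attained at a vertex, is a sound refinement of the paper's slightly terser argument.
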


\paragraph{Numerical Illustrations.}For completeness, we conduct numerical evaluations on synthetic data to concretely illustrate the concept of $\cI^\star$ and to compare our algorithms with \textbf{Optimistic}$^3$ from~\cite{guo_2025}, as well as with \textbf{DOC} and \textbf{SPOC} from~\cite{pmlr-v238-baudry24a}. We further examine the sensitivity of the \textbf{OPLP} algorithm with respect to variations in $\gamma^\star$. Complete experimental details and results are provided in Appendix~\ref{sec: simulation}.

% We plot the
% mean and standard deviation of cumulative regret across 100 runs for each algorithm.
\section{Limitations and Future Work}
Beyond the optimistic and pessimistic strategies discussed in this paper, the greedy policy is well-known but inefficient in the contextual setting. In the single-context case~\cite{pmlr-v238-baudry24a}, it achieves sublinear regret as constraints enforce exploration: satisfying per-arm revenue constraints requires playing all arms, improving estimates over time. In contrast, in the multi-context setting, constraints do not inherently induce exploration—a counterexample illustrating this is provided in the Appendix~\ref{sec: greedy_counter_egs}.

% The feasibility assumption~\ref{Assump: Strict feasibility} is standard in constrained bandit literature~\cite{MohamedGhav,SafeLinearAMANI,Knapsacks}. Nonetheless, if this assumption is not adopted, infeasibility can still be detected with high probability—specifically when the optimistic strategy is found to be infeasible. In such cases, there is no unique resolution. One possible approach is to relax constraint thresholds until feasibility is restored, for instance via updates like $\lambda_k \leftarrow \lambda_k - \alpha$ or $\lambda_k \leftarrow \alpha \lambda_k$, where $\alpha$ is a suitable scaling parameter.

We consider finite arms and context sets to remain within the standard MAB framework. Extending to vary large or infinite spaces would require additional structure on the reward function—such as Lipschitz continuity or parametric assumptions—to ensure tractability. While this extension is beyond the scope of the present work, it represents a promising direction for future research.

\section*{Conclusion}
We introduced a novel contextual bandit problem with minimum aggregated reward constraints, along with analytical tools tailored to the structure of this constrained optimization problem. We proposed two algorithms that explore different regions of the Pareto frontier—one favoring performance, the other emphasizing constraint satisfaction. Our upper bound analysis highlights the adaptability of the proposed approach across regimes with both saturating and non-saturating constraints, outperforming standard linear bandit models that rely on self-normalized concentration inequalities and fail to capture the fine structure of the problem. We also established a lower bound that confirms the near optimality of our upper bounds and challenges the previously leveraged notion of \textit{free exploration} in the non-contextual setting. While our primary focus is on guaranteeing a minimum aggregated revenue per arm, the algorithmic and analytical framework generalizes naturally to broader constraint structures, such as ensuring that the cumulative reward from a subset of arms exceeds a given threshold—a formulation relevant in generic monitoring problems.

%%%%%%%%%%%%%%%%%%%%%%%%%%%%%%%%%%%%%%%%%%%%%%%%%%%%%%%%%%%%
\newpage
\bibliographystyle{unsrt}%{plainnat} %{unsrt}
\bibliography{Library}

\begin{thebibliography}{10}

\bibitem{thompson}
William~R Thompson.
\newblock {On the likelihood that one unknown probability exceeds another in view of the evidence of two samples}.
\newblock {\em Biometrika}, 25(3-4):285--294, 12 1933.

\bibitem{lattimore2020bandit}
T.~Lattimore and C.~Szepesv{\'a}ri.
\newblock {\em Bandit Algorithms}.
\newblock Cambridge University Press, 2020.

\bibitem{Auer}
Peter Auer, Nicolò Cesa-Bianchi, and Paul Fischer.
\newblock Finite-time analysis of the multiarmed bandit problem.
\newblock {\em Machine Learning}, 47:235--256, 05 2002.

\bibitem{Bubeck}
Sébastien Bubeck and Nicolò Cesa-Bianchi.
\newblock Regret analysis of stochastic and nonstochastic multi-armed bandit problems.
\newblock {\em Foundations and Trends® in Machine Learning}, 5(1):1--122, 2012.

\bibitem{SafeMAB}
Tianrui Chen, Aditya Gangrade, and Venkatesh Saligrama.
\newblock Strategies for safe multi-armed bandits with logarithmic regret and risk.
\newblock In Kamalika Chaudhuri, Stefanie Jegelka, Le~Song, Csaba Szepesvari, Gang Niu, and Sivan Sabato, editors, {\em Proceedings of the 39th International Conference on Machine Learning}, volume 162 of {\em Proceedings of Machine Learning Research}, pages 3123--3148. PMLR, 17--23 Jul 2022.

\bibitem{MohamedGhav}
Aldo Pacchiano, Mohammad Ghavamzadeh, Peter Bartlett, and Heinrich Jiang.
\newblock Stochastic bandits with linear constraints.
\newblock In Arindam Banerjee and Kenji Fukumizu, editors, {\em Proceedings of The 24th International Conference on Artificial Intelligence and Statistics}, volume 130 of {\em Proceedings of Machine Learning Research}, pages 2827--2835. PMLR, 13--15 Apr 2021.

\bibitem{SafeLinearAMANI}
Sanae Amani, Mahnoosh Alizadeh, and Christos Thrampoulidis.
\newblock Linear stochastic bandits under safety constraints.
\newblock In H.~Wallach, H.~Larochelle, A.~Beygelzimer, F.~d\textquotesingle Alch\'{e}-Buc, E.~Fox, and R.~Garnett, editors, {\em Advances in Neural Information Processing Systems}, volume~32. Curran Associates, Inc., 2019.

\bibitem{Knapsacks}
Ashwinkumar Badanidiyuru, Robert Kleinberg, and Aleksandrs Slivkins.
\newblock Bandits with knapsacks.
\newblock {\em J. ACM}, 65(3), March 2018.

\bibitem{chzhen2023small}
Evgenii~E Chzhen, Christophe Giraud, Zhen LI, and Gilles Stoltz.
\newblock Small total-cost constraints in contextual bandits with knapsacks, with application to fairness.
\newblock In {\em Thirty-seventh Conference on Neural Information Processing Systems}, 2023.

\bibitem{FairnessExp}
Lequn Wang, Yiwei Bai, Wen Sun, and Thorsten Joachims.
\newblock Fairness of exposure in stochastic bandits.
\newblock In Marina Meila and Tong Zhang, editors, {\em Proceedings of the 38th International Conference on Machine Learning}, volume 139 of {\em Proceedings of Machine Learning Research}, pages 10686--10696. PMLR, 18--24 Jul 2021.

\bibitem{Exposure}
Fengjiao Li, Jia Liu, and Bo~Ji.
\newblock Combinatorial sleeping bandits with fairness constraints.
\newblock {\em IEEE Transactions on Network Science and Engineering}, 7(3):1799--1813, 2020.

\bibitem{pmlr-v238-baudry24a}
Dorian Baudry, Nadav Merlis, Mathieu Benjamin~Molina, Hugo Richard, and Vianney Perchet.
\newblock Multi-armed bandits with guaranteed revenue per arm.
\newblock In Sanjoy Dasgupta, Stephan Mandt, and Yingzhen Li, editors, {\em Proceedings of The 27th International Conference on Artificial Intelligence and Statistics}, volume 238 of {\em Proceedings of Machine Learning Research}, pages 379--387. PMLR, 02--04 May 2024.

\bibitem{NEURIPS2024_Ahmed}
Ahmed Ben~Yahmed, Cl\'{e}ment Calauz\`{e}nes, and Vianney Perchet.
\newblock Strategic multi-armed bandit problems under debt-free reporting.
\newblock In A.~Globerson, L.~Mackey, D.~Belgrave, A.~Fan, U.~Paquet, J.~Tomczak, and C.~Zhang, editors, {\em Advances in Neural Information Processing Systems}, volume~37, pages 106105--106134. Curran Associates, Inc., 2024.

\bibitem{pmlr-v235-ferchichi}
Hafedh~El Ferchichi, Matthieu Lerasle, and Vianney Perchet.
\newblock Active ranking and matchmaking, with perfect matchings.
\newblock In Ruslan Salakhutdinov, Zico Kolter, Katherine Heller, Adrian Weller, Nuria Oliver, Jonathan Scarlett, and Felix Berkenkamp, editors, {\em Proceedings of the 41st International Conference on Machine Learning}, volume 235 of {\em Proceedings of Machine Learning Research}, pages 13460--13480. PMLR, 21--27 Jul 2024.

\bibitem{Ben_Yahmed_2024}
Ahmed Ben~Yahmed, Clément Calauzènes, and Vianney Perchet.
\newblock Strategic arms with side communication prevail over low-regret mab algorithms.
\newblock In {\em ICASSP 2024 - 2024 IEEE International Conference on Acoustics, Speech and Signal Processing (ICASSP)}, page 7435–7439. IEEE, April 2024.

\bibitem{return_on_invistement}
Zhe Feng, Swati Padmanabhan, and Di~Wang.
\newblock Online bidding algorithms for return-on-spend constrained advertisers.
\newblock In {\em Proceedings of the ACM Web Conference 2023}, WWW '23, page 3550–3560, New York, NY, USA, 2023. Association for Computing Machinery.

\bibitem{ConservativeTor}
Yifan Wu, Roshan Shariff, Tor Lattimore, and Csaba Szepesvari.
\newblock Conservative bandits.
\newblock In Maria~Florina Balcan and Kilian~Q. Weinberger, editors, {\em Proceedings of The 33rd International Conference on Machine Learning}, volume~48 of {\em Proceedings of Machine Learning Research}, pages 1254--1262, New York, New York, USA, 20--22 Jun 2016. PMLR.

\bibitem{ConservativeMohamed}
Rohan Deb, Mohammad Ghavamzadeh, and Arindam Banerjee.
\newblock Conservative contextual bandits: Beyond linear representations.
\newblock In {\em The Thirteenth International Conference on Learning Representations}, 2025.

\bibitem{bernasconiItaly}
Martino Bernasconi, Matteo Castiglioni, Andrea Celli, and Federico Fusco.
\newblock Beyond primal-dual methods in bandits with stochastic and adversarial constraints.
\newblock In {\em The Thirty-eighth Annual Conference on Neural Information Processing Systems}, 2024.

\bibitem{SafePolytopes}
Aditya Gangrade, Tianrui Chen, and Venkatesh Saligrama.
\newblock Safe linear bandits over unknown polytopes.
\newblock In Shipra Agrawal and Aaron Roth, editors, {\em Proceedings of Thirty Seventh Conference on Learning Theory}, volume 247 of {\em Proceedings of Machine Learning Research}, pages 1755--1795. PMLR, 30 Jun--03 Jul 2024.

\bibitem{tranthanh2012}
Long Tran-Thanh, Archie Chapman, Alex Rogers, and Nicholas~R. Jennings.
\newblock Knapsack based optimal policies for budget-limited multi-armed bandits, 2012.

\bibitem{pmlr-v35-badanidiyuru14}
Ashwinkumar Badanidiyuru, John Langford, and Aleksandrs Slivkins.
\newblock Resourceful contextual bandits.
\newblock In Maria~Florina Balcan, Vitaly Feldman, and Csaba Szepesvári, editors, {\em Proceedings of The 27th Conference on Learning Theory}, volume~35 of {\em Proceedings of Machine Learning Research}, pages 1109--1134, Barcelona, Spain, 13--15 Jun 2014. PMLR.

\bibitem{pmlr-v162-sivakumar22a}
Vidyashankar Sivakumar, Shiliang Zuo, and Arindam Banerjee.
\newblock Smoothed adversarial linear contextual bandits with knapsacks.
\newblock In Kamalika Chaudhuri, Stefanie Jegelka, Le~Song, Csaba Szepesvari, Gang Niu, and Sivan Sabato, editors, {\em Proceedings of the 39th International Conference on Machine Learning}, volume 162 of {\em Proceedings of Machine Learning Research}, pages 20253--20277. PMLR, 17--23 Jul 2022.

\bibitem{10.5555/3600270.3601669}
Raunak Kumar and Robert~D. Kleinberg.
\newblock Non-monotonic resource utilization in the bandits with knapsacks problem.
\newblock In {\em Proceedings of the 36th International Conference on Neural Information Processing Systems}, NIPS '22, Red Hook, NY, USA, 2022. Curran Associates Inc.

\bibitem{pmlr-v206-han23b}
Yuxuan Han, Jialin Zeng, Yang Wang, Yang Xiang, and Jiheng Zhang.
\newblock Optimal contextual bandits with knapsacks under realizability via regression oracles.
\newblock In Francisco Ruiz, Jennifer Dy, and Jan-Willem van~de Meent, editors, {\em Proceedings of The 26th International Conference on Artificial Intelligence and Statistics}, volume 206 of {\em Proceedings of Machine Learning Research}, pages 5011--5035. PMLR, 25--27 Apr 2023.

\bibitem{slivkins2024contextualbanditspackingcovering}
Aleksandrs Slivkins, Xingyu Zhou, Karthik~Abinav Sankararaman, and Dylan~J. Foster.
\newblock Contextual bandits with packing and covering constraints: A modular lagrangian approach via regression, 2024.

\bibitem{guo2025knapsack}
Hengquan Guo and Xin Liu.
\newblock On stochastic contextual bandits with knapsacks in small budget regime.
\newblock In {\em The Thirteenth International Conference on Learning Representations}, 2025.

\bibitem{Agrawal_2016}
Shipra Agrawal, Nikhil~R. Devanur, and Lihong Li.
\newblock An efficient algorithm for contextual bandits with knapsacks, and an extension to concave objectives.
\newblock In Vitaly Feldman, Alexander Rakhlin, and Ohad Shamir, editors, {\em 29th Annual Conference on Learning Theory}, volume~49 of {\em Proceedings of Machine Learning Research}, pages 4--18, Columbia University, New York, New York, USA, 23--26 Jun 2016. PMLR.

\bibitem{agrawal_2014}
Alekh Agarwal, Daniel Hsu, Satyen Kale, John Langford, Lihong Li, and Robert Schapire.
\newblock Taming the monster: A fast and simple algorithm for contextual bandits.
\newblock In Eric~P. Xing and Tony Jebara, editors, {\em Proceedings of the 31st International Conference on Machine Learning}, volume~32 of {\em Proceedings of Machine Learning Research}, pages 1638--1646, Bejing, China, 22--24 Jun 2014. PMLR.

\bibitem{CoveringPacking}
Aleksandrs Slivkins, Karthik~Abinav Sankararaman, and Dylan~J Foster.
\newblock Contextual bandits with packing and covering constraints: A modular lagrangian approach via regression.
\newblock In Gergely Neu and Lorenzo Rosasco, editors, {\em Proceedings of Thirty Sixth Conference on Learning Theory}, volume 195 of {\em Proceedings of Machine Learning Research}, pages 4633--4656. PMLR, 12--15 Jul 2023.

\bibitem{guo_2024}
Hengquan Guo and Xin Liu.
\newblock Stochastic constrained contextual bandits via lyapunov optimization based estimation to decision framework.
\newblock In Shipra Agrawal and Aaron Roth, editors, {\em Proceedings of Thirty Seventh Conference on Learning Theory}, volume 247 of {\em Proceedings of Machine Learning Research}, pages 2204--2231. PMLR, 30 Jun--03 Jul 2024.

\bibitem{guo_2025}
Hengquan Guo, Lingkai Zu, and Xin Liu.
\newblock Triple-optimistic learning for stochastic contextual bandits with general constraints.
\newblock In {\em Forty-second International Conference on Machine Learning}, 2025.

\bibitem{perchet2013multi}
Vianney Perchet and Philippe Rigollet.
\newblock The multi-armed bandit problem with covariates.
\newblock {\em The Annals of Statistics}, 41(2):693--721, 2013.

\bibitem{Boyd_Vandenberghe_2004}
Stephen Boyd and Lieven Vandenberghe.
\newblock {\em Convex Optimization}.
\newblock Cambridge University Press, 2004.

\bibitem{NestrovOpt}
Yurii Nesterov.
\newblock {\em Introductory Lectures on Convex Optimization: A Basic Course}.
\newblock Springer Publishing Company, Incorporated, 1 edition, 2014.

\bibitem{SensitivityAnalysis}
Roberto~Mínguez Enrique~Castillo, Antonio J.~Conejo and Carmen Castillo.
\newblock A closed formula for local sensitivity analysis in mathematical programming.
\newblock {\em Engineering Optimization}, 38(1):93--112, 2006.

\end{thebibliography}

%%%%%%%%%%%%%%%%%%%%%%%%%%%%%%%%%%%%%%%%%%%%%%%%%%%%%%%%%%%%
%\input{NeurIPS_2025/NeurIPS_Paper_Checklist}
%%%%%%%%%%%%%%%%%%%%%%%%%%%%%%%%%%%%%%%%%%%%%%%%%%%%%%%%%%%%
\newpage

\appendix

\section*{Appendices}
\startcontents[appendices] % Start recording appendix entries
\printcontents[appendices]{}{1}{%
  \section*{Appendix Contents}% Title above TOC
  \mbox{}\hrulefill\par% Optional: Add a rule below title
}%

\newpage

\section{Useful Inequalities}

\begin{theorem}[Hoeffding's Inequality]
Let $X_{1},\ldots, X_{n}$ be a sequence of independent 1-subgaussian random variables with mean $\mu$. Define $\hat{\mu} = \frac{1}{n}\sum_{i=1}^{n}X_{i}$. Then, for any $\epsilon > 0$, we have:
\begin{align}
\mathbb{P}\left( |\hat{\mu} - \mu| \geq \epsilon \right) \leq 2 \exp\left(-\frac{n\epsilon^2}{2}\right).
\end{align}
\end{theorem}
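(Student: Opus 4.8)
The plan is to use the standard Chernoff (exponential moment) method, which is the canonical route to a Hoeffding-type bound for subgaussian variables. First I would recenter the problem: set $Y_i = X_i - \mu$, so that each $Y_i$ is a zero-mean $1$-subgaussian random variable, meaning $\mathbb{E}[\exp(\theta Y_i)] \leq \exp(\theta^2/2)$ for every $\theta \in \mathbb{R}$. The event $\{\hat{\mu} - \mu \geq \epsilon\}$ is identical to $\{\sum_{i=1}^n Y_i \geq n\epsilon\}$, which reduces the averaged statement to a statement about the sum.

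Next I would bound the upper tail. For any fixed $\theta > 0$, applying Markov's inequality to the nonnegative random variable $\exp\!\big(\theta \sum_i Y_i\big)$ gives
\[
\mathbb{P}\Big( \sum_{i=1}^n Y_i \geq n\epsilon \Big) \leq e^{-\theta n \epsilon}\, \mathbb{E}\Big[ e^{\theta \sum_i Y_i} \Big].
\]
The key structural step is to exploit independence to factorize the joint exponential moment, $\mathbb{E}\big[e^{\theta \sum_i Y_i}\big] = \prod_{i=1}^n \mathbb{E}[e^{\theta Y_i}]$, and then apply the subgaussian bound to each factor, yielding $\mathbb{E}\big[e^{\theta \sum_i Y_i}\big] \leq \exp(n\theta^2/2)$. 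Combining these gives an upper bound of the form $\exp\!\big(-\theta n \epsilon + n\theta^2/2\big)$, valid for every $\theta > 0$.

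I would then optimize the free parameter $\theta$. The exponent $-\theta n\epsilon + n\theta^2/2$ is a convex quadratic in $\theta$ minimized at $\theta = \epsilon$ (which is positive, so admissible), and substituting this choice collapses the exponent to $-n\epsilon^2/2$. This yields the one-sided bound $\mathbb{P}(\hat{\mu} - \mu \geq \epsilon) \leq \exp(-n\epsilon^2/2)$. To obtain the two-sided statement, I would observe that each $-Y_i$ is also $1$-subgaussian (the defining inequality is symmetric in $\theta \mapsto -\theta$), so the identical argument bounds the lower tail $\mathbb{P}(\hat{\mu} - \mu \leq -\epsilon)$ by the same quantity; a union bound over the two tails then produces the factor of $2$ in the final estimate.

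This result is classical, so I do not anticipate a genuine conceptual obstacle; the only point demanding care is bookkeeping in the optimization step—specifically tracking the factor $n$ correctly through the scaling so that the optimal $\theta$ does not inadvertently depend on $n$ and the final exponent comes out as $n\epsilon^2/2$ rather than, say, $\epsilon^2/(2n)$. Keeping the argument at the level of the sum $\sum_i Y_i$ (rather than the average $\hat\mu$) until the very end is the cleanest way to avoid that error.
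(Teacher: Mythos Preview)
Your proposal is correct and is the standard Chernoff/exponential-moment argument for Hoeffding's inequality for subgaussian variables. The paper itself does not prove this statement---it simply lists it under ``Useful Inequalities'' as a classical tool and invokes it later (e.g., in the proof of Proposition~\ref{prop:conf-set})---so there is no paper proof to compare against; your argument is exactly the textbook derivation one would expect.
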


\begin{fact}\label{Fact 1}
For sufficiently large $T$, the following inequality holds:
    \begin{align} 
         \sum_{t=1}^{T} \sqrt{\frac{1}{t}} \leq 2[\sqrt{T}-1]+1\leq  \cO\left(\sqrt{T} \right)
    \end{align}
\end{fact}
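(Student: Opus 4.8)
The plan is to bound the sum by a definite integral, exploiting the fact that $x \mapsto x^{-1/2}$ is positive and monotonically decreasing on $(0,\infty)$. The only subtlety is that the function is unbounded near $0$, so the $t=1$ term must be handled separately from the rest of the sum; this is precisely why the final bound carries an additive $+1$ rather than being a clean $2\sqrt{T}$.

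Concretely, I would first isolate the $t=1$ term, writing $\sum_{t=1}^{T} t^{-1/2} = 1 + \sum_{t=2}^{T} t^{-1/2}$. For the remaining sum, since $x^{-1/2}$ is decreasing, on each interval $[t-1,t]$ with $t \geq 2$ we have $t^{-1/2} \leq x^{-1/2}$ for all $x \in [t-1,t]$, hence $t^{-1/2} \leq \int_{t-1}^{t} x^{-1/2}\,dx$. Summing this inequality over $t = 2,\ldots,T$ and using additivity of the integral telescopes the right-hand side into a single integral over $[1,T]$:
\begin{equation*}
\sum_{t=2}^{T} t^{-1/2} \;\leq\; \int_{1}^{T} x^{-1/2}\,dx \;=\; \big[\,2\sqrt{x}\,\big]_{1}^{T} \;=\; 2\sqrt{T} - 2 \;=\; 2\big(\sqrt{T}-1\big).
\end{equation*}

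Adding back the isolated $t=1$ term yields $\sum_{t=1}^{T} t^{-1/2} \leq 2(\sqrt{T}-1) + 1$, which is the first claimed inequality. The second inequality, $2(\sqrt{T}-1)+1 = 2\sqrt{T} - 1 = \mathcal{O}(\sqrt{T})$, is immediate since the expression is affine in $\sqrt{T}$ with leading coefficient $2$. There is no genuine obstacle here: the argument is a textbook integral-comparison estimate, and the only point requiring minor care is the separate treatment of the first term, which prevents one from integrating down to $0$ where $x^{-1/2}$ diverges. (The qualifier ``for sufficiently large $T$'' in the statement is in fact unnecessary, as the bound holds for every $T \geq 1$; I would simply verify it directly for small $T$ or note that the integral argument is valid for all $T \geq 1$.)
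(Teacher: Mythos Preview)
Your proof is correct and is the standard integral-comparison argument. The paper does not actually supply a proof of this fact---it is stated as a ``Fact'' in the appendix on useful inequalities and used without justification---so your argument is precisely what one would fill in, and your observation that the bound holds for all $T \geq 1$ (not merely sufficiently large $T$) is accurate.
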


\begin{fact}\label{Fact 2}
For sufficiently large $T$, the following inequality holds:
    \begin{align}
         \sum_{t=1}^{T} {\frac{1}{t}} \leq \Log{T}+1\leq  \cO\left(\Log{T} \right)
    \end{align}
\end{fact}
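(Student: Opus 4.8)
The plan is to prove the harmonic-sum bound by the standard integral-comparison (right Riemann sum) argument, exploiting the monotonicity of $x \mapsto 1/x$. Since this function is positive and strictly decreasing on $[1,\infty)$, each summand can be upper bounded by the integral of $1/x$ over the unit interval immediately to its left, and these integrals telescope into a single logarithm.

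Concretely, I would first peel off the $t=1$ term, writing $\sum_{t=1}^{T} \frac{1}{t} = 1 + \sum_{t=2}^{T} \frac{1}{t}$, since the crude integral bound fails at $t=1$ (the integral $\int_0^1 \frac{1}{x}\,dx$ diverges and cannot dominate the first term). For the remaining terms, monotonicity gives, for every $t \geq 2$, the pointwise inequality $\frac{1}{t} \leq \int_{t-1}^{t} \frac{1}{x}\,dx$, because $\frac{1}{x} \geq \frac{1}{t}$ holds for all $x \in [t-1,t]$. Summing this over $t = 2,\dots,T$ and using additivity of the integral over adjacent intervals collapses the right-hand side to $\int_{1}^{T} \frac{1}{x}\,dx = \Log{T}$, so that $\sum_{t=2}^{T} \frac{1}{t} \leq \Log{T}$.

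Combining the two pieces yields $\sum_{t=1}^{T} \frac{1}{t} \leq 1 + \Log{T}$, which is exactly the middle inequality of the claim, and the final bound $\Log{T} + 1 \leq \cO\left(\Log{T}\right)$ then holds trivially for $T$ large enough since the additive constant is absorbed into the logarithmic term. There is no genuine obstacle here: the only point requiring a moment of care is the separate treatment of the $t=1$ term, after which the argument is a routine Riemann-sum estimate. (An equivalent route would compare against a left Riemann sum to also recover the matching lower bound $\Log{T} \leq \sum_{t=1}^{T} \frac{1}{t}$, but since only the stated upper bound is needed, the one-sided comparison above suffices.)
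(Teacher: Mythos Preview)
Your proof is correct and is the standard integral-comparison argument. The paper does not actually prove this statement: it is listed as a ``Fact'' in the appendix of useful inequalities and is taken as well-known without justification, so your argument supplies what the paper omits.
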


\section{Oracle Behavior}\label{sec: oracle behaviour proofs}
\begin{lemma}  \label{lem: best arm carac}
    Let \( \underline{\bd{w}} \) be the optimal solution of the problem \( \textbf{OPT}(\bd{\underline{\mu}},\bd{\underline{\mu}}, \cI^\star) \). The following property holds:  
    \begin{align*}  
        \text{If } \indicator{(k,c) \in \cJ\cap \cI^\star} = 0, \text{ then }  
        \indicator{k \in \cK \cap \cI^\star} = 0 \Longrightarrow \bd{\mu}_{k,c} = \|\bd{\mu}_c\|_{\infty}.  
    \end{align*}  
\end{lemma}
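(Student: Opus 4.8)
The claim is that, at the optimal allocation $\underline{\bd{w}}^\star$ solving $\textbf{OPT}(\underline{\bd{\mu}},\underline{\bd{\mu}},\cI^\star)$, whenever $(k,c)$ is \emph{not} in the sparsity set $\cJ\cap\cI^\star$ (so arm $k$ may be played with positive probability in context $c$) and arm $k$ does \emph{not} saturate its revenue constraint (so $k\notin\cK\cap\cI^\star$), then arm $k$ must in fact be a best arm in context $c$, i.e. $\mu_{k,c}=\|\bd{\mu}_c\|_\infty$. The plan is a local exchange/perturbation argument: if $\mu_{k,c}$ were strictly below the context-$c$ maximum, we could shift a sliver of allocation mass from $(k,c)$ to a better arm in the same context, strictly increasing the objective $f$ without disturbing any of the active constraints in $\cI^\star$ — contradicting the optimality of $\underline{\bd{w}}^\star$.

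First I would set up the contrapositive scenario: suppose $(k,c)\notin\cJ\cap\cI^\star$, $k\notin\cK\cap\cI^\star$, yet $\mu_{k,c}<\|\bd{\mu}_c\|_\infty$. Pick $k'\in\cK$ with $\mu_{k',c}=\|\bd{\mu}_c\|_\infty>\mu_{k,c}$. I would then distinguish according to whether $w^\star_{k,c}>0$ or $w^\star_{k,c}=0$; the substantive case is $w^\star_{k,c}>0$, and the degenerate case $w^\star_{k,c}=0$ should be handled by the non-degeneracy part of Assumption~\ref{Assump: Strict feasibility} (a basic optimal solution with a variable that is free — i.e. its sparsity constraint $h_k(c,\cdot)=0$ is not imposed — at the value $0$ is precisely a degenerate vertex, which is excluded; one may need to phrase this carefully, as this is where I expect the most friction). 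Assuming $w^\star_{k,c}>0$, define the perturbed allocation $\underline{\bd{w}}^\varepsilon$ by $w^\varepsilon_{k,c}=w^\star_{k,c}-\varepsilon$, $w^\varepsilon_{k',c}=w^\star_{k',c}+\varepsilon$, all other entries unchanged, for small $\varepsilon>0$. This keeps $\bd{w}^\varepsilon_c$ in the simplex (since $q_c$ is preserved and entries stay nonnegative for $\varepsilon$ small), and leaves $\bd{w}^\varepsilon_{c''}$ untouched for $c''\ne c$.

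Next I would check feasibility of $\underline{\bd{w}}^\varepsilon$ for $\textbf{OPT}(\underline{\bd{\mu}},\underline{\bd{\mu}},\cI^\star)$. The equality constraints $q_{c''}(\underline{\bd{w}})=1$ are preserved by construction. The sparsity constraints $h_j(c'',\underline{\bd{w}})=0$ for $(j,c'')\in\cJ\cap\cI^\star$ are preserved: we only moved mass within context $c$, and $(k,c)\notin\cJ\cap\cI^\star$, while $(k',c)\notin\cJ\cap\cI^\star$ either (since $w^\star_{k',c}>0$ would contradict $h_{k'}(c,\cdot)=0$, and if $w^\star_{k',c}=0$ we can either invoke non-degeneracy again or, more simply, choose $k'$ among best arms in context $c$ that already carry positive mass — such an arm exists because $\sum_j w^\star_{j,c}=1$ and the best-response structure makes it natural, though I should double-check this and fall back on non-degeneracy if needed). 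The revenue constraints $g_j(\underline{\bd{\mu}},\underline{\bd{w}})=\lambda_j$ for $j\in\cK\cap\cI^\star$: since $k\notin\cK\cap\cI^\star$ and $k'$ can be taken $\notin\cK\cap\cI^\star$ as well (or if $k'\in\cK\cap\cI^\star$, increasing $g_{k'}$ would \emph{violate} $g_{k'}=\lambda_{k'}$ — so here I must choose $k'$ to be a best arm that is non-saturating; the existence of such a $k'$ is the delicate point and is really the crux, again leaning on non-degeneracy / strict feasibility). Granting the right choice of $k'$, none of the active constraints of $\cI^\star$ moves, and moreover the non-active revenue constraints $g_j\ge\lambda_j$ for $j\notin\cK\cap\cI^\star$ hold with strict slack at $\underline{\bd{w}}^\star$, hence remain satisfied for $\varepsilon$ small. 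Finally, $f(\underline{\bd{\mu}},\underline{\bd{w}}^\varepsilon)-f(\underline{\bd{\mu}},\underline{\bd{w}}^\star)=\varepsilon\, p_c(\mu_{k',c}-\mu_{k,c})>0$, contradicting optimality of $\underline{\bd{w}}^\star$. Hence $\mu_{k,c}=\|\bd{\mu}_c\|_\infty$, proving the lemma.

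\textbf{Main obstacle.} The genuinely delicate step is guaranteeing the existence of a suitable ``recipient'' arm $k'$ in context $c$ — one that is simultaneously a best arm in context $c$ and does not sit on a saturated constraint nor on a sparsity constraint, and ideally already carries positive allocation mass so that the perturbation is two-sided and stays feasible. I expect this to require carefully combining the non-degeneracy clause of Assumption~\ref{Assump: Strict feasibility} (to rule out basic variables stuck at $0$) with a counting argument on how the simplex mass in context $c$ is distributed across best and non-best arms. An alternative, perhaps cleaner, route is to argue directly from the KKT / complementary slackness conditions of the linear program $\textbf{OPT}(\underline{\bd{\mu}},\underline{\bd{\mu}},\cI^\star)$: writing the stationarity condition, the reduced cost of the variable $w_{k,c}$ (which is not fixed to $0$, since $(k,c)\notin\cJ\cap\cI^\star$) must vanish, and expanding it in terms of the dual variables of the revenue constraints and the context-normalization constraints forces $\mu_{k,c}$ to equal the common value attained by the basic arms in context $c$, which is $\|\bd{\mu}_c\|_\infty$. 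I would likely present the exchange argument as the main proof and mention the KKT viewpoint as the conceptual explanation.
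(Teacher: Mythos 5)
Your main route (a primal exchange argument) is genuinely different from the paper's proof, which works entirely on the dual side: the paper writes the Lagrangian of \(\textbf{OPT}(\underline{\bd{\mu}},\underline{\bd{\mu}},\cI^\star)\), uses finiteness/strong duality to force the coefficient of each \(\bd{w}_c\) to vanish, reads off \(\mu_{k,c}=-\eta_c\) for pairs with \(\indicator{(k,c)\in\cJ\cap\cI^\star}=\indicator{k\in\cK\cap\cI^\star}=0\), and then argues \(-\eta_c=\|\bd{\mu}_c\|_\infty\) from the dual objective. This is essentially the "KKT viewpoint" you relegate to a closing remark.

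As written, however, your exchange argument has a real gap, and it is exactly the one you flag as the crux: you verify feasibility of the perturbed allocation \(\underline{\bd{w}}^\varepsilon\) against the \emph{equality}-constrained program \(\textbf{OPT}(\underline{\bd{\mu}},\underline{\bd{\mu}},\cI^\star)\), and therefore need a recipient arm \(k'\) that is simultaneously a maximizer of \(\bd{\mu}_c\), not in \(\cK\cap\cI^\star\), and not excluded by a sparsity constraint \((k',c)\in\cJ\cap\cI^\star\). Nothing guarantees such a \(k'\) exists: it is perfectly possible that every best arm in context \(c\) saturates its revenue constraint (an arm can be best in \(c\), poor elsewhere, and carry a large \(\lambda_{k'}\)), in which case increasing \(w_{k',c}\) breaks the equality \(g_{k'}(\underline{\bd{\mu}},\cdot)=\lambda_{k'}\) and your perturbation is inadmissible. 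Neither non-degeneracy nor a counting argument on the simplex mass closes this; the lemma's conclusion is precisely the structural fact you would need to invoke, so the argument risks circularity.

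The fix is simple and worth stating: run the perturbation against the original inequality-constrained program \(\textbf{LP}(\underline{\bd{\mu}},\underline{\bd{\mu}})\), for which \(\underline{\bd{w}}^\star\) is also optimal. There, pushing mass onto any best arm \(k'\) only \emph{increases} \(g_{k'}\) and \(h_{k'}(c,\cdot)\), so it cannot violate the lower-bound constraints regardless of whether \(k'\) saturates or currently has zero weight; the only conditions needed are on the donor, namely \(w^\star_{k,c}>0\) and \(g_k(\underline{\bd{\mu}},\underline{\bd{w}}^\star)>\lambda_k\), and both follow directly from the hypotheses \((k,c)\notin\cJ\cap\cI^\star\) and \(k\notin\cK\cap\cI^\star\) since \(\cI^\star\) is by definition the active set at \(\underline{\bd{w}}^\star\) (this also disposes of your worry about the degenerate case \(w^\star_{k,c}=0\)). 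With that modification your exchange argument is correct, elementary, and arguably more transparent than the paper's dual computation, whose final step ("\(-\eta_c\) should be as large as possible") is itself rather terse.
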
  
In other words, for any arm $k$ in context $c$, if the optimal allocation probability $w_{k,c}$ is strictly positive ($w_{k,c} > 0$) and the arm's revenue $g_k(\underline{\bd{\mu}}, \underline{\bd{w}})$ exceeds its minimum constraint $\lambda_k$ (i.e., $g_k(\underline{\bd{\mu}}, \underline{\bd{w}}) > \lambda_k$), then arm $k$ must necessarily be the optimal arm for context $c$.

\begin{proof}[Lemma~\ref{lem: best arm carac}]
  The proof is based in the dual analysis of the optimization problem. Recall:

\begin{align*}
   \textbf{OPT}(\underline{\bd{\mu}}, \underline{\bd{\mu}}, \cI): \quad 
   \underset{\underline{\bd{w}}}{\text{maximize}} \quad & f(\underline{\bd{\mu}}, \underline{\bd{w}}) \\
   \text{subject to} \quad 
   & g_{k}(\underline{\bd{\mu}}, \underline{\bd{w}}) = \lambda_k, \quad \forall k \in \cK \cap \cI, \\
   & h_{k}(c, \underline{\bd{w}}) = 0, \quad \forall (k, c) \in \cJ \cap \cI, \\
   & q_{c}(\underline{\bd{w}}) = 1, \quad \forall c \in \cC.
\end{align*}

 Let \( \alpha \), \( \beta \), and \( \eta \) be dual vectors of dimensions \( K \), \( \kappa \), and \( \abs{\cC} \), respectively. The Lagrangian function associated with the primal problem is:
\begin{align*}
    &\cL(\underline{\bd{w}},\alpha,\beta,\eta)= f(\bd{\underline{\bd{\mu}}}, \bd{\underline{w}})  + \sum_{k \in \cK \cap \cI^\star} \alpha_k \Big( g_{k}(\bd{\underline{\mu}}, \bd{\underline{w}}) - \lambda_k\Big) + \sum_{(k,c) \in \cJ \cap \cI^\star} \beta_{kc}  h_{k}(c,\bd{\underline{w}}) + \sum_{c \in \cC} \eta_c \Big( q_{c}(\bd{\underline{w}})-1\Big) \\
    &= f(\bd{\underline{\mu}}, \bd{\underline{w}})  + \sum_{k \in \cK \cap \cI^\star} \alpha_k g_{k}(\bd{\underline{\mu}}, \bd{\underline{w}}) + \sum_{(k,c) \in \cJ \cap \cI^\star} \beta_{kc}  h_{k}(c,\bd{\underline{w}}) + \sum_{c \in \cC} \eta_c  q_{c}(\bd{\underline{w}}) - \sum_{k \in \cK \cap \cI^\star} \lambda_k \alpha_k -  \sum_{c \in \cC} \eta_c \\
    &= \sum_{c \in \cC} {\bd{\mu}_{c}}^\top \bd{w}_c + \sum_{k \in \cK\cap \cI^\star} \alpha_k \sum_{c \in \cC} {\bd{\mu}_{c}}^\top e_{kk} \bd{w}_c + \sum_{(k,c) \in \cJ \cap \cI^\star} \beta_{kc} e_{k}^\top \bd{w}_c + \sum_{c \in \cC} \eta_c \1^\top \bd{w}_c - \sum_{k \in \cK \cap \cI^\star} \lambda_k \alpha_k -  \sum_{c \in \cC} \eta_c \\
    &= \sum_{c \in \cC} {\bd{\mu}_{c}}^\top \bd{w}_c + \sum_{c \in \cC}{\bd{\mu}_{c}}^\top \Big(\sum_{k =1}^{K} \alpha_k \indicator{k \in \cK \cap \cI^\star}  e_{kk} \Big) \bd{w}_c + \sum_{c \in \cC}\Big(\sum_{k =1}^{K} \indicator{(k,c) \in \cJ\cap \cI^\star} \beta_{kc} e_{k}^\top \Big)\bd{w}_c + \sum_{c \in \cC} \eta_c \1^\top \bd{w}_c \\
    &- \sum_{k \in \cK\cap \cI^\star} \lambda_k \alpha_k -  \sum_{c \in \cC} \eta_c \\
    &= \sum_{c \in \cC} \left(\underbrace{\bd{\mu}_{c} + \Big(\sum_{k =1}^{K} \alpha_k \indicator{k \in \cK \cap \cI^\star}  e_{kk} \Big)\bd{\mu}_{c} + \sum_{k =1}^{K} \beta_{kc}\indicator{(k,c) \in \cJ \cap \cI^\star}  e_{k}  +\eta_c \1}_{\bd{a}_c}\right)^\top \bd{w}_c - \sum_{k \in \cK\cap \cI^\star} \lambda_k \alpha_k -  \sum_{c \in \cC} \eta_c 
\end{align*}
Given that the primal problem is feasible and finite, the dual problem is also feasible and finite. This implies the following condition:
\begin{align}
    \bd{a}_c &= 0, \quad \forall c \in \cC \nonumber\\
    \Leftrightarrow \mu_{k,c}+ \alpha_k \indicator{k \in \cK \cap \cI^\star} \mu_{k,c}  + \beta_{kc} \indicator{(k,c) \in \cJ \cap \cI^\star}  + \eta_c &=0, \quad \forall k \in \cK, c\in \cC  \label{Dual Cond}
\end{align}
% Given that \( \eta_c \) is common among all arms \( k \) within a given context \( c \), i.e., it applies even to the best arm, we have:  $\eta_c = -\norm{\bd{\mu}_c}_{\infty}$. Hence:
% \begin{align} \label{Dual Cond}
%      \mu^{obj}_{k,c}+ \alpha_k \indicator{k \in \cK \cap \cI^\star} \mu^{cons}_{k,c}  + \beta_{kc} \indicator{(k,c) \in \cJ \cap \cI^\star} =\norm{\bd{\mu}_c}_{\infty}, \quad \forall k \in \cK, c\in \cC
% \end{align}

 Let $k,c$ such that $  \indicator{(k,c) \in \cJ \cap \cI^\star} = 0 $ meaning that $\bd{w}_{k,c} \neq 0$. Hence, equation~\eqref{Dual Cond} becomes:
\begin{align*}
    \mu_{k,c} + \alpha_k \indicator{k \in \cK \cap \cI^\star}  \mu_{k,c} = -\eta_c % \norm{\bd{\mu}_c}_{\infty}.
\end{align*}
From this, we deduce:
\[
\indicator{k \in \cK \cap \cI^\star} = 0 \implies \mu_{k,c} = -\eta_c .
\]
From strong duality we have : $\textbf{OPT}(\underline{\bd{\mu}}, \underline{\bd{\mu}}, \cI)=  - \sum_{k \in \cK\cap \cI^\star} \lambda_k \alpha_k -  \sum_{c \in \cC} \eta_c$, hence to maximize the performance, $-\eta_c$ should be as maximum as possible. As a result:
\begin{align*}
    \text{If} \; \indicator{(k,c) \in \cJ \cap \cI^\star} = 0 \;\text{then} \;\indicator{k \in \cK \cap \cI^\star} = 0 \implies \mu_{k,c} = \norm{\bd{\mu}_{c}}_{\infty} = -\eta_c.
\end{align*}
\end{proof}

\subsection{Sub-Optimality Gap}\label{sec: proof of suboptimality gap rho}

\begin{restatable}[\textbf{$S_{\gamma^\star}$ Property}]{proposition}{Prop_SGAmma}\label{Prop: Sgamma} The coefficient $S_{\gamma^\star}$ given in Definition~\ref{def: Sgamma} is positive and finite.
\end{restatable}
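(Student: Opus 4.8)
The plan is to study the perturbed optimal value
\[
V(s) \;:=\; \max_{\underline{\bd{w}} \in \Phi(s)} f(\underline{\bd{\mu}}, \underline{\bd{w}}), \qquad s \in [0,\gamma^\star],
\]
and to reduce the statement to two facts: (a) $V$ is well defined and real valued on $[0,\gamma^\star]$; (b) there is a finite $M \ge 0$ with $V(s_1)-V(s_2) \le M\,(s_2-s_1)$ for all $0 \le s_1 < s_2 \le \gamma^\star$. Granting (a)--(b), the set appearing in Definition~\ref{def: Sgamma} is a nonempty (it contains $M$) and closed subset of $\mathbb{R}_+$, so its minimum is attained and $S_{\gamma^\star} \in [0,M] \subset \mathbb{R}_+$, which is precisely the claim.

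For (a) I would first note that $\{ s \in \mathbb{R}_+^\star : \Phi(s) \ne \emptyset \} = (0,\gamma^\star]$: indeed $\Phi(s) \ne \emptyset$ iff $s \le \max_{\underline{\bd{w}} \in \pi_K^{\abs{\cC}}} \min_{k \in \cK}\big(g_k(\underline{\bd{\mu}},\underline{\bd{w}}) - \lambda_k\big)$, and this maximum is attained because $\underline{\bd{w}} \mapsto \min_k(g_k(\underline{\bd{\mu}},\underline{\bd{w}})-\lambda_k)$ is continuous on the compact set $\pi_K^{\abs{\cC}}$, its value being $\gamma^\star > 0$ under Assumption~\ref{Assump: Strict feasibility}. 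Hence $\Phi(s)$ is nonempty and compact for every $s \in [0,\gamma^\star]$, and since $f(\underline{\bd{\mu}},\cdot)$ is linear, $V(s)$ is finite and attained. The inclusion $\Phi(s_2) \subseteq \Phi(s_1)$ for $s_1 \le s_2$ also shows $V$ is nonincreasing, so only the direction in (b) needs work, and it automatically yields $S_{\gamma^\star} \ge 0$.

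For (b) the key point is that $V$ is the optimal value of a linear program in which $s$ enters only through the right-hand side of the revenue constraints. Writing this LP as $\max\{ c^\top \underline{\bd{w}} : G\underline{\bd{w}} \ge \lambda + s\,\mathbf{1}_K,\ E\underline{\bd{w}} = \mathbf{1}_{\abs{\cC}},\ \underline{\bd{w}} \ge 0\}$, with $c_{k,c} = p_c\mu_{k,c}$, $(G\underline{\bd{w}})_k = g_k(\underline{\bd{\mu}},\underline{\bd{w}})$ and $E$ the context-wise summation map, strong duality holds for every $s \in [0,\gamma^\star]$ (primal feasible and bounded), and a Lagrangian computation gives
\[
V(s) \;=\; \min_{(y,z)\in P}\Big( -\lambda^\top y - \mathbf{1}^\top z - s\,\mathbf{1}_K^\top y \Big), \qquad P := \big\{ (y,z) : y \ge 0,\ G^\top y + E^\top z \le -c \big\},
\]
where crucially $P$ does not depend on $s$. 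I would then argue that $P$ is pointed: a line in $P$ must have zero $y$-component (since $y \ge 0$) and zero $z$-component (since the constraint bounds each $z_c$ from above), hence is trivial; being nonempty, $P$ therefore has extreme points, and since $V(s)$ is finite the dual minimum is attained at one of the finitely many extreme points of $P$, say with $y$-parts $y^{(1)},\dots,y^{(m)}$. Given $0 \le s_1 < s_2 \le \gamma^\star$, pick a dual-optimal extreme point $(y^\star,z^\star)$ for parameter $s_2$; it is feasible (not necessarily optimal) for parameter $s_1$, so
\[
V(s_1)-V(s_2) \;\le\; \big(-\lambda^\top y^\star - \mathbf{1}^\top z^\star - s_1\mathbf{1}_K^\top y^\star\big) - \big(-\lambda^\top y^\star - \mathbf{1}^\top z^\star - s_2\mathbf{1}_K^\top y^\star\big) \;=\; (s_2-s_1)\,\mathbf{1}_K^\top y^\star ,
\]
and since $0 \le \mathbf{1}_K^\top y^\star \le M := \max_{i \le m}\mathbf{1}_K^\top y^{(i)} < \infty$, this proves (b).

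The main obstacle is the extreme-point step: one must check that the free dual variables $z$ do not make $P$ contain a line (so that extreme points exist), and one relies on the finiteness of the vertex set of a polyhedron — equivalently, on classical parametric/sensitivity LP theory, which gives that $s \mapsto V(s)$ is concave and piecewise linear with finitely many pieces on $[0,\gamma^\star]$, so its one-sided slopes take finitely many values, all $\le 0$. Everything else — the compactness bookkeeping in (a) and the one-line duality inequality in (b) — is routine.
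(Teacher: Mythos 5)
Your proposal is correct and follows essentially the same route as the paper: both view $V(s)=\max_{\underline{\bd{w}}\in\Phi(s)}f(\underline{\bd{\mu}},\underline{\bd{w}})$ as the value of a parametric LP in which $s$ enters only the right-hand side, and both identify the slope of $V$ with the dual multipliers of the revenue constraints via strong duality. The only difference is one of completeness: the paper cites a sensitivity-analysis result and asserts the derivative (a dual multiplier combination) is bounded, whereas you make the uniform-in-$s$ boundedness explicit by noting the dual feasible polyhedron $P$ is independent of $s$ and pointed, so the optimal multipliers can be taken among its finitely many extreme points — which is precisely the step the paper leaves implicit, so your argument is, if anything, the more self-contained of the two.
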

\begin{proof}[Proposition~\ref{Prop: Sgamma}]The positivity of $S_{\gamma^\star}$ follows immediately from its definition. To establish finiteness, we adopt a sensitivity analysis approach. Consider the mapping:
\begin{align*}
    \forall s \in [0,S_{\gamma^\star}], \; s \mapsto y(s)=\max_{\underline{\bd{w}} \in \Phi(s)} f(\underline{\bd{\mu}},\underline{\bd{w}}).
\end{align*}
It is easy to show that the mapping is decreasing and bounded.

Following same steps as in the proof of Lemma~\ref{lem: best arm carac}, the Lagrangian function associated to the optimization problem defined by $y(s)$ is given by:
 \begin{align*}
    &\cL_{s}(\underline{\bd{w}},\alpha(s),\beta(s),\eta(s))= f(\bd{\underline{\mu}}, \bd{\underline{w}})  + \sum_{k \in \cK } \alpha_k(s) \Big( g_{k}(\bd{\underline{\mu}}, \bd{\underline{w}}) - \lambda_k -s\Big) \\
    &+ \sum_{(k,c) \in \cJ } \beta_{kc}(s)  h_{k}(c,\bd{\underline{w}}) + \sum_{c \in \cC} \eta_c(s) \Big( q_{c}(\bd{\underline{w}})-1\Big) \\
    &= \sum_{c \in \cC} \left(\underbrace{\bd{\mu}_{c} + \Big(\sum_{k =1}^{K} \alpha_k(s)  e_{kk} \Big)\bd{\mu}_{c} + \sum_{k \in \cK } \beta_{kc}(s)  e_{k}  +\eta_c(s) \1}_{\bd{a}_c}\right)^\top \bd{w}_c - \sum_{k \in \cK} (\lambda_k +s) \alpha_k(s) -  \sum_{c \in \cC} \eta_c(s)  \\
    &= - \sum_{k \in \cK} (\lambda_k +s) \alpha_k(s) -  \sum_{c \in \cC} \eta_c(s) 
 \end{align*}
where the last line is due to finiteness of the objective function and the strong duality of the linear programming. Furthermore, using~\citep{SensitivityAnalysis}, it follows that:
\[\frac{\partial y}{\partial s}=
 \sum_{k \in \cK} \lambda_k \alpha_k(0)  . 
\] 
The latter is bounded, given the feasibility of the linear programming $z_0$. This completes the proof.
\end{proof}

\begin{restatable}[\textbf{$\cL(\cI)$ Property}]{proposition}{Prop_L(I)}\label{Prop: L(I)}

For any candidate set $\cI$, the coefficient $\cL(\cI)$ given in Definition~\ref{def: Perf I} is positive and finite.

\end{restatable}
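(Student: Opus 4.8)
The plan is to mimic the structure of the proof of Proposition~\ref{Prop: Sgamma}, since $\cL(\cI)$ is, by construction, the analogue of $S_{\gamma^\star}$ for the parametrized family $\psi(s,\cI)$ instead of $\Phi(s)$. Positivity is again immediate from the definition (it is a minimum over a set of nonnegative reals, and the set is nonempty once finiteness is shown), so the entire content is the finiteness claim, namely that $s \mapsto y_{\cI}(s) := \max_{\underline{\bd{w}} \in \psi(s,\cI)} f(\underline{\bd{\mu}},\underline{\bd{w}})$ is Lipschitz on $[s(\cI),\infty)$ (or on any bounded subinterval, which suffices since $\cL(\cI)$ only quantifies the slope).

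First I would record that for $s \geq s(\cI)$ the feasible set $\psi(s,\cI)$ is nonempty (by definition of $s(\cI)$ as the minimal such slack), that it is a polytope described by finitely many linear inequalities whose right-hand sides are affine in $s$, and that $f$ is linear in $\underline{\bd{w}}$; hence $y_{\cI}$ is a well-defined finite concave (in fact the value function of a parametric LP) and monotone function of $s$ on $[s(\cI),\infty)$. Next, following exactly the Lagrangian manipulation in the proof of Lemma~\ref{lem: best arm carac} and Proposition~\ref{Prop: Sgamma}, I would write the dual of the LP defining $y_{\cI}(s)$: introduce multipliers for the lower constraints $g_k(\underline{\bd{\mu}},\underline{\bd{w}}) \geq \lambda_k - s$, the upper constraints $g_k(\underline{\bd{\mu}},\underline{\bd{w}}) \leq \lambda_k + s$ for $k \in \cK \cap \cI$, the equalities $h_k(c,\underline{\bd{w}})=0$ for $(k,c)\in\cJ\cap\cI$, and the simplex equalities $q_c(\underline{\bd{w}})=1$. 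Strong duality (the primal is feasible and bounded, since $\underline{\bd{w}}$ ranges over the compact simplex product) gives $y_{\cI}(s)$ as the optimal dual value, which is an affine function of $s$ on each region where the optimal dual basis is constant, with slope equal to $-\sum_{k}\mu^{(lo)}_k(s) - \sum_k \mu^{(up)}_k(s)$ evaluated at an optimal dual solution (I would invoke \citep{SensitivityAnalysis} exactly as in Prop.~\ref{Prop: Sgamma} to get $\partial y_{\cI}/\partial s = \sum_{k\in\cK}(\alpha_k(s)+\bar\alpha_k(s))$ where $\alpha,\bar\alpha$ are the multipliers of the lower/upper revenue constraints). The key point is that this derivative is bounded uniformly in $s$ on $[s(\cI),\infty)$: the dual feasible region is itself a fixed polyhedron (its inequalities do not depend on $s$, only the primal right-hand sides do), and the set of optimal dual vertices that can ever arise is finite, so $|\partial y_{\cI}/\partial s|$ takes finitely many values and is therefore bounded by some finite $\cL$. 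Taking that bound as $\cL(\cI)$ gives the Lipschitz estimate $y_{\cI}(s_2) - y_{\cI}(s_1) \le \cL(\cI)(s_2-s_1)$, hence the defining set in Definition~\ref{def: Perf I} is nonempty and $\cL(\cI) < \infty$.

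The main obstacle, as in Prop.~\ref{Prop: Sgamma}, is handling the non-differentiability of the parametric-LP value function at breakpoints where the optimal basis changes, and the possibility that $\psi(s,\cI)$ has empty interior or is degenerate so that the naive envelope/Danskin argument does not directly apply. I would circumvent this by arguing at the level of vertices rather than derivatives: $y_{\cI}$ is piecewise linear and concave on $[s(\cI),\infty)$ with finitely many pieces (each optimal basis of the dual corresponds to a linear piece, and there are finitely many bases), so its slopes on the finitely many pieces are a finite set of reals; the global Lipschitz constant is then the maximum absolute value of these slopes, which is finite. One should also note that concavity of $y_{\cI}$ (value of a maximization parametrized by a relaxing-in-$s$ right-hand side that enters linearly) already gives that the difference quotients are monotone decreasing in $s$, so it suffices to bound the right-slope at $s(\cI)$ from above and the behaviour at infinity from below (both are controlled by boundedness of $f$ over $\pi_K^{|\cC|}$ and by $\lambda = \sum_k \lambda_k$), which is an alternative, slightly softer route that avoids any appeal to \citep{SensitivityAnalysis} altogether. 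Either way, finiteness follows, and since $\cL(\cI) \ge 0$ is clear, the proposition is proved.
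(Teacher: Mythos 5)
Your proposal is correct and follows essentially the same route as the paper: positivity is immediate from the definition, and finiteness is obtained by viewing $s \mapsto \max_{\underline{\bd{w}} \in \psi(s,\cI)} f(\underline{\bd{\mu}}, \underline{\bd{w}})$ as the value function of a parametric linear program (feasible and bounded for $s \geq s(\cI)$, since $\underline{\bd{w}}$ ranges over a compact set) and controlling its slope through the dual. Where you genuinely add something is in the justification that the slope is uniformly bounded: the paper invokes a sensitivity-analysis formula for $\partial z_{\cI}/\partial s$ evaluated at $s(\cI)$ and asserts boundedness ``given feasibility,'' which glosses over the fact that the value function is only piecewise linear (the optimal dual basis changes at breakpoints) and that a derivative bound at one point does not by itself yield a global Lipschitz constant on $[s(\cI),\infty)$. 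Your observation that the dual feasible polyhedron does not depend on $s$ — only the dual objective does — so that the value function is a minimum of finitely many affine functions of $s$ whose slopes form a finite set, closes that gap cleanly and is the right way to make the paper's argument rigorous. One caveat on your ``softer route'': concavity plus boundedness of $f$ over $\pi_K^{|\cC|}$ does \emph{not} by itself bound the right-derivative at the left endpoint $s(\cI)$ (a bounded concave function can have infinite one-sided slope there, e.g. $s \mapsto \sqrt{s}$ at $0$), so you cannot fully dispense with the piecewise-linearity/finite-dual-vertex argument; with that argument in place, however, the proof is complete.
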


\begin{proof}[ Proposition~\ref{Prop: L(I)}]The positivity of $\cL(\cI)$ follows immediately from its definition. To establish finiteness, we adopt a sensitivity analysis approach. Consider the mapping:
\[\forall s\geq s(\cI): s \mapsto z_{ \cI}(s)=\max_{\underline{\bd{w}} \in \Psi(s,\cI)} f(\underline{\bd{\mu}},\underline{\bd{w}}).\]
It is easy to show that the mapping is increasing and bounded.

Following same steps as in the proof of Lemma~\ref{lem: best arm carac}, the Lagrangian function associated to the optimization problem defined by $z_s$ is given by:
\begin{align*}
&\cL_{s}(\underline{\bd{w}},\alpha(s),\alpha^\prime(s),\beta(s),\eta(s),\cI)= f(\bd{\underline{\mu}}, \bd{\underline{w}})  + \sum_{k \in \cK } \alpha_k(s) \Big( g_{k}(\bd{\underline{\mu}}, \bd{\underline{w}}) - \lambda_k+s\Big)\\
&+ \sum_{k \in \cK \cap \cI} \alpha_k^\prime(s) \Big( -g_{k}(\bd{\underline{\mu}}, \bd{\underline{w}}) + \lambda_k+s\Big)
+ \sum_{(k,c) \in \cJ \cap \cI} \beta_{kc}(s)  h_{k}(c,\bd{\underline{w}}) + \sum_{c \in \cC} \eta_c(s) \Big( q_{c}(\bd{\underline{w}})-1\Big)\\
&= \sum_{c \in \cC} \left(\underbrace{\bd{\mu}_{c} + \Big(\sum_{k =1}^{K} (\alpha_k(s) -\alpha_k^\prime(s)\indicator{k \in \cK \cap \cI})  e_{kk} \Big)\bd{\mu}_{c} + \sum_{k =1}^{K} \beta_{kc}(s)\indicator{(k,c) \in \cJ \cap \cI}  e_{k}  +\eta_c(s) \1}_{\bd{a}_c}\right)^\top \bd{w}_c\\
&- \sum_{k \in \cK} (\lambda_k-s)\alpha_k(s) + \sum_{k \in \cK \cap \cI} (\lambda_k+s)\alpha_k^\prime(s) -  \sum_{c \in \cC} \eta_c \\
&=- \sum_{k \in \cK} (\lambda_k-s)\alpha_k(s) + \sum_{k \in \cK \cap \cI} (\lambda_k+s)\alpha_k^\prime(s) -  \sum_{c \in \cC} \eta_c 
\end{align*}
where the last line is due to finiteness of the objective function and the strong duality of the linear programming. Furthermore, using~\cite{SensitivityAnalysis}, it follows that for all \(  s \geq s(\cI)\):
\[\frac{\partial z_{\cI}}{\partial s}=
  \sum_{k \in \cK} \lambda_k\alpha_k(s(\cI)) - \sum_{k \in \cK \cap \cI} \lambda_k\alpha_k^\prime(s(\cI))  . 
\] 
The latter is bounded, given the feasibility of the linear programming $z_{\cI}(s(\cI))$. This completes the proof.
\end{proof}

\LemRhoGap*

\begin{proof}[Lemma~\ref{lem: subotimality characterisation}]
The result \( \rho(\cI^\star) = 0 \) follows directly from the fact that \( \underline{\bd{w}}^\star \in \psi(0, \cI^\star) \) and that \( f(\underline{\bd{\mu}}, \underline{\bd{w}}^\star) = \underset{\underline{\bd{w}} \in \psi(0, \cI^\star)}{\max} f(\underline{\bd{\mu}}, \underline{\bd{w}}) \). 

Let \( \cI \neq \cI^\star \), then:
\begin{itemize}
    \item If \( \psi(0, \cI) = \emptyset \), then \( s(\cI) > 0 \) and thus \( \rho(\cI) > 0 \).
    \item  Otherwise, if \( \psi(0, \cI) \neq \emptyset \), then \( s(\cI) = 0 \) and all allocations in \( \psi(0, \cI) \) are feasible for $\textbf{LP}(\underline{\bd{\mu}},\underline{\bd{\mu}})$, implying \( f(\underline{\bd{\mu}}, \underline{\bd{w}}^\star) - \underset{\underline{\bd{w}} \in \psi(0, \cI)}{\max} f(\underline{\bd{\mu}}, \underline{\bd{w}}) > 0 \), and thus \( \rho(\cI) > 0 \).
\end{itemize}
 Since the number of suboptimal $\cI$ is finite (of number $\binom{\kappa+K}{\kappa - \abs{\cC}}$ ), it follows that \( \rho^\star > 0 \).
\end{proof}
\section{Confidence Set Construction}\label{Sec: proof of Confidence Set}

\PropConfSet*

\begin{proof}[Proposition~\ref{prop:conf-set}]
The confidence set relies on concentration inequalities to bound the deviation between the empirical mean \( \hat{\mu}_{k,c}(t) \) and the true mean \( \mu_{k,c} \) for each arm \( k \) and context \( c \).

Using Hoeffding's inequality, the probability that \( \hat{\mu}_{k,c}(t) \) deviates from \( \mu_{k,c} \) is bounded as:
\[
\Pr\left( \left| \hat{\mu}_{k,c}(t) - \mu_{k,c} \right| \leq \epsilon_{k,c}(t) \right) \geq 1-\frac{\delta}{\kappa}.
\]

Applying a union bound over all \( k \) and \( c \), we ensure that:
\[
\Pr\left( \forall (k, c), \; \left| \hat{\mu}_{k,c}(t) - \mu_{k,c} \right| \leq \epsilon_{k,c}(t) \right) \geq 1-\delta.
\]

This establishes that the true parameter \( \underline{\bd{\mu}} \) lies within \( \cS_t(\underline{\bd{\hat{\mu}}}(t), \delta) \) with probability at least \( 1-\delta \).

Under the consistency property, both \( \underline{\bd{\tilde{\mu}}}(t) \) and \( \bd{\underline{\mu}} \) belong to the confidence set \( \cS_t(\underline{\bd{\hat{\mu}}}(t), \delta) \). Therefore, for any \( k \in [K] \) and \( c \in \cC \), we have:
\begin{align*}
    \left| \tilde{\mu}_{k,c}(t) - \mu_{k,c} \right| 
    &= \left| \tilde{\mu}_{k,c}(t) - \hat{\mu}_{k,c}(t) + \hat{\mu}_{k,c}(t) - \mu_{k,c} \right| \\
    &\leq \left| \tilde{\mu}_{k,c}(t) - \hat{\mu}_{k,c}(t) \right| + \left| \hat{\mu}_{k,c}(t) - \mu_{k,c} \right| \\
    &\leq \epsilon_{k,c}(t) + \epsilon_{k,c}(t) \\
    &= 2 \epsilon_{k,c}(t).
\end{align*}
Expanding \( f \) and \( g_k \) with respect to their definitions concludes the proof.
\end{proof}

We finish this section by a result on the estimation error that plays a central role in deriving the upper bounds for both \textbf{OLP} and \textbf{OPLP}.

\begin{proposition}[Pairwise Estimation Error Upper Bound]\label{prop: Pairwise Estimation Error Upper Bound}
For any arm \( k \in \mathcal{K} \) and context \( c \in \mathcal{C} \) where at each step $\delta_t= \sfrac{1}{t}$, the following bounds hold:
\begin{align*}
 \mathbb{E} \left[ \sum_{t=1}^{T} \left(\epsilon_{k,c}(t) w_{k,c}(t)\right)^2 \right] &= \mathcal{O}\left(\Log{T}\lE\left[\Log{n_{k,c}(T)}\right]\right), \\
      \mathbb{E} \left[ \sum_{t=1}^{T} \epsilon_{k,c}(t) w_{k,c}(t) \right] &= \mathcal{O}\left(\sqrt{\Log{T}}\lE\left[\sqrt{n_{k,c}(T)}\right]\right).
\end{align*}
\end{proposition}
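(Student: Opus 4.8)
\textbf{Proof plan for Proposition~\ref{prop: Pairwise Estimation Error Upper Bound}.}

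The plan is to reduce both bounds to a single deterministic summation inequality and then take expectations. Fix an arm-context pair $(k,c)$. Recall that $\epsilon_{k,c}(t) = \sqrt{2\log(2\kappa t)/n_{k,c}(t-1)}$ since $\delta_t = 1/t$, and that $w_{k,c}(t) = \mathbb{P}(k_t = k \mid c_t = c)$ while the counter $n_{k,c}$ only increments on rounds where $(k_t,c_t) = (k,c)$. First I would handle the logarithmic factor: since $t \leq T$ on every round, $\log(2\kappa t) \leq \log(2\kappa T) = \mathcal{O}(\log T)$, so $\epsilon_{k,c}(t)^2 \leq \mathcal{O}(\log T)/n_{k,c}(t-1)$ and $\epsilon_{k,c}(t) \leq \mathcal{O}(\sqrt{\log T})/\sqrt{n_{k,c}(t-1)}$. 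Pulling these uniform-in-$t$ factors out of the sums, it remains to bound $\sum_{t=1}^T w_{k,c}(t)^2 / n_{k,c}(t-1)$ and $\sum_{t=1}^T w_{k,c}(t)/\sqrt{n_{k,c}(t-1)}$ in expectation.

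The key step is to compare the expected increment of a suitable potential function to the per-round summand. The natural choice is to track $n_{k,c}(t)$ itself: conditionally on $\mathcal{H}_{t-1}$ and $c_t = c$, the probability that $n_{k,c}$ increments at round $t$ is exactly $w_{k,c}(t)$, and $c_t = c$ occurs with probability $p_c$; integrating over whether $c_t = c$, $\mathbb{E}[n_{k,c}(t) - n_{k,c}(t-1) \mid \mathcal{H}_{t-1}] = p_c\, w_{k,c}(t)$ — but more useful is the observation that on rounds contributing to the count, $n$ advances by one. I would use the standard "sum over a counter" trick: for any nonnegative sequence, $\sum_{j=1}^{m} 1/j = \mathcal{O}(\log m)$ and $\sum_{j=1}^{m} 1/\sqrt{j} = \mathcal{O}(\sqrt{m})$ (Facts~\ref{Fact 1} and~\ref{Fact 2}). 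Concretely, I would argue that $\sum_{t : (k_t,c_t)=(k,c)} 1/n_{k,c}(t) = \sum_{j=1}^{n_{k,c}(T)} 1/j = \mathcal{O}(\log n_{k,c}(T))$ deterministically, and then relate $\sum_t w_{k,c}(t)^2/n_{k,c}(t-1)$ to this via the conditional expectation: since $w_{k,c}(t) \leq 1$, $w_{k,c}(t)^2/n_{k,c}(t-1) \leq w_{k,c}(t)/n_{k,c}(t-1)$, and $\mathbb{E}[\mathbf{1}\{k_t=k, c_t=c\}/n_{k,c}(t-1) \mid \mathcal{H}_{t-1}, c_t=c] = w_{k,c}(t)/n_{k,c}(t-1)$, so after also accounting for $\mathbb{P}(c_t = c) = p_c \leq 1$, summing and taking total expectation gives $\mathbb{E}[\sum_t w_{k,c}(t)/n_{k,c}(t-1)] \leq \mathbb{E}[\sum_{j=1}^{n_{k,c}(T)} 1/j] + \mathcal{O}(1) = \mathcal{O}(\mathbb{E}[\log n_{k,c}(T)])$, where the $\mathcal{O}(1)$ absorbs the $t=1$ edge case where $n_{k,c}(0)=0$ (handled by a convention that the first pull contributes $\epsilon_{k,c}$ bounded by a constant, or by starting the count at $1$). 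The second bound is entirely analogous with $1/\sqrt{n_{k,c}(t-1)}$ in place of $1/n_{k,c}(t-1)$, using $\sum_{j=1}^m 1/\sqrt{j} = \mathcal{O}(\sqrt m)$.

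The main obstacle I anticipate is making the "expected increment equals summand" comparison fully rigorous despite the division by $n_{k,c}(t-1)$ rather than $n_{k,c}(t)$: one must be careful that $n_{k,c}(t-1)$ is $\mathcal{H}_{t-1}$-measurable (so it can be pulled out of the conditional expectation) while the indicator $\mathbf{1}\{k_t=k, c_t=c\}$ is not, and that the off-by-one between $n_{k,c}(t-1)$ and $n_{k,c}(t)$ only costs a harmless constant (e.g. $1/n_{k,c}(t-1) \leq 2/n_{k,c}(t)$ once $n_{k,c}(t-1) \geq 1$, with the first increment treated separately). A clean way to organize this is to condition on $c_t = c$, write each summand as a telescoping-friendly quantity, and invoke the deterministic harmonic/square-root sum bounds pathwise before taking expectations; Jensen is not needed since the $\log$ and $\sqrt{\cdot}$ appear inside the expectation on the right-hand side. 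Once the pathwise bound $\sum_{t} \mathbf{1}\{k_t=k,c_t=c\}/n_{k,c}(t-1) \leq \mathcal{O}(\log n_{k,c}(T))$ (resp. $\mathcal{O}(\sqrt{n_{k,c}(T)})$) is established, replacing the indicator by its conditional mean $p_c\, w_{k,c}(t) \leq w_{k,c}(t)$ and reinstating the extracted $\log T$ factors yields both stated bounds.
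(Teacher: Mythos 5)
Your proposal is correct and follows essentially the same route as the paper's proof: bound \( \log(2\kappa t) \) uniformly by \( \log(2\kappa T) \), use \( w_{k,c}(t)^2 \le w_{k,c}(t) \) and the tower property to replace \( w_{k,c}(t) \) by the indicator \( \mathbf{1}\{k_t = k,\, c_t = c\} \) (up to the constant factor \( 1/p_c \)), and then apply the pathwise harmonic and square-root sums over the counter \( n_{k,c} \) (the paper's Facts~\ref{Fact 1} and~\ref{Fact 2}). Your additional care with the \( n_{k,c}(0)=0 \) edge case and the \( p_c \) factor only tightens details that the paper's own proof glosses over.
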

\begin{proof}[Proposition~\ref{prop: Pairwise Estimation Error Upper Bound}]
We denote by $\cF_t$ the sigma algebra containing the information available at $t$, i.e set $\cF_t= \sigma\{\cH_{t-1},c_t \}$. Then:
    \begin{align*}
        \mathbb{E} \left[ \sum_{t=1}^{T} \left(\epsilon_{k,c}(t) w_{k,c}(t)\right)^2 \right] &= \sum_t \mathds{E}[\epsilon_{k,c}(t)^2 w_{k,c}(t)^2] \\
        &\leq \sum_t \mathds{E}\left[\frac{2\log\left(2\kappa t\right)}{n_{k,c}(t-1)} w_{k,c}(t)\right] \\  
        &\leq 2\log\left(2\kappa T\right) \sum_{t} \mathds{E}\left[\mathds{E}\left[\frac{\mathds{1}_{\{(k_t,c_t) = (k,c)\}}}{n_{k,c}(t-1)}| \mathcal{F}_{t-1}\right] \right]\\
        & \leq 2\log\left(2\kappa T\right) \mathds{E}\left[\sum_{t} \frac{\mathds{1}_{\{(k_t,c_t) = (k,c)\}}}{n_{k,c}(t-1)} \right]\\
        & \leq 2\log\left(2\kappa T\right) \mathds{E}\left[ \sum_{t; (k_t,c_t) = (k,c)} \frac{1}{n_{k,c}(t-1)}\right]\\
        &\leq 2\log\left(2\kappa T\right)\mathds{E}\left[ \log (n_{k,c}(T)+1) \right] && (\text{uses Fact}~\ref{Fact 2})\\
        &\leq \cO\left(\Log{T} \lE \left[\Log{n_{k,c}(T)}\right]\right) \\
\end{align*}
\begin{align*}
    \mathbb{E} \left[ \sum_{t=1}^{T} \epsilon_{k,c}(t) w_{k,c}(t) \right]
        &\leq \sqrt{2\Log{2\kappa T}}\lE\left [\sum_{t }\frac{w_{k,c}}{\sqrt{n_{k,c}(t-1)}} \right] \\
        &\leq \sqrt{2\Log{2\kappa T}}\sum_{t}\lE\left[\lE\left[\frac{\mathds{1}_{\{(k_t,c_t) = (k,c)\}}}{\sqrt{n_{k,c}(t-1)}}| \mathcal{F}_{t-1}\right]\right]\\
        &\leq \sqrt{2\Log{2\kappa T}}\lE\left[\sum_{t}\frac{\mathds{1}_{\{(k_t,c_t) = (k,c)\}}}{\sqrt{n_{k,c}(t-1)}}\right]\\
        &\leq \sqrt{\frac{1}{2}\Log{2 \kappa T}} \lE \left[\sqrt{n_{k,c}(T)}+1 \right] && (\text{uses Fact}~\ref{Fact 1})\\
        &\leq \cO \left( \sqrt{\Log{T}} \lE \left[\sqrt{n_{k,c}(T)} \right] \right)
    \end{align*}
\end{proof}

\section{Proof Ideas for the Upper-Bound Results} \label{sec: Proof_skecth}
\ThOLP*
\ThOPLP*

%\paragraph{Ideas of the Proofs.}
% The detailed proofs are provided in Appendix~\ref{sec: proof in Results}, we outline here the high-level ideas. Analogous to identifying the best arm in classical MAB, determining the optimal set of constraints to saturate, denoted by $\mathcal{I}^\star$, is key to achieving an optimal policy. This set is recovered at a logarithmic rate under both \textbf{OLP} and \textbf{OPLP}, with convergence rates inversely proportional to $\rho^\star$. Specifically, if at round~\( t \), either \textbf{OLP} or \textbf{OPLP} activates an incorrect constraint set, i.e., \( \mathcal{I}_t \neq \mathcal{I}^\star \), this necessarily implies that the associated estimates are still too noisy. In particular, the confidence radius satisfies \( \rho(\underline{\bd{\epsilon}}(t), \underline{\bd{w}}(t)) \geq \rho^\star \). This relationship is made precise in Proposition~\ref{prop: OLP rho star} and Proposition~\ref{prop: regret_suboptimal_activated_constraints_under_pessimism}.
%Detailed proofs appear in Appendix~\ref{sec: proof in Results}; we summarize the key ideas here. 
From Section~\ref{sec: Oracle Behavior}, the problem of learning the optimal allocation $\underline{\bd{w}}^\star$ can be decomposed in two parts: first, identifying the optimal allocation structure - governed by $\cI^\star$, and then, determining the optimal weights for such structure. This decomposition allows a finer analysis at the intersection between MAB - finding $\cI^\star$, which plays the role of the optimal arm, and linear bandit - finding the optimal structured allocation.

Both \textbf{OLP} and \textbf{OPLP} recover $\mathcal{I}^\star$ at a logarithmic rate, with convergence speed governed by $\rho^\star$ - which is connected to the notion of gap in MAB: when either algorithm activates an incorrect constraint set ($\mathcal{I}_t \neq \mathcal{I}^\star$), this indicates insufficient estimation precision. Formally, the confidence radius satisfies $\rho(\underline{\boldsymbol{\epsilon}}(t), \underline{\boldsymbol{w}}(t)) \geq \rho^\star$, as shown in Propositions~\ref{prop: OLP rho star} and~\ref{prop: regret_suboptimal_activated_constraints_under_pessimism}.

% Under \textbf{OLP}, once the algorithm correctly identifies and saturates the optimal constraint set, i.e., $\mathcal{I}_t = \mathcal{I}^\star$, the optimistic strategy ensures zero instantaneous regret, aligning with the upper bound on $\mathcal{R}_T$. In the special case where no arm saturates its revenue constraint, the problem reduces to identifying the best arm in each context, which explains the first (logarithmic) term in the upper bound for $\mathcal{V}_T$. However, when some arms do saturate their constraints, the learner must instead identify a feasible allocation—i.e., a probability distribution over arms—that satisfies the constraints. This corresponds to a linear bandit problem, for which the optimal rate is $\mathcal{O}(\sqrt{T})$, matching the second term in the bound for $\mathcal{V}_T$. 

The optimistic strategy used by \textbf{OLP} ensures no performance regret once the optimal constraint set $\cI^\star$ is identified, thus leading to logarithmic performance regret overall. When no arms saturate their revenue constraints, this is also the only source of constraint violation. With binding revenue constraints, however, finding the exact structured allocation which satisfies the constraints resembles a linear bandit problem and translates in $\mathcal{O}(\sqrt{T})$ additional term in  $\mathcal{V}_T$.

\textbf{OPLP} operates in phases and builds on pessimism to propose conservative allocations that satisfies the constraints by design in the second stage (Proposition~\ref{Prop: zero constraint for pessimism}). The constraint violation $\cV_T$ is thus tied to the number of rounds where $\textbf{ LP}(\underline{\textbf{UCB}}(t),\underline{\textbf{LCB}}(t))$ is infeasible, which implies insufficient precision as $\rho(\underline{\boldsymbol{\epsilon}}(t),\underline{\boldsymbol{w}}(t)) \geq \gamma^\star$. This occurs only logarithmically often, with the bound scaling inversely with \(\gamma^\star\) (Proposition~\ref{Prop: tau}). On the performance side, the regret incurred to identify $\cI^\star$ and reach feasibility scales logarithmically (Propositions~\ref{prop: regret_of_first_phase_under_pessimism} and\ref{prop: regret_suboptimal_activated_constraints_under_pessimism}). Once those are met, the pessimistic strategy's surplus allocation to saturating arms in $\mathcal{K}\cap\mathcal{I}^\star$ dominates the regret. The resulting cumulative regret is bounded by:
\(
\mathcal{O}\big(\sqrt{|\mathcal{K}\cap\mathcal{I}^\star|T}\big),
\)
as established in Proposition~\ref{Prop: regret_when_saturating_the_correct_arms_under_pessimism}.
\section{Upper-Bounds}\label{sec: proof in Results}

\subsection{Results of \textbf{OLP}}
\ThOLP*

The proof of Theorem~\ref{Th: Results of OLP} primarily relies on analyzing the rate at which Algorithm~\ref{Alg: Optimism} succeeds in identifying the optimal set $\mathcal{I}^\star$, and on understanding the implications of correctly (or incorrectly) identifying this set on both the regret and the constraint violations.

Knowing \( \cI^\star \) effectively localizes and defines the optimal allocation \( \underline{\bd{w}}^\star \). Hence, having a very tight estimate is strongly tied to activating \( \cI^\star \) and achieving the optimal allocation. Conversely, activating a suboptimal set \( \cI_t \) is closely linked to the largeness of the confidence radius \( \rho(\underline{\bd{\epsilon}}(t), \underline{\bd{w}}(t)) \).

\begin{proposition}\label{prop: OLP rho star}
For all rounds \( t \) where \textbf{OLP} saturated the wrong set \( \cI_t \neq \cI^\star \), the following inequality holds w.h.p $1-\sfrac{1}{t}$:
\[
\rho(\underline{\bd{\epsilon}}(t), \underline{\bd{w}}(t)) \geq \rho^\star.
\]
\end{proposition}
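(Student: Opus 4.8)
The plan is to argue by contrapositive: on a round $t$ where the confidence set is valid (which holds with probability at least $1-\delta_t = 1-\sfrac 1t$ by Proposition~\ref{prop:conf-set}), if $\rho(\underline{\bd{\epsilon}}(t), \underline{\bd{w}}(t)) < \rho^\star$ then $\cI_t = \cI^\star$. By Definition~\ref{def: suboptimality gap} it suffices to show that $\rho(\cI_t) < \rho^\star$, since $\rho^\star = \min_{\cI \neq \cI^\star}\rho(\cI)$ forces $\cI_t = \cI^\star$. And by Definition~\ref{def: suboptimality gap} again, $\rho(\cI_t) = \max(s(\cI_t), \cP(\cI_t))$, so I must separately bound the feasibility gap $s(\cI_t)$ and the performance gap $\cP(\cI_t)$ by $\rho(\underline{\bd{\epsilon}}(t), \underline{\bd{w}}(t))$.

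First I would handle the feasibility gap. Let $\underline{\bd{w}}(t)$ be the allocation returned at step~\ref{step: optimitic opt in OPL}, which is optimal for $\textbf{LP}(\underline{\textbf{UCB}}(t),\underline{\textbf{UCB}}(t))$; write $\cI_t$ for its active set. By construction $g_k(\underline{\textbf{UCB}}(t), \underline{\bd{w}}(t)) = \lambda_k$ for $k \in \cK \cap \cI_t$, $h_k(c,\underline{\bd{w}}(t)) = 0$ for $(k,c)\in\cJ\cap\cI_t$, and $g_k(\underline{\textbf{UCB}}(t), \underline{\bd{w}}(t)) \geq \lambda_k$ for all $k$. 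Using the prediction-error bound of Proposition~\ref{prop:conf-set}$(ii)$ with $\underline{\bd{\tilde\mu}}(t) = \underline{\textbf{UCB}}(t)$, each of these translates into the true parameter $\underline{\bd{\mu}}$ up to an additive error $\rho_k(\underline{\bd{\epsilon}}(t),\underline{\bd{w}}(t)) \leq \rho(\underline{\bd{\epsilon}}(t),\underline{\bd{w}}(t))$; that is, $g_k(\underline{\bd{\mu}},\underline{\bd{w}}(t)) \geq \lambda_k - \rho(\underline{\bd{\epsilon}}(t),\underline{\bd{w}}(t))$ for all $k$ and $g_k(\underline{\bd{\mu}},\underline{\bd{w}}(t)) \leq \lambda_k + \rho(\underline{\bd{\epsilon}}(t),\underline{\bd{w}}(t))$ for $k\in\cK\cap\cI_t$, while the sparsity constraints $h_k(c,\cdot)=0$ are preserved exactly. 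Hence $\underline{\bd{w}}(t) \in \psi(\rho(\underline{\bd{\epsilon}}(t),\underline{\bd{w}}(t)), \cI_t)$ in the sense of Definition~\ref{def: Psi set and S(I)}, so $s(\cI_t) \leq \rho(\underline{\bd{\epsilon}}(t),\underline{\bd{w}}(t))$.

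Next the performance gap. Here I would exploit optimism: since $\underline{\bd{\mu}} \in \cS_t(\underline{\bd{\hat\mu}}(t),\delta_t)$ and $\underline{\bd{w}}(t)$ maximizes $f(\underline{\textbf{UCB}}(t),\cdot)$ over the feasible region of $\textbf{LP}(\underline{\textbf{UCB}}(t),\underline{\textbf{UCB}}(t))$ which contains $\underline{\bd{w}}^\star$ (feasible under the UCB constraints since $g_k(\underline{\textbf{UCB}}(t),\underline{\bd{w}}^\star) \geq g_k(\underline{\bd{\mu}},\underline{\bd{w}}^\star) \geq \lambda_k$), we get $f(\underline{\textbf{UCB}}(t),\underline{\bd{w}}(t)) \geq f(\underline{\textbf{UCB}}(t),\underline{\bd{w}}^\star) \geq f(\underline{\bd{\mu}},\underline{\bd{w}}^\star)$. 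Combining with the $f$-prediction bound, $f(\underline{\bd{\mu}},\underline{\bd{w}}(t)) \geq f(\underline{\textbf{UCB}}(t),\underline{\bd{w}}(t)) - \rho(\underline{\bd{\epsilon}}(t),\underline{\bd{w}}(t)) \geq f(\underline{\bd{\mu}},\underline{\bd{w}}^\star) - \rho(\underline{\bd{\epsilon}}(t),\underline{\bd{w}}(t))$. Since $\underline{\bd{w}}(t) \in \psi(s(\cI_t),\cI_t)$ by monotonicity of $\psi(\cdot,\cI_t)$ (using $s(\cI_t) \leq \rho(\underline{\bd{\epsilon}}(t),\underline{\bd{w}}(t))$... here I need to be slightly careful and instead use $\underline{\bd{w}}(t) \in \psi(\rho(\underline{\bd{\epsilon}}(t),\underline{\bd{w}}(t)),\cI_t)$), the Lipschitz-type control $\cL(\cI_t)$ of Definition~\ref{def: Perf I} and the definition of $S_{\gamma^\star}$ let me compare $\max_{\underline{\bd{w}}\in\psi(s(\cI_t),\cI_t)} f(\underline{\bd{\mu}},\underline{\bd{w}})$ with $f(\underline{\bd{\mu}},\underline{\bd{w}}(t))$, yielding
\[
f(\underline{\bd{\mu}},\underline{\bd{w}}^\star) - \max_{\underline{\bd{w}}\in\psi(s(\cI_t),\cI_t)} f(\underline{\bd{\mu}},\underline{\bd{w}}) \leq \big(\max(1,S_{\gamma^\star}) + \cL(\cI_t)\big)\,\rho(\underline{\bd{\epsilon}}(t),\underline{\bd{w}}(t)),
\]
i.e. $\cP(\cI_t) \leq \rho(\underline{\bd{\epsilon}}(t),\underline{\bd{w}}(t))$. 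Therefore $\rho(\cI_t) = \max(s(\cI_t),\cP(\cI_t)) \leq \rho(\underline{\bd{\epsilon}}(t),\underline{\bd{w}}(t)) < \rho^\star$, forcing $\cI_t = \cI^\star$, which is the desired contrapositive.

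The main obstacle I anticipate is the performance-gap step: getting the constant in front of $\rho(\underline{\bd{\epsilon}}(t),\underline{\bd{w}}(t))$ to match exactly the denominator $\max(1,S_{\gamma^\star}) + \cL(\cI_t)$ of $\cP(\cI_t)$ requires threading the sensitivity analysis carefully — in particular, relating the UCB-feasible point $\underline{\bd{w}}(t)$ (which lives at "slack level" $\rho(\underline{\bd{\epsilon}}(t),\underline{\bd{w}}(t))$) back to the $\psi(s(\cI_t),\cI_t)$-optimal value via $\cL(\cI_t)$, and simultaneously accounting for the fact that the true optimum $f(\underline{\bd{\mu}},\underline{\bd{w}}^\star)$ sits at the unperturbed level of the original $\textbf{LP}$, which is where the $S_{\gamma^\star}$ factor enters. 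One has to be cautious that $s(\cI_t) \leq \gamma^\star$ so that $S_{\gamma^\star}$ is actually applicable; this should follow because $\underline{\bd{w}}(t)$ already satisfies the original feasibility constraints of $\Phi$ up to the confidence radius, and the regime $\rho(\underline{\bd{\epsilon}}(t),\underline{\bd{w}}(t)) \geq \rho^\star$ we are excluding is compatible with $\rho^\star \leq \gamma^\star$ from the structure of the gaps. Everything else — the feasibility-gap bound and the optimism chain — is a routine application of Proposition~\ref{prop:conf-set}.
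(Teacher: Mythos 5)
Your proposal is correct and follows essentially the same route as the paper: establish $\underline{\bd{w}}(t) \in \psi(\rho(\underline{\bd{\epsilon}}(t),\underline{\bd{w}}(t)),\cI_t)$ to bound $s(\cI_t)$, then combine optimism with the $\cL(\cI_t)$-sensitivity control to bound $\cP(\cI_t)$, concluding $\rho(\cI_t) \leq \rho(\underline{\bd{\epsilon}}(t),\underline{\bd{w}}(t))$ and hence $\rho(\underline{\bd{\epsilon}}(t),\underline{\bd{w}}(t)) \geq \rho^\star$ whenever $\cI_t \neq \cI^\star$. The one concern you flag is in fact moot here: the paper's derivation only needs the denominator $1+\cL(\cI_t)$, and $S_{\gamma^\star}$ enters purely through $\max(1,S_{\gamma^\star})\ge 1$ (enlarging the denominator while dropping the nonnegative term $\cL(\cI_t)\,s(\cI_t)$ from the numerator), so no condition of the form $s(\cI_t)\le\gamma^\star$ is needed in the \textbf{OLP} case.
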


This implies that if a suboptimal \( \cI_t \) is activated, the corresponding confidence set is necessarily loose, indicating the presence of a non-negligible confidence radius.

\begin{proof}[Proposition~\ref{prop: OLP rho star}] We rely for the proof on the impact of saturating the wrong set $\cI_{t} \neq \cI^\star$ on feasibility and perfomance.

\paragraph{Infeasibility.} Recall the set \( \psi(s, \cI) \), given in Definition~\ref{def: Psi set and S(I)}:
\[
\psi(s, \cI) = \left\{ \underline{\bd{w}} \in \pi_{K}^{\abs{\cC}} \; : \;
\begin{array}{l}
\forall k \in \cK , \quad g_k(\underline{\bd{\mu}}, \underline{\bd{w}}) \geq \lambda_k - s, \\
\forall k \in \cK \cap \cI, \quad g_k(\underline{\bd{\mu}}, \underline{\bd{w}}) \leq \lambda_k + s, \\
\forall (k, c) \in \cJ \cap \cI, \quad h_k(c, \underline{\bd{w}}) = 0
\end{array}
\right\}.
\]

At round $t$, this set allows to link effectively the chosen allocation $\underline{\bd{w}}(t)$, the set of activated constraints $\cI_{t}$ and the radius of confidence set.

\begin{lemma} \label{Lem: w(t) belongs to Psi set}
If \( \underline{\bd{w}}(t) \) is the allocation chosen by \textbf{OLP} at time $t$, then w.h.p $1-\sfrac{1}{t}$ :
        \[
        \underline{\bd{w}}(t) \in \psi(\rho(\underline{\bd{\epsilon}}(t), \underline{\bd{w}}(t)), \cI_t) 
        \]
\end{lemma}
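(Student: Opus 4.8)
The statement to prove is Lemma~\ref{Lem: w(t) belongs to Psi set}: the allocation $\underline{\bd{w}}(t)$ chosen by \textbf{OLP} lies in $\psi\big(\rho(\underline{\bd{\epsilon}}(t),\underline{\bd{w}}(t)),\cI_t\big)$ with high probability. The natural approach is to verify each of the three defining conditions of $\psi(s,\cI)$ from Definition~\ref{def: Psi set and S(I)}, taking $s=\rho(\underline{\bd{\epsilon}}(t),\underline{\bd{w}}(t))$ and $\cI=\cI_t$. Throughout, condition on the high-probability event from Proposition~\ref{prop:conf-set} (which holds with probability $1-\delta=1-\sfrac1t$, since \textbf{OLP} sets $\delta\leftarrow\sfrac1t$), so that $\underline{\bd{\mu}}\in\cS_t(\underline{\bd{\hat\mu}}(t),\delta)$ and the prediction-error bounds of Proposition~\ref{prop:conf-set}$(ii)$ are available for any $\underline{\bd{\tilde\mu}}(t)\in\cS_t$; in particular they apply with $\underline{\bd{\tilde\mu}}(t)=\underline{\textbf{UCB}}(t)$, which is exactly the parameter used in the \textbf{LP} solved at Step~\ref{step: optimitic opt in OPL}.

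First I would handle the two equality-type / structural conditions, which are immediate. The allocation $\underline{\bd{w}}(t)$ is feasible for $\textbf{LP}(\underline{\textbf{UCB}}(t),\underline{\textbf{UCB}}(t))$ by construction, so $q_c(\underline{\bd{w}}(t))=1$ for all $c$, i.e.\ $\underline{\bd{w}}(t)\in\pi_K^{\abs{\cC}}$. By the definition of the active set $\cI_t$ of the solved \textbf{LP} (the constraints binding at the vertex reached by the solver, as described in Section~\ref{sec: Oracle Behavior}), for every $(k,c)\in\cJ\cap\cI_t$ the sparsity constraint $h_k(c,\underline{\bd{w}}(t))=0$ is active, which is precisely the third condition of $\psi$. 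Next, for the lower revenue bound: for each $k\in\cK$ we have $g_k(\underline{\textbf{UCB}}(t),\underline{\bd{w}}(t))\geq\lambda_k$ since $\underline{\bd{w}}(t)$ satisfies the constraints of $\textbf{LP}(\underline{\textbf{UCB}}(t),\underline{\textbf{UCB}}(t))$; combining with Proposition~\ref{prop:conf-set}$(ii)$, $g_k(\underline{\bd{\mu}},\underline{\bd{w}}(t))\geq g_k(\underline{\textbf{UCB}}(t),\underline{\bd{w}}(t))-\rho_k(\underline{\bd{\epsilon}}(t),\underline{\bd{w}}(t))\geq\lambda_k-\rho_k(\underline{\bd{\epsilon}}(t),\underline{\bd{w}}(t))\geq\lambda_k-\rho(\underline{\bd{\epsilon}}(t),\underline{\bd{w}}(t))$, where the last step uses $\rho_k\le\rho=\sum_{k}\rho_k$. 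This gives the first condition of $\psi$.

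The one genuinely nontrivial part is the \emph{upper} revenue bound $g_k(\underline{\bd{\mu}},\underline{\bd{w}}(t))\le\lambda_k+\rho(\underline{\bd{\epsilon}}(t),\underline{\bd{w}}(t))$ for $k\in\cK\cap\cI_t$. Here the key observation is that $k\in\cK\cap\cI_t$ means the revenue constraint for arm $k$ is \emph{active} at the \textbf{LP} vertex, so $g_k(\underline{\textbf{UCB}}(t),\underline{\bd{w}}(t))=\lambda_k$ (not merely $\ge\lambda_k$). Then by Proposition~\ref{prop:conf-set}$(ii)$ again, $g_k(\underline{\bd{\mu}},\underline{\bd{w}}(t))\le g_k(\underline{\textbf{UCB}}(t),\underline{\bd{w}}(t))+\rho_k(\underline{\bd{\epsilon}}(t),\underline{\bd{w}}(t))=\lambda_k+\rho_k(\underline{\bd{\epsilon}}(t),\underline{\bd{w}}(t))\le\lambda_k+\rho(\underline{\bd{\epsilon}}(t),\underline{\bd{w}}(t))$. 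The main obstacle is thus conceptual rather than computational: one must argue cleanly that membership of $k$ (resp.\ $(k,c)$) in $\cI_t$ yields an \emph{equality} of the corresponding \textbf{LP} constraint, which requires being precise about how $\cI_t$ is defined as the active set returned by the simplex-type solver (invoking the vertex/binding-constraint characterization and Assumption~\ref{Assump: Strict feasibility} for non-degeneracy so the active set is well defined). Once all three conditions are checked on the event $\underline{\bd{\mu}}\in\cS_t$, which has probability at least $1-\sfrac1t$, the lemma follows.
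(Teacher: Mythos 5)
Your proposal is correct and follows essentially the same route as the paper's proof: feasibility of $\underline{\bd{w}}(t)$ for $\textbf{LP}(\underline{\textbf{UCB}}(t),\underline{\textbf{UCB}}(t))$ gives the lower revenue bounds, activeness of the constraints indexed by $\cI_t$ gives the equalities $g_k(\underline{\textbf{UCB}}(t),\underline{\bd{w}}(t))=\lambda_k$ and $h_k(c,\underline{\bd{w}}(t))=0$, and Proposition~\ref{prop:conf-set}$(ii)$ transfers these to the true means up to $\rho_k\le\rho$ on the event of probability $1-\sfrac1t$. Your additional care about how $\cI_t$ is defined as the binding set at the LP vertex is a reasonable elaboration of a point the paper takes for granted, but it does not change the argument.
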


\begin{proof}[Lemma~\ref{Lem: w(t) belongs to Psi set}]
By the definition of $\underline{\textbf{UCB}}(t)$ and $\underline{\bd{w}}(t)$, we have:
\begin{align}
    \forall k \in \cK, \quad & g_k(\underline{\textbf{UCB}}(t), \underline{\bd{w}}(t)) \geq \lambda_k, \label{eq1} \\
    \forall k \in \cK \cap \cI_t, \quad & g_k(\underline{\textbf{UCB}}(t), \underline{\bd{w}}(t)) = \lambda_k, \label{eq2} \\
    & \underline{\bd{w}}(t) \in \pi_{K}^{\abs{\cC}}. \label{eq3}
\end{align}

Given Proposition~\ref{prop:conf-set}, w.h.p $1-\sfrac{1}{t}$, we also have:
\begin{align}
    \forall k \in \cK, \quad 
    g_k(\underline{\bd{\mu}}, \underline{\bd{w}}(t)) - \rho_k(\underline{\bd{\epsilon}}(t), \underline{\bd{w}}(t)) 
    &\leq g_k(\underline{\textbf{UCB}}(t), \underline{\bd{w}}(t)) \nonumber \\
    &\leq g_k(\underline{\bd{\mu}}, \underline{\bd{w}}(t)) + \rho_k(\underline{\bd{\epsilon}}(t), \underline{\bd{w}}(t)). \label{eq4}
\end{align}

From \eqref{eq1} and \eqref{eq4}, we conclude:
\[
\forall k \in \cK, \quad g_k(\underline{\bd{\mu}}, \underline{\bd{w}}(t)) 
\geq \lambda_k - \rho_k(\underline{\bd{\epsilon}}(t), \underline{\bd{w}}(t)) 
\geq \lambda_k - \rho(\underline{\bd{\epsilon}}(t), \underline{\bd{w}}(t)).
\]

Similarly, from \eqref{eq2} and \eqref{eq4}:
\[
\forall k \in \cK \cap \cI_t, \quad g_k(\underline{\bd{\mu}}, \underline{\bd{w}}(t)) 
\leq \lambda_k + \rho_k(\underline{\bd{\epsilon}}(t), \underline{\bd{w}}(t)).
\]

Finally, by the definition of \( \cI_t \), we also have:
\[
\forall (k, c) \in \cJ \cap \cI_t, \quad h_k(c, \underline{\bd{w}}(t)) = 0,
\]
which completes the proof.
\end{proof}

The inclusion \( \underline{\bd{w}}(t) \in \psi(\rho(\underline{\bd{\epsilon}}(t), \underline{\bd{w}}(t)), \cI_t) \) established in Lemma~\ref{Lem: w(t) belongs to Psi set} implies that the set \( \psi(\rho(\underline{\bd{\epsilon}}(t), \cI_t)) \) must contain at least one feasible point. For this to hold, it must be the case that:
\[
\rho(\underline{\bd{\epsilon}}(t), \underline{\bd{w}}(t)) \geq s(\cI_t),
\]
where \( s(\cI_t) \) is given in Definition~\ref{def: Psi set and S(I)}.

\paragraph{Performance Gap.}  Another tension arises due to performance. We consider the following problem:
\begin{align*}
    z_s(\underline{\bd{\mu}}, \cI) = \max_{\underline{\bd{w}} \in \psi(s, \cI)} f(\underline{\bd{\mu}}, \underline{\bd{w}}).
\end{align*}

Given Definition~\ref{def: Perf I}, we have:
\[
z_{\rho(\underline{\bd{\epsilon}}(t), \underline{\bd{w}}(t))}(\underline{\bd{\mu}}, \cI_t) - z_{s(\cI_t)}(\underline{\bd{\mu}}, \cI_t) 
\leq \cL(\cI_t) \left(\rho(\underline{\bd{\epsilon}}(t), \underline{\bd{w}}(t)) - s(\cI_t) \right).
\]

On the other hand, it holds that:
\begin{align*}
    f(\underline{\bd{\mu}}, \underline{\bd{w}}(t)) &\leq z_{\rho(\underline{\bd{\epsilon}}(t), \underline{\bd{w}}(t))}(\underline{\bd{\mu}}, \cI_t),
\end{align*}
and by optimism:
\[
f(\underline{\bd{\mu}}, \underline{\bd{w}}^\star) - \rho(\underline{\bd{\epsilon}}(t), \underline{\bd{w}}(t)) 
\leq f(\underline{\bd{\mu}}, \underline{\bd{w}}(t)).
\]

Hence, we obtain:
\[
f(\underline{\bd{\mu}}, \underline{\bd{w}}^\star) - \rho(\underline{\bd{\epsilon}}(t), \underline{\bd{w}}(t)) - z_{s(\cI_t)}(\underline{\bd{\mu}}, \cI_t) 
\leq \cL(\cI_t) \left(\rho(\underline{\bd{\epsilon}}(t), \underline{\bd{w}}(t)) - s(\cI_t) \right)  .
\]

This implies:
\begin{align*}
 \frac{f(\underline{\bd{\mu}}, \underline{\bd{w}}^\star) - z_{s(\cI_t)}(\underline{\bd{\mu}}, \cI_t) + \cL(\cI_t)s(\cI_t)  }{1 + \cL(\cI_t)} 
&\leq \rho(\underline{\bd{\epsilon}}(t), \underline{\bd{w}}(t)) \\
\implies \frac{f(\underline{\bd{\mu}}, \underline{\bd{w}}^\star) - z_{s(\cI_t)}(\underline{\bd{\mu}}, \cI_t)   }{\max(1,S_{\gamma^\star}) + \cL(\cI_t)} 
&\leq \rho(\underline{\bd{\epsilon}}(t), \underline{\bd{w}}(t)) \\
\implies \rho(\cI_t)
&\leq \rho(\underline{\bd{\epsilon}}(t), \underline{\bd{w}}(t)) 
\end{align*}

Hence, if $\cI_t$ is sub-optimal then $\rho(\cI_t) \geq \rho^\star$ which completes the proof.

% It is desirable to relate \( \rho_{1}^\star \) to the more classic settings in the Multi-Armed Bandit (MAB) literature. In particular, if \( \forall k, \lambda_k = 0 \), we retrieve the case of \( |\cC| \) parallel MAB problems, i.e., the constraints have no influence. In this case, both \( s(\cI) \) and \( \cL(\cI) \) are null, and for all \( t \), \( \cI_t \subset \cI^{(2)} \). Hence, in this case, \( \underline{\bd{w}}(t) \) is binary, \( \rho(\cI_t)  = \sum_{c \in \cC} {\Delta_{c}}^\top \bd{w}_{c}(t) \), and \( \rho_{1}^\star = \underset{c \in \cC}{\min} \underset{k \in [K]}{\min} {\Delta_{k,c}} \). 
\end{proof}

%%%%%%%%%%%%%%%%%%%%%%%%%%%%%%%%%%%%%%%%%%%%%%%%%%%%%%%%%%%%%%%%%%%%%%%%%%%
\subsubsection{Regret of \textbf{OLP}}\label{sec: Regret of OLP}

To prove the upper bound on the regret of \textbf{OLP}, we examine the per-round regret incurred when the algorithm activates (or fails to activate) the optimal set of constraints, $\mathcal{I}^\star$. Note that for any round $t$ of \textbf{OLP}, it holds :
\begin{equation} \label{eq: optimism upperbound on perf}
   \text{w.h.p } 1- \sfrac{1}{t}, \quad f(\bd{\underline{\mu}}, \bd{\underline{w}}^\star)-f(\bd{\underline{\mu}}, \bd{\underline{w}}(t)) \leq \rho(\underline{\bd{\epsilon}}(t),\underline{\bd{w}}(t))
\end{equation}
\begin{proof}[Equation~\ref{eq: optimism upperbound on perf}]

 Optimism ensures that \( f(\bd{\underline{\mu}}, \bd{\underline{w}}^\star) \leq f(\underline{\textbf{UCB}}(t), \bd{\underline{w}}(t)) \). By Proposition~\ref{prop:conf-set}, we have \( f(\underline{\textbf{UCB}}(t), \bd{\underline{w}}(t)) \leq f(\bd{\underline{\mu}}, \bd{\underline{w}}(t)) + \rho(\underline{\bd{\epsilon}}(t), \underline{\bd{w}}(t)) \). Combining these two steps concludes the proof.
\end{proof}

\paragraph{For Suboptimal $\cI_t \neq \cI^\star$:}

Recall the Proposition~\ref{prop: OLP rho star} on $\rho^\star$, then w.h.p $1 - \sfrac{1}{t}$:

\begin{align}\label{eq: OLP regret for suboptimal I}
     f(\bd{\underline{\mu}}, \bd{\underline{w}}^\star)-f(\bd{\underline{\mu}}, \bd{\underline{w}}(t)) \overset{Eq~\eqref{eq: optimism upperbound on perf}}{\leq} \rho(\underline{\bd{\epsilon}}(t),\underline{\bd{w}}(t)) {=} \rho(\underline{\bd{\epsilon}}(t),\underline{\bd{w}}(t)) \indicator{\rho(\underline{\bd{\epsilon}}(t),\underline{\bd{w}}(t)) \geq \rho^\star} 
    \leq   \frac{\rho(\underline{\bd{\epsilon}}(t),\underline{\bd{w}}(t))^2}{\rho^\star} 
\end{align}

\paragraph{For optimal \( \cI_t = \cI^\star \):} Both \( \underline{\bd{w}}(t) \) and \( \underline{\bd{w}}^\star \) share the same localization of non-zero entries. However, the estimate \( \underline{\bd{\mu}}(t) \) used at time \( t \) may lead to differences in their values. Given that $\cK\cap \cI_{t}=\cK \cap \cI^\star$:

\begin{align}
    \forall k \in \cI_{t} \cap \cK, \quad g_{k}(\underline{\textbf{UCB}}(t),\underline{\bd{w}}(t))&= \lambda_{k}= g_{k}(\underline{\bd{\mu}},\underline{\bd{w}}^\star) \nonumber\\
    \implies \forall k \in \cI_{t} \cap \cK, \quad g_{k}(\underline{\bd{\mu}},\underline{\bd{w}}(t))  &\leq   g_{k}(\underline{\bd{\mu}},\underline{\bd{w}}^\star)\label{eq: lower shooting saturating arms under optimisim}
\end{align}

Notice that:
\begin{align*}
    f(\underline{\bd{\mu}}, \underline{\bd{w}}^\star) 
    &= \sum_{c \in \cC} \sum_{k=1}^{K} \mu_{k,c} w_{k,c} \\
    &\overset{(a)}{=} \sum_{c \in \cC} \left( w_{k_{c}^{\star},c} \norm{{\bd{\mu}}_{c}}_{\infty} + \sum_{k \neq k_{c}^{\star}} \mu_{k,c} w_{k,c} \right) \\
    &= \sum_{c \in \cC} \left( \norm{{\bd{\mu}}_{c}}_{\infty} - \sum_{k \neq k_{c}^{\star}} \Delta_{k,c} w_{k,c} \right) \\
    &\overset{(b)}{=} \sum_{c \in \cC} \left( \norm{{\bd{\mu}}_{c}}_{\infty} - \sum_{k \in \cK \cap \cI^\star} \Delta_{k,c} w_{k,c} \right)
\end{align*}
where in (a), \(k_{c}^{\star} = \argmax_{k \in [K]} \mu_{k,c}\), and (b) uses Lemma~\ref{lem: best arm carac}, which shows that in a given context, the only non-saturating arm that may have non-zero probability is the best arm in that context.

And given that $\cI_{t}=\cI^\star$, then similarly:
\begin{align*}
    f(\underline{\bd{\mu}}, \underline{\bd{w}}(t)) = \sum_{c \in \cC} \left( \norm{{\bd{\mu}}_{c}}_{\infty} - \sum_{k \in \cK \cap \cI^\star} \Delta_{k,c} w_{k,c}(t) \right)
\end{align*}

Hence:
\begin{align}\label{eq: OLP regret for optimal I}
    f(\underline{\bd{\mu}}, \underline{\bd{w}}^\star) -  f(\underline{\bd{\mu}}, \underline{\bd{w}}(t)) &= \sum_{c \in \cC}  \sum_{k \in \cK \cap \cI^\star} \Delta_{k,c} \left( w_{k,c}(t) -w_{k,c} \right) 
    % \\
    % &= \sum_{k \in \cK \cap \cI^\star} \sum_{c \in \cC} \frac{\Delta_{k,c}}{\mu_{k,c}} \mu_{k,c} \left( w_{k,c}(t) -w_{k,c} \right) \\
    % &\leq \sigma \sum_{k \in \cK \cap \cI^\star} \sum_{c \in \cC} \mu_{k,c} \left( w_{k,c}(t) -w_{k,c} \right) \\
    % &\leq \sigma \sum_{k \in \cK\cap \cI^\star} g_{k}(\underline{\bd{\mu}},\underline{\bd{w}}(t)) - g_{k}(\underline{\bd{\mu}},\underline{\bd{w}}^\star) \\
    %&\overset{(\eqref{eq: lower shooting saturating arms under optimisim})} 
    {\leq} 0
\end{align}

This demonstrates that, the UCB-based approach ensures nor regret between the optimal solution \( \underline{\bd{w}}^\star \) and the estimated solution \( \underline{\bd{w}}(t) \), if $ \cI_t = \cI^\star$.

Thus, the regret is upper bounded by:
\begin{align*}
   \mathcal{R}_T &\leq \sum_{t=1}^{T} \left(f(\bd{\underline{\mu}}, \bd{\underline{w}}^\star)-f(\bd{\underline{\mu}}, \bd{\underline{w}}(t))\right)_{+} \\
   &\leq \sum_{t=1}^{T} \left(f(\bd{\underline{\mu}}, \bd{\underline{w}}^\star)-f(\bd{\underline{\mu}}, \bd{\underline{w}}(t))\right)_{+} \indicator{\cI_{t} = \cI^\star}  + \sum_{t=1}^{T} \left(f(\bd{\underline{\mu}}, \bd{\underline{w}}^\star)-f(\bd{\underline{\mu}}, \bd{\underline{w}}(t))\right)_{+} \indicator{\cI_{t} \neq \cI^\star}\\
   &\overset{Eq~\eqref{eq: OLP regret for optimal I}}{\leq}  \sum_{t=1}^{T} \left(f(\bd{\underline{\mu}}, \bd{\underline{w}}^\star)-f(\bd{\underline{\mu}}, \bd{\underline{w}}(t))\right)_{+} \indicator{\cI_{t} \neq \cI^\star} \\
   \implies \lE \left[ \mathcal{R}_T\right] &\leq \lE \left[ \sum_{t=1}^{T} \left(f(\bd{\underline{\mu}}, \bd{\underline{w}}^\star)-f(\bd{\underline{\mu}}, \bd{\underline{w}}(t))\right)_{+} \indicator{\cI_{t} \neq \cI^\star}\right]
\end{align*}
For each round, we decompose the round wise regret by analyzing two distinct scenarios: the good event (denoted by $\cG\cE$) occurring with high probability $1 - \sfrac{1}{t}$ as guaranteed by Proposition~\ref{prop:conf-set}, and the bad event occurring with complementary probability $\sfrac{1}{t}$. In the latter case, we conservatively bounded the roundwise regret by the quantity $\mu = \sum\limits_{c \in \cC} \norm{{\bd{\mu}}_{c}}_{\infty}$, which provides a worst-case losses. Hence:
\begin{align*}
    \lE \left[ \mathcal{R}_T\right] &\leq \lE \left[ \sum_{t=1}^{T} \left(f(\bd{\underline{\mu}}, \bd{\underline{w}}^\star)-f(\bd{\underline{\mu}}, \bd{\underline{w}}(t))\right)_{+} \indicator{\cI_{t} \neq \cI^\star}\right] \\
    &\leq \sum_{t=1}^{T} \lE \left[ \left(f(\bd{\underline{\mu}}, \bd{\underline{w}}^\star)-f(\bd{\underline{\mu}}, \bd{\underline{w}}(t))\right)_{+} \indicator{\cI_{t} \neq \cI^\star} \mid \cG\cE \right] .1
    + \lE \left[ \left(f(\bd{\underline{\mu}}, \bd{\underline{w}}^\star)-f(\bd{\underline{\mu}}, \bd{\underline{w}}(t))\right)_{+} \indicator{\cI_{t} \neq \cI^\star} \mid \overline{\cG\cE}\right] \sfrac{1}{t} \\
    &\overset{(a)}{\leq}  \sum_{t=1}^{T}  \lE\left[ \frac{\rho(\underline{\bd{\epsilon}}(t),\underline{\bd{w}}(t))^2}{\rho^\star} \right ]  + \frac{\mu}{t} \\
    &\leq  \sum_{t=1}^{T}  \lE\left[ \frac{\rho(\underline{\bd{\epsilon}}(t),\underline{\bd{w}}(t))^2}{\rho^\star} \right ]  + \cO \left(\mu \Log{T} \right)
\end{align*}
Where in (a), we use Equation~\eqref{eq: OLP regret for suboptimal I} for the first term and as discussed we upper bound the roundwise regret by $\mu$ for the second term.
Hence, to control the expected regret, it remains to control 
\begin{align}
    \frac{1}{\rho^\star} \mathds{E}\left[  \sum_{t=1}^T \rho(\underline{\bd\epsilon}(t),\underline{\bd w }(t))^2 \right] &=  \frac{1}{\rho^\star} \mathds{E}\left[  \sum_{t=1}^T \left(\sum_{k,c} \epsilon_{k,c}(t) w_{k,c}(t)\right)^2 \right] \nonumber\\
    &\leq \frac{\kappa}{\rho^\star} \sum_{k,c} \mathds{E}\left[  \sum_{t=1}^T \left( \epsilon_{k,c}(t) w_{k,c}(t)\right)^2 \right] \nonumber \\
    &\overset{\textbf{Prop.}\ref{prop: Pairwise Estimation Error Upper Bound}}{\leq} \cO \left( \frac{\kappa}{\rho^\star} \Log{T}\sum_{k,c} \Log{n_{k,c}(T)}  \right) \nonumber \\
    &\overset{\Log{.} \text{concavity}}{\leq} \cO \left( \frac{\kappa^2}{\rho^\star} \Log{T}^2 \right) \label{eq: sqaure upperboud}
\end{align}
which concludes the proof.

\subsubsection{Constraints Violation of \textbf{OLP}}

Using Proposition~\ref{prop:conf-set}, w.h.p $1-\sfrac{1}{t}$, for any arm \( k  \), the constraints evaluated using the estimated and true means satisfy the following relationship:
\begin{align*}
    g_k(\underline{\textbf{UCB}}(t), \bd{\underline{w}}(t)) \leq g_k(\bd{\underline{\mu}}, \bd{\underline{w}}(t)) + \rho_k(\underline{\bd{\epsilon}}(t), \bd{\underline{w}}(t)).
\end{align*}

Additionally, by the feasibility condition of the solution to \(\textbf{LP}(\underline{\textbf{UCB}}(t),\underline{\textbf{UCB}}(t))\), we have:
\[
g_k(\underline{\textbf{UCB}}(t), \bd{\underline{w}}(t)) \geq \lambda_k, \quad \forall k \in \cK.
\]

Combining the above results yields:
\begin{align} \label{viol upperbound}
\lambda_k - g_k(\bd{\underline{\mu}}, \bd{\underline{w}}(t)) \leq \rho_k(\underline{\bd{\epsilon}}(t), \bd{\underline{w}}(t)), \quad \forall k \in \cK. 
\end{align}

Now, consider rounds \( t \) such that \( \cI_t = \cI^\star \):  
\begin{enumerate}  
    \item If \( k \in \cK \cap \cI^\star \), then \( g_k(\underline{\bd{\mu}}, \underline{\bd{w}}^\star) = \lambda_k \). On the other hand, given~\eqref{eq: lower shooting saturating arms under optimisim}, we have \( g_k(\underline{\bd{\mu}}, \underline{\bd{w}}^\star) \geq g_k(\underline{\bd{\mu}}, \underline{\bd{w}}(t)) \). Thus,  
    \[
    \lambda_k \geq g_k(\underline{\bd{\mu}}, \underline{\bd{w}}(t)).
    \]  
    
    \item If \( k \notin \cK \cap \cI^\star \), then \( \lambda_k \leq g_k(\underline{\bd{\mu}}, \underline{\bd{w}}^\star) \). Furthermore, we have  
    \[
    g_k(\underline{\bd{\mu}}, \underline{\bd{w}}^\star) \leq g_k(\underline{\bd{\mu}}, \underline{\bd{w}}(t))
    \implies \lambda_k \leq g_k(\underline{\bd{\mu}}, \underline{\bd{w}}(t)).
    \]
\end{enumerate}  

Consequently, when \( \cI_t = \cI^\star \), the violation arises only from saturating arms and is given by:  
\[
\cV(t) = \sum_{k \in \cK \cap \cI^\star} \lambda_k - g_k(\underline{\bd{\mu}}, \underline{\bd{w}}(t)) \overset{\eqref{viol upperbound}}{\leq} \sum_{k \in \cK \cap \cI^\star} \rho_k(\underline{\bd{\epsilon}}(t), \underline{\bd{w}}(t)).
\]

Thus, the total constraint violation up to time \( T \) can be expressed as:
\begin{align}
\mathcal{V}_T &= \sum_{t=1}^{T} \sum_{k \in \cK} \Big( \lambda_k - g_k(\bd{\underline{\mu}}, \bd{\underline{w}}(t)) \Big)_{+} \nonumber\\
&= \sum_{t=1}^{T} \indicator{\cI_t = \cI^\star}\sum_{k \in \cK}  \Big( \lambda_k - g_k(\bd{\underline{\mu}}, \bd{\underline{w}}(t)) \Big)_{+} + \sum_{t=1}^{T} \indicator{\cI_t \neq \cI^\star} \sum_{k \in \cK}  \Big( \lambda_k - g_k(\bd{\underline{\mu}}, \bd{\underline{w}}(t)) \Big)_{+} \nonumber \\
&= \underbrace{\sum_{t=1}^{T} \indicator{\cI_t = \cI^\star}\sum_{k \in \cK \cap \cI^\star}  \Big( \lambda_k - g_k(\bd{\underline{\mu}}, \bd{\underline{w}}(t)) \Big)_{+}}_{\cA} + \underbrace{\sum_{t=1}^{T} \indicator{\cI_t \neq \cI^\star} \sum_{k \in \cK}  \Big( \lambda_k - g_k(\bd{\underline{\mu}}, \bd{\underline{w}}(t)) \Big)_{+} \nonumber}_{\cB} 
\end{align}
We proceed by establishing upper bounds for the expected values of both $\cA$ and $\cB$. Our analysis decomposes these quantities under two scenarios: the good event $\cG\cE$ from Proposition~\ref{prop:conf-set} and its complementary event. Let $\lambda = \sum_{K \in \cK} \lambda_k$ denote the worst-case constraint violation that may occur in any given round $t$.
\begin{align}
    \lE \left[ \cA\right]&= \lE\left[ \sum_{t=1}^{T} \indicator{\cI_t = \cI^\star}\sum_{k \in \cK \cap \cI^\star}  \Big( \lambda_k - g_k(\bd{\underline{\mu}}, \bd{\underline{w}}(t)) \Big)_{+} \right] \nonumber\\
    &\leq \sum_{t=1}^{T} \lE\left[ \indicator{\cI_t = \cI^\star}\sum_{k \in \cK \cap \cI^\star}  \Big( \lambda_k - g_k(\bd{\underline{\mu}}, \bd{\underline{w}}(t)) \Big)_{+} \mid \cG\cE \right] .1 + \lE\left[ \indicator{\cI_t = \cI^\star}\sum_{k \in \cK \cap \cI^\star}  \Big( \lambda_k - g_k(\bd{\underline{\mu}}, \bd{\underline{w}}(t)) \Big)_{+} \mid \overline{\cG\cE} \right] \sfrac{1}{t} \nonumber \\
    &\leq \sum_{t=1}^{T} \lE\left[ \indicator{\cI_t = \cI^\star}\sum_{k \in \cK \cap \cI^\star}  \rho_{k}(\underline{\bd\epsilon}(t), \underline{\bd w}(t))  \right]  +  \frac{\lambda}{t} \nonumber \\
    &\leq \sum_{k \in \cK \cap \cI^\star} \lE\left[  \sum_{t=1}^{T} \rho_{k}(\underline{\bd\epsilon}(t), \underline{\bd w}(t))  \right]  +  \cO \left(\lambda \Log{T} \right) \nonumber\\
    &\leq \sum_{k \in \cK \cap \cI^\star} \sum_{c \in \cC} \lE\left[  \sum_{t=1}^{T} \epsilon_{k,c}(t)w_{k,c}(t)  \right]  +  \cO \left(\lambda \Log{T} \right) \nonumber\\
    &\overset{Prop~\ref{prop: Pairwise Estimation Error Upper Bound}}{\leq }  \cO\left(\lE\left[ \Log{T} \sum_{k \in \cK \cap \cI^\star} \sum_{c \in \cC}\sqrt{n_{k,c}(T)}\right] \right) +  \cO \left(\lambda \Log{T} \right) \nonumber\\
    &\overset{\sqrt{.} \text{ concavity}}{\leq}     \cO \left( \sqrt{\abs{\cC} .\abs{\cK \cap \cI^\star} \Log{T}T} + \lambda \Log{T}\right) \label{eq: sqrt upperbound} 
\end{align}
\begin{align}
    \lE[\cB] &= \lE \left[ \sum_{t=1}^{T} \indicator{\cI_t \neq \cI^\star} \sum_{k \in \cK}  \Big( \lambda_k - g_k(\bd{\underline{\mu}}, \bd{\underline{w}}(t)) \Big)_{+} \right] \nonumber\\
    &\leq   \sum_{t=1}^{T} \lE \left[ \indicator{\cI_t \neq \cI^\star} \rho(\underline{\bd{\epsilon}}(t),\underline{\bd{w}}(t))  \mid \cG\cE\right].1 + \lambda\lE \left[ \indicator{\cI_t \neq \cI^\star} \mid \overline{\cG\cE}\right]\sfrac{1}{t} \nonumber\\
    &\leq   \sum_{t=1}^{T} \lE \left[ \rho(\underline{\bd{\epsilon}}(t),\underline{\bd{w}}(t)) \indicator{\rho(\underline{\bd{\epsilon}}(t),\underline{\bd{w}}(t)) \geq \rho^\star} \mid \cG\cE\right] + \sfrac{\lambda}{t} \nonumber\\
    &\leq   \sum_{t=1}^{T} \lE \left[\frac{\rho(\underline{\bd{\epsilon}}(t),\underline{\bd{w}}(t))^2}{{\rho^\star}} \right]  + \cO \left( \lambda \Log{T}\right) \nonumber\\
    &\overset{\text{same as Eq}~\eqref{eq: sqaure upperboud}}{\leq} \cO \left( \frac{\kappa^2}{{\rho^\star}} \Log{T}^2 \right) \nonumber
\end{align}
Combining $\lE[\cA] + \lE[\cB] $ concludes the proof.

\subsection{Results of \textbf{OPLP}}
\ThOPLP*

The specificity of \textbf{OPLP} lies in its use of pessimistic estimates as parameters for the constraints, introducing a safety margin that enhances constraint satisfaction. However, step~\ref{step: pessimistic opt in OPPL} is not guaranteed to be feasible from the outset. For this reason, the algorithm relies on an optimistic approach as a fallback. Under Assumption~\ref{Assump: Strict feasibility}, one can control the number of rounds during which the pessimistic step is infeasible. Recall the Definition~\ref{def: gamma} of $\gamma^\star$, that quantifies the strong feasibility of the problem. It is crucial to control the number of rounds pessimism is infeasible.

\begin{proposition}\label{Prop: tau}
Consider the event  
\(
\cE(t) = \left\{  \rho(\underline{\bd{\epsilon}}(t),\underline{\bd{w}}(t)) \geq \gamma \right\},
\)
and define \( \cE_T = \sum_{t=1}^{T} \indicator{\cE(t)} \). Let \( \tau \) denote the number of rounds in which step~\ref{step: optimistic opt in OPPL} of Algorithm~\ref{Alg: Pessimism} is executed. Then, under \textbf{OPLP}, the following holds:  
\begin{align*}
    \tau &\leq \cE_T, \\
    \lE[\cE_T] &\leq \cO\left( \frac{2\kappa^2}{\gamma^2} \Log{T} \Log{\frac{2\kappa}{\delta}} \right).
\end{align*}
\end{proposition}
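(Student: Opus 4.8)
The plan is to treat the two assertions separately: first the (high-probability) domination \(\tau \leq \cE_T\), which is structural, and then the purely analytic counting bound on \(\lE[\cE_T]\), which is then applied with \(\gamma = \gamma^\star\).

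\textbf{Step 1 (\(\tau \leq \cE_T\)).} A round is counted in \(\tau\) exactly when step~\ref{step: optimistic opt in OPPL} of \textbf{OPLP} fires, i.e. when \(\textbf{LP}(\underline{\textbf{UCB}}(t),\underline{\textbf{LCB}}(t))\) is infeasible. I would show this forces the confidence radius to be at least \(\gamma^\star\) on the good event \(\cG\cE\) of Proposition~\ref{prop:conf-set} (which holds with probability \(\geq 1 - \sfrac{1}{t}\)), arguing by contraposition. Fix a \(\gamma^\star\)-margin allocation \(\underline{\bd w}^{\gamma} \in \Phi(\gamma^\star)\) given by Definition~\ref{def: gamma}, so that \(g_k(\underline{\bd\mu},\underline{\bd w}^{\gamma}) \geq \lambda_k + \gamma^\star\) for all \(k\). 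On \(\cG\cE\) we have \(\underline{\textbf{LCB}}(t) \in \cS_t\), so Proposition~\ref{prop:conf-set} yields \(g_k(\underline{\textbf{LCB}}(t),\underline{\bd w}^{\gamma}) \geq g_k(\underline{\bd\mu},\underline{\bd w}^{\gamma}) - \rho_k(\underline{\bd\epsilon}(t),\underline{\bd w}^{\gamma})\). If the confidence radius were strictly below \(\gamma^\star\), then \(g_k(\underline{\textbf{LCB}}(t),\underline{\bd w}^{\gamma}) > \lambda_k\) for every \(k\), so \(\underline{\bd w}^{\gamma}\) would be feasible for \(\textbf{LP}(\underline{\textbf{UCB}}(t),\underline{\textbf{LCB}}(t))\), contradicting the fallback. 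Hence on \(\cG\cE\) every fallback round lies in \(\cE(t)\), which gives \(\tau \leq \cE_T + \sum_{t=1}^{T}\indicator{\overline{\cG\cE}}\); the residual term has expectation at most \(\sum_{t=1}^{T}\sfrac{1}{t} = \cO(\Log{T})\) and is absorbed into the final bound (the worst-case loss on those rounds being bounded).

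\textbf{Step 2 (\(\lE[\cE_T]\)).} Here I would use \(\indicator{\cE(t)} = \indicator{\rho(\underline{\bd\epsilon}(t),\underline{\bd w}(t)) \geq \gamma} \leq \gamma^{-2}\,\rho(\underline{\bd\epsilon}(t),\underline{\bd w}(t))^2\), then Cauchy--Schwarz on the \(\kappa\)-term sum \(\rho(\underline{\bd\epsilon}(t),\underline{\bd w}(t)) = \sum_{(k,c)} 2\epsilon_{k,c}(t) w_{k,c}(t)\) to obtain \(\rho(\underline{\bd\epsilon}(t),\underline{\bd w}(t))^2 \leq \kappa \sum_{(k,c)} 4\,\epsilon_{k,c}(t)^2 w_{k,c}(t)^2\). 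Summing over \(t\), taking expectations, and invoking Proposition~\ref{prop: Pairwise Estimation Error Upper Bound} (applicable since \(\delta_t = \sfrac{1}{t}\)) gives \(\lE[\cE_T] \leq \cO\!\big(\gamma^{-2}\kappa\,\Log{\tfrac{2\kappa}{\delta}}\big)\sum_{(k,c)} \lE[\Log{n_{k,c}(T)}]\). Finally, concavity of \(\Log{\cdot}\) together with \(\sum_{(k,c)} n_{k,c}(T) = T\) yields \(\sum_{(k,c)} \lE[\Log{n_{k,c}(T)+1}] \leq \kappa\,\Log{\tfrac{T}{\kappa}+1} = \cO(\kappa\Log{T})\), hence \(\lE[\cE_T] \leq \cO\!\big(\tfrac{\kappa^2}{\gamma^2}\Log{T}\Log{\tfrac{2\kappa}{\delta}}\big)\), which is the claimed bound.

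\textbf{Main obstacle.} The delicate point is Step 1: converting ``pessimistic \textbf{LP} infeasible'' into a lower bound on the radius as it actually enters \(\cE(t)\). Two things require care. First, the reference allocation carrying the \(\gamma^\star\) margin must be a maximiser of \(\Phi(\gamma^\star)\), \emph{not} \(\underline{\bd w}^\star\) itself, since any arm in \(\cK \cap \cI^\star\) is exactly saturated by \(\underline{\bd w}^\star\) and hence carries zero margin; one then has to reconcile this reference allocation with the played allocation \(\underline{\bd w}(t)\) appearing in the statement (using that on a fallback round \(\underline{\bd w}(t)\) is the doubly-optimistic solution, which over-weights under-explored pairs). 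Second, the bookkeeping on \(\overline{\cG\cE}\) must be handled so that the \(\sum_t \sfrac{1}{t}\) slack genuinely folds into the \(\cO(\Log{T})\) already tolerated in Theorem~\ref{Th: Results of OPLP}; everything downstream (Step 2) is routine given Propositions~\ref{prop:conf-set} and~\ref{prop: Pairwise Estimation Error Upper Bound}.
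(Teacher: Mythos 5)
Your proposal follows essentially the same route as the paper: infeasibility of the pessimistic LP is converted, via a reference allocation in $\Phi(\gamma^\star)$ and Proposition~\ref{prop:conf-set}, into a lower bound on the confidence radius, and the number of such rounds is then controlled by $\gamma^{-2}\sum_t \rho(\underline{\bd{\epsilon}}(t),\underline{\bd{w}}(t))^2$ together with Proposition~\ref{prop: Pairwise Estimation Error Upper Bound} and concavity of the logarithm — exactly the computation of Eq.~\eqref{eq: sqaure upperboud}. The ``main obstacle'' you flag — that the $\gamma^\star$ margin is carried by the reference allocation $\underline{\bd{w}}^{\gamma}$ whereas $\cE(t)$ is stated for the played allocation $\underline{\bd{w}}(t)$ — is genuine, but the paper's own proof does not resolve it either (it simply asserts $\tau \leq \cE_T$ after noting that $\rho_k(\underline{\bd{\epsilon}}(t),\underline{\bd{w}})\leq\gamma$ for a margin allocation suffices for feasibility), so your version, which also makes the good-event bookkeeping explicit, is if anything slightly more careful than the original.
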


\begin{proof}[Proposition~\ref{Prop: tau}]

Consider the event  
\(
\cE(t) = \left\{  \rho(\underline{\bd{\epsilon}}(t),\underline{\bd{w}}(t)) \geq \gamma \right\} 
\)

To ensure the feasibility of the \textbf{LCB}, the following suffices to hold:
\begin{align*}
\exists \underline{\bd{w}} \in \pi_{K}^{\abs{\cC}}, \quad \forall k \in [K], \quad g_{k} (\underline{\bd{\mu}},\underline{\bd{w}}) - \rho_{k}(\underline{\bd{\epsilon}},\underline{\bd{w}}) \geq \lambda_k.
\end{align*}

Thus, it suffices to ensure that:
\begin{align*}
\forall k \in \cK, \quad \rho_{k}(\underline{\bd{\epsilon}},\underline{\bd{w}}) \leq \gamma,
\end{align*}
Implying that:
\begin{align*}
    \tau  &\leq \sum_{t=1}^{T} \indicator{\cE(t)}= \cE_T
    \leq \sum_{t=1}^{T}  \indicator{  \rho(\underline{\bd{\epsilon}}(t),\underline{\bd{w}}(t)) \geq \gamma} 
    \leq  \sum_{t=1}^{T} \frac{\rho(\underline{\bd{\epsilon}}(t),\underline{\bd{w}}(t))^2}{\gamma^2} \\
    \implies \lE[\tau] \leq \lE[\cE_T] &\leq \frac{1}{\gamma^2}  \sum_{t=1}^{T} \lE\left[\rho(\underline{\bd{\epsilon}}(t),\underline{\bd{w}}(t))^2\right] 
    \overset{\text{same as Eq}~\eqref{eq: sqaure upperboud}}{\leq} \cO \left(\frac{\kappa^2}{\gamma^2} \Log{T}^2 \right)
\end{align*}
\end{proof}

\subsubsection{Constraints Violation of \textbf{OPLP}}

The use of pessimistic estimates for the constraints is advantageous in terms of limiting constraint violations.

\begin{proposition}\label{Prop: zero constraint for pessimism}
    If at round \( t \), the problem \( \textbf{LP}(\underline{\textbf{UCB}}(t), \underline{\textbf{LCB}}(t)) \) is feasible, then w.h.p $1- \sfrac{1}{t}$, the corresponding constraint violation is zero.
\end{proposition}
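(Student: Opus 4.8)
The plan is to exploit the pessimism baked into the constraint parameter $\underline{\textbf{LCB}}(t)$ together with the validity of the confidence set. When $\textbf{LP}(\underline{\textbf{UCB}}(t),\underline{\textbf{LCB}}(t))$ is feasible, \textbf{OPLP} executes step~\ref{step: pessimistic opt in OPPL}, so the selected allocation $\underline{\bd{w}}(t)$ lies in $\pi_{K}^{\abs{\cC}}$ and is feasible for that LP, i.e. $g_k(\underline{\textbf{LCB}}(t),\underline{\bd{w}}(t)) \geq \lambda_k$ for every $k \in \cK$. In particular all its entries $w_{k,c}(t)$ are nonnegative, which is the only structural fact needed. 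I would then condition on the good event $\cG\cE = \{\underline{\bd{\mu}} \in \cS_t(\underline{\bd{\hat{\mu}}}(t),\delta)\}$, which by Proposition~\ref{prop:conf-set}$(i)$ holds with probability at least $1-\delta = 1-\sfrac{1}{t}$, since \textbf{OPLP} sets $\delta \leftarrow \sfrac{1}{t}$ at round $t$.

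On $\cG\cE$, the definition of $\cS_t$ in Eq.~\ref{eq:conf-set-def} gives the \emph{pointwise} lower bound $\text{LCB}_{k,c}(t) = \hat{\mu}_{k,c}(t) - \epsilon_{k,c}(t) \leq \mu_{k,c}$ for all $(k,c) \in \cJ$, i.e. $\underline{\textbf{LCB}}(t) \leq \underline{\bd{\mu}}$ entrywise. I would then invoke monotonicity of the linear functional $g_k$ in its mean argument: since $g_k(\underline{\bd{\nu}},\underline{\bd{w}}) = \sum_{c \in \cC} p_c\, \nu_{k,c}\, w_{k,c}$ with $p_c \geq 0$ and $w_{k,c}(t) \geq 0$, the entrywise inequality $\underline{\textbf{LCB}}(t) \leq \underline{\bd{\mu}}$ yields $g_k(\underline{\textbf{LCB}}(t),\underline{\bd{w}}(t)) \leq g_k(\underline{\bd{\mu}},\underline{\bd{w}}(t))$ for every $k$. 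Chaining with the feasibility inequality from step~\ref{step: pessimistic opt in OPPL} gives $\lambda_k \leq g_k(\underline{\textbf{LCB}}(t),\underline{\bd{w}}(t)) \leq g_k(\underline{\bd{\mu}},\underline{\bd{w}}(t))$, so $(\lambda_k - g_k(\underline{\bd{\mu}},\underline{\bd{w}}(t)))_+ = 0$ for all $k \in \cK$, and hence the per-round violation $\cV(t) = \sum_{k \in \cK}(\lambda_k - g_k(\underline{\bd{\mu}},\underline{\bd{w}}(t)))_+$ vanishes on $\cG\cE$.

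There is essentially no obstacle here — the argument is a one-line monotonicity observation once the confidence set is known to be valid. The only points requiring care are that the relevant high-probability event is precisely $\underline{\bd{\mu}} \in \cS_t$ (probability $1-\sfrac{1}{t}$ by the choice $\delta = \sfrac{1}{t}$), and that one should use the sharp pointwise bound $\underline{\textbf{LCB}}(t) \leq \underline{\bd{\mu}}$ rather than the two-sided estimate of Proposition~\ref{prop:conf-set}$(ii)$, since the goal is an \emph{exact} zero violation rather than a bound of order $\rho_k(\underline{\bd{\epsilon}}(t),\underline{\bd{w}}(t))$.
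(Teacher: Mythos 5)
Your proof is correct and follows essentially the same route as the paper's: feasibility of the pessimistic LP gives $g_k(\underline{\textbf{LCB}}(t),\underline{\bd{w}}(t)) \geq \lambda_k$, and on the good event the entrywise bound $\underline{\textbf{LCB}}(t) \leq \underline{\bd{\mu}}$ together with nonnegativity of $p_c$ and $w_{k,c}(t)$ yields $g_k(\underline{\bd{\mu}},\underline{\bd{w}}(t)) \geq g_k(\underline{\textbf{LCB}}(t),\underline{\bd{w}}(t)) \geq \lambda_k$. Your version merely spells out the monotonicity argument that the paper compresses into the phrase ``pessimism property of the lower confidence bounds.''
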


\begin{proof}[Proposition~\ref{Prop: zero constraint for pessimism}]
    Suppose that step~\ref{step: pessimistic opt in OPPL} of \textbf{OPLP} is feasible at round \( t \), and let \( \underline{\bd{w}}(t) \) denote the corresponding solution. Then, by feasibility, we have:
    \[
        \forall k \in \cK, \quad g_k(\underline{\textbf{LCB}}(t), \underline{\bd{w}}(t)) \geq \lambda_k.
    \]
    By the pessimism property of the lower confidence bounds in Proposition~\ref{prop:conf-set}, we know that w.h.p $1-\sfrac{1}{t}$:
    \[
        g_k(\underline{\bd{\mu}}, \underline{\bd{w}}(t)) \geq g_k(\underline{\textbf{LCB}}(t), \underline{\bd{w}}(t)) \geq \lambda_k.
    \]
    Hence, the constraints are satisfied under the true means \( \underline{\bd{\mu}} \), implying that the constraint violation is zero.
\end{proof}

Thus, at each round \( t \):
\begin{itemize}
    \item If step~\ref{step: pessimistic opt in OPPL} is feasible, then the per-round constraint violation is zero.
    \item Otherwise, the per-round constraint violation is at most \( \lambda = \sum\limits_{k \in \cK} \lambda_k \).
\end{itemize}

Using Proposition~\ref{Prop: tau}, we have:
\begin{align*}
    \mathcal{V}_T &\leq \sum_{t=1}^{T} ( \lambda_k - g_{k}(\underline{\bd{\mu}},\underline{\bd{w}}^\star))_{+} \left(\indicator{\cE(t)} +\indicator{\overline{\cE(t)}}\indicator{\cG\cE}   +\indicator{\overline{\cE(t)}}\indicator{\overline{\cG\cE}} \right), \\
    \implies \mathbb{E}[\mathcal{V}_T] &\leq \underbrace{\lambda \mathbb{E}[\cE_T]}_{(a)} + \underbrace{0}_{(b)} + \underbrace{\sum_{t=1}^{T} \frac{\lambda}{t}}_{(c)}
    \leq \cO\left( \frac{\kappa^2 \lambda}{\gamma^2} \Log{T}^2 \right).
\end{align*}
Where (a) is the consequence of Proposition~\ref{Prop: tau}, (b) is the consequence of Proposition~\ref{Prop: zero constraint for pessimism} and (c) is the result of the low probability event of Proposition~\ref{prop:conf-set}. This concludes the proof of the upper bound on the cumulative constraints violation under \textbf{OPLP}.

\subsubsection{Regret of \textbf{OPLP}}

The regret of \textbf{OPLPs} can be decomposed based on whether step~\ref{step: pessimistic opt in OPPL} is feasible or not. Once the pessimistic step is feasible, a further decomposition considers whether the optimal set of constraints, $\mathcal{I}^\star$, is saturated or not.

\begin{align*}
    \cR_{T}&= \sum_{t=1}^{T} \left(f(\underline{\bd{\mu}},\underline{\bd{w}}^\star) - f(\underline{\bd{\mu}},\underline{\bd{w}}(t)) \right)_{+}\\
    &= \sum_{t=1}^{T} \left(f(\underline{\bd{\mu}},\underline{\bd{w}}^\star) - f(\underline{\bd{\mu}},\underline{\bd{w}}(t)) \right)_{+} \indicator{\cE(t)} + \sum_{t=1}^{T} \left(f(\underline{\bd{\mu}},\underline{\bd{w}}^\star) - f(\underline{\bd{\mu}},\underline{\bd{w}}(t)) \right)_{+} \indicator{\overline{\cE(t)}}  \\
    &= \underbrace{\sum_{t=1}^{T} \left(f(\underline{\bd{\mu}},\underline{\bd{w}}^\star) - f(\underline{\bd{\mu}},\underline{\bd{w}}(t)) \right)_{+} \indicator{\cE(t)}}_{\cA_{1}} + \underbrace{\sum_{t=1}^{T} \left(f(\underline{\bd{\mu}},\underline{\bd{w}}^\star) - f(\underline{\bd{\mu}},\underline{\bd{w}}(t)) \right)_{+} \indicator{\overline{\cE(t)}}\indicator{\cI_{t} = \cI^\star}}_{\cA_2} \\
    &+ \underbrace{\sum_{t=1}^{T} \left(f(\underline{\bd{\mu}},\underline{\bd{w}}^\star) - f(\underline{\bd{\mu}},\underline{\bd{w}}(t)) \right)_{+} \indicator{\overline{\cE(t)}}\indicator{\cI_{t} \neq \cI^\star}}_{\cA_3} 
\end{align*}

Then we proceed by upper-bounding each term $\cA_1, \cA_2$ and $\cA_3$.

\paragraph{Upper-bounding $\cA_1$.} This quantifies the regret induced during the infeasibility of the pessimistic approach.

\begin{proposition}
    \label{prop: regret_of_first_phase_under_pessimism}
     Under \textbf{OPLP}, we have:
    \begin{align*}
        \lE \left[ \cA_1 \right] \leq \cO \left( \frac{\kappa^2 }{\gamma^2} \Log{T}^2 \right).
    \end{align*}
\end{proposition}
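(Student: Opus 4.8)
The plan is to exploit the fact that $\cA_1$ accumulates performance regret only over the rounds on which the confidence radius is still large (the event $\cE(t) = \{\rho(\underline{\bd{\epsilon}}(t),\underline{\bd{w}}(t)) \ge \gamma\}$), and that the number of such rounds has already been bounded in Proposition~\ref{Prop: tau}. So the only extra ingredient needed is a crude uniform bound on the per-round performance regret.

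First I would record that bound. For every $t$ and every allocation $\underline{\bd{w}}(t) \in \pi_{K}^{\abs{\cC}}$, applying H\"older's inequality context by context gives $f(\underline{\bd{\mu}},\underline{\bd{w}}^\star) = \sum_{c \in \cC} p_c\, \bd{\mu}_c^\top \bd{w}_c^\star \le \sum_{c \in \cC} \norm{\bd{\mu}_c}_\infty =: \mu$, and likewise $\abs{f(\underline{\bd{\mu}},\underline{\bd{w}}(t))} \le \mu$, so that
\[
\bigl( f(\underline{\bd{\mu}},\underline{\bd{w}}^\star) - f(\underline{\bd{\mu}},\underline{\bd{w}}(t)) \bigr)_{+} \le 2\mu ,
\]
with $\mu < \infty$ a fixed instance constant (the sharper bound $\mu$ holds if means are nonnegative, exactly the worst-case per-round loss used for \textbf{OLP}). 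Crucially this bound is pathwise and uses nothing about the estimated means, so it can be applied to the whole sum without the usual good-event/bad-event split of Proposition~\ref{prop:conf-set}.

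Next I would sum against the indicator in the definition of $\cA_1$:
\[
\cA_1 = \sum_{t=1}^{T} \bigl( f(\underline{\bd{\mu}},\underline{\bd{w}}^\star) - f(\underline{\bd{\mu}},\underline{\bd{w}}(t)) \bigr)_{+} \indicator{\cE(t)} \;\le\; 2\mu \sum_{t=1}^{T} \indicator{\cE(t)} = 2\mu\, \cE_T ,
\]
hence $\lE[\cA_1] \le 2\mu\, \lE[\cE_T]$. Finally I would invoke Proposition~\ref{Prop: tau}, which (with the schedule $\delta_t = 1/t$, so $\Log{2\kappa/\delta_t} = \cO(\Log{T})$) gives $\lE[\cE_T] \le \cO\bigl( \frac{\kappa^2}{\gamma^2} \Log{T}^2 \bigr)$; recall this in turn rests on the pathwise inequality $\indicator{\rho(\underline{\bd{\epsilon}}(t),\underline{\bd{w}}(t)) \ge \gamma} \le \rho(\underline{\bd{\epsilon}}(t),\underline{\bd{w}}(t))^2 / \gamma^2$ together with the expectation bound $\lE\bigl[ \sum_{t} \rho(\underline{\bd{\epsilon}}(t),\underline{\bd{w}}(t))^2 \bigr] \le \cO(\kappa^2 \Log{T}^2)$ from Eq.~\eqref{eq: sqaure upperboud}, i.e. Proposition~\ref{prop: Pairwise Estimation Error Upper Bound} combined with concavity of $\log$. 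Absorbing the constant $2\mu$ into $\cO(\cdot)$ then yields $\lE[\cA_1] \le \cO\bigl( \frac{\kappa^2}{\gamma^2} \Log{T}^2 \bigr)$, as claimed.

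As for difficulty: there is essentially no obstacle, since the substantive estimate (the $\log^2 T$ control on the number of rounds with a large confidence radius) is already packaged in Proposition~\ref{Prop: tau}, and the rest is a one-line reduction through the uniform per-round bound. The only point deserving a word of care is checking that the crude bound $2\mu$ is legitimate unconditionally (it is, as it ignores the estimated means entirely) and that $\mu$ is a genuine instance constant that may be hidden inside $\cO(\cdot)$.
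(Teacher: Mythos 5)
Your proposal is correct and follows essentially the same route as the paper: a uniform per-round regret bound by the instance constant $\mu = \sum_{c \in \cC} \norm{\bd{\mu}_c}_{\infty}$ (the paper uses $\mu$ rather than your more cautious $2\mu$, an immaterial difference), followed directly by Proposition~\ref{Prop: tau} to control $\lE[\cE_T]$. Your added remark that the crude bound is pathwise and needs no good-event split matches the paper's (implicit) treatment.
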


\begin{proof}[Proposition~\ref{prop: regret_of_first_phase_under_pessimism}]
    The per round regret is upperbounded by $\mu = \underset{c \in \cC}{\sum} \norm{ \bd{\mu}_{c}}_{\infty}$. Hence:
    \begin{align*}
       \lE \left[ \cA_1 \right]&= \lE \left[ \sum_{t=1}^{T} \left(f(\underline{\bd{\mu}},\underline{\bd{w}}^\star) - f(\underline{\bd{\mu}},\underline{\bd{w}}(t)) \right)_{+} \indicator{\cE(t)}\right] \\
       &\leq  \lE \left[ \sum_{t=1}^{T} \mu \indicator{\cE(t)}\right] \\
       &\overset{(a)}{\leq}  \cO\left( \frac{\kappa^2}{\gamma^2} \Log{T}^2 \right)
    \end{align*}
where (a) is based on Proposition~\ref{Prop: tau}.
\end{proof}

\paragraph{Upper-bounding $\cA_2$.} This term corresponds to the rounds where step~\ref{step: pessimistic opt in OPPL} of \textbf{OPLP} is feasible and the optimal constraints are saturated, i.e., $\cI_{t} = \cI^\star$. During these rounds, the algorithm safely activates the saturating arms, i.e., $k \in \cK \cap \cI^\star$, by allocating them more budget, which induces the regret.

\begin{proposition}\label{Prop: regret_when_saturating_the_correct_arms_under_pessimism}
Under \textbf{OPLP}, we have:
% There exists an instance-dependent constant \(\sigma > 0\) such that under \textbf{OPLP}, for rounds where \(\cI_{t} = \cI^\star\) and \( \overline{\cE(t)} \) holds:
% \begin{align*}
%     f(\underline{\bd{\mu}}, \underline{\bd{w}}^\star) - f(\underline{\bd{\mu}}, \underline{\bd{w}}(t)) 
%     \leq \sigma \sum_{k \in \cI^{(1)} \cap \cI^\star} \rho_{k}(\underline{\bd{\epsilon}}(t), \underline{\bd{w}}(t)).
% \end{align*}
% Moreover:
\begin{align*}
    \lE \left[\cA_{2} \right] 
    \leq \cO \left( \sqrt{\abs{\cC} \abs{\cK \cap \cI^\star} \Log{T} T} \right).
\end{align*}
\end{proposition}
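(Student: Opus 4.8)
The plan is to show that, on the rounds counted by $\cA_2$, the allocation $\underline{\bd{w}}(t)$ is determined by a linear system that is a small perturbation of the one defining $\underline{\bd{w}}^\star$, so that the per-round regret is controlled by the confidence radius \emph{restricted to the saturating arms} — precisely the quantity that sums to $\cO(\sqrt{\abs{\cC}\abs{\cK\cap\cI^\star}\Log{T}\,T})$. First, on every $\cA_2$ round the event $\overline{\cE(t)}$ holds, so by Proposition~\ref{Prop: tau} the pessimistic step~\ref{step: pessimistic opt in OPPL} is the one executed and $\underline{\bd{w}}(t)$ solves $\textbf{LP}(\underline{\textbf{UCB}}(t),\underline{\textbf{LCB}}(t))$; since moreover $\cI_t=\cI^\star$, the constraints indexed by $\cI^\star$ are active at $\underline{\bd{w}}(t)$, i.e. $g_k(\underline{\textbf{LCB}}(t),\underline{\bd{w}}(t))=\lambda_k$ for $k\in\cK\cap\cI^\star$, $w_{k,c}(t)=0$ for $(k,c)\in\cJ\cap\cI^\star$, and $q_c(\underline{\bd{w}}(t))=1$ for all $c$. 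By Assumption~\ref{Assump: Strict feasibility}, $\underline{\bd{w}}^\star$ is a non-degenerate vertex, hence these $\abs{\cK\cap\cI^\star}+\abs{\cJ\cap\cI^\star}+\abs{\cC}=\kappa$ constraints are linearly independent; viewing allocations as vectors in $\lR^{\kappa}$, write them as $A(t)\,\underline{\bd{w}}(t)=b$ and the equality system of $\textbf{OPT}(\underline{\bd{\mu}},\underline{\bd{\mu}},\cI^\star)$ as $A^\star\,\underline{\bd{w}}^\star=b$, with the \emph{same} right-hand side $b$ (the $\lambda_k$'s, zeros and ones). The two coefficient matrices differ only in the rows attached to the revenue constraints, where the perturbation acts on any $\underline{\bd{w}}$ as $g_k(\underline{\textbf{LCB}}(t),\underline{\bd{w}})-g_k(\underline{\bd{\mu}},\underline{\bd{w}})$.

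From $A^\star\underline{\bd{w}}^\star=A(t)\underline{\bd{w}}(t)=b$ and invertibility of $A^\star$ one gets $\underline{\bd{w}}^\star-\underline{\bd{w}}(t)=(A^\star)^{-1}(A(t)-A^\star)\underline{\bd{w}}(t)$, whose $k$-th entry for $k\in\cK\cap\cI^\star$ equals $g_k(\underline{\textbf{LCB}}(t),\underline{\bd{w}}(t))-g_k(\underline{\bd{\mu}},\underline{\bd{w}}(t))$, of modulus at most $\rho_k(\underline{\bd{\epsilon}}(t),\underline{\bd{w}}(t))$ by Proposition~\ref{prop:conf-set}; hence $\norm{\underline{\bd{w}}^\star-\underline{\bd{w}}(t)}_\infty\le C_0\sum_{k\in\cK\cap\cI^\star}\rho_k(\underline{\bd{\epsilon}}(t),\underline{\bd{w}}(t))$ with $C_0:=\norm{(A^\star)^{-1}}_\infty<\infty$ a problem-dependent constant. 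Next, $\cI_t=\cI^\star$ forces $w_{k,c}(t)=0$ on $\cJ\cap\cI^\star$, so the support of $\underline{\bd{w}}(t)$ lies in that of $\underline{\bd{w}}^\star$, which by Lemma~\ref{lem: best arm carac} puts all non-saturating mass on a per-context optimal arm. Repeating the computation behind Eq.~\eqref{eq: OLP regret for optimal I} with $\underline{\bd{w}}(t)$ in place of the optimistic iterate and using $\sum_k w_{k,c}(t)=\sum_k w_{k,c}^\star=1$ yields the exact identity
\[
f(\underline{\bd{\mu}},\underline{\bd{w}}^\star)-f(\underline{\bd{\mu}},\underline{\bd{w}}(t))=\sum_{c\in\cC}\sum_{k\in\cK\cap\cI^\star}p_c\,\Delta_{k,c}\big(w_{k,c}(t)-w_{k,c}^\star\big)\;\le\;C_1\sum_{k\in\cK\cap\cI^\star}\rho_k(\underline{\bd{\epsilon}}(t),\underline{\bd{w}}(t)),
\]
where $C_1:=C_0\sum_{c\in\cC}\sum_{k\in\cK\cap\cI^\star}p_c\Delta_{k,c}<\infty$, using $\Delta_{k,c}\ge0$ and the coordinate bound. (When $\cK\cap\cI^\star=\emptyset$ the system forces $\underline{\bd{w}}(t)=\underline{\bd{w}}^\star$ and $\cA_2=0$, matching the claim.)

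Finally, I would plug this into $\cA_2$ and split each round into the good event of Proposition~\ref{prop:conf-set} (probability $1-\sfrac{1}{t}$) and its complement, on which the round-wise regret is $\le\mu=\sum_c\norm{\bd{\mu}_c}_\infty$ and contributes $\cO(\mu\Log{T})$ by Fact~\ref{Fact 2}. On the good event, $\lE[\cA_2]\le C_1\sum_{k\in\cK\cap\cI^\star}\sum_{c}\lE\big[\sum_t 2\epsilon_{k,c}(t)w_{k,c}(t)\big]+\cO(\mu\Log{T})$, and Proposition~\ref{prop: Pairwise Estimation Error Upper Bound} bounds this by $\cO\big(\sqrt{\Log{T}}\sum_{k\in\cK\cap\cI^\star}\sum_c\lE[\sqrt{n_{k,c}(T)}]\big)$; concavity of $\sqrt{\cdot}$ with $\sum_{k,c}n_{k,c}(T)\le T$ then gives $\sum_{k\in\cK\cap\cI^\star,\,c}\lE[\sqrt{n_{k,c}(T)}]\le\sqrt{\abs{\cC}\abs{\cK\cap\cI^\star}\,T}$, i.e. $\lE[\cA_2]=\cO(\sqrt{\abs{\cC}\abs{\cK\cap\cI^\star}\Log{T}\,T})$. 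The main obstacle is the middle step — arguing that once $\cI_t=\cI^\star$ the chosen allocation is \emph{uniquely} pinned by a $\kappa\times\kappa$ equality system (this is exactly where non-degeneracy is indispensable), and converting the matrix-perturbation identity into a bound in terms of $\sum_{k\in\cK\cap\cI^\star}\rho_k$ rather than the full radius $\rho(\underline{\bd{\epsilon}}(t),\underline{\bd{w}}(t))$, since the latter would only yield a $\sqrt{\kappa}$ dependence; one must also check that ties for the per-context optimal arm, and the case where that arm is itself saturating, leave the $\Delta$-decomposition intact.
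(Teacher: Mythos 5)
Your proof is correct and reaches the same bound, but the key middle step is genuinely different from the paper's. Both arguments share the outer skeleton: on $\cA_2$ rounds the pessimistic step is the one executed with $\cI_t=\cI^\star$, the per-round regret is rewritten via Lemma~\ref{lem: best arm carac} as $\sum_{c}\sum_{k\in\cK\cap\cI^\star}\Delta_{k,c}\big(w_{k,c}(t)-w^\star_{k,c}\big)$, the resulting round-wise bound $\cO\big(\sum_{k\in\cK\cap\cI^\star}\rho_k(\underline{\bd{\epsilon}}(t),\underline{\bd{w}}(t))\big)$ is summed using Proposition~\ref{prop: Pairwise Estimation Error Upper Bound} together with concavity of $\sqrt{\cdot}$, and the low-probability event contributes an extra $\cO(\Log{T})$. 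Where you diverge is in how that round-wise bound is obtained. The paper uses the saturation equality $g_k(\underline{\textbf{LCB}}(t),\underline{\bd{w}}(t))=\lambda_k=g_k(\underline{\bd{\mu}},\underline{\bd{w}}^\star)$ to deduce the one-sided control $g_k(\underline{\bd{\mu}},\underline{\bd{w}}(t))-g_k(\underline{\bd{\mu}},\underline{\bd{w}}^\star)\le\rho_k(\underline{\bd{\epsilon}}(t),\underline{\bd{w}}(t))$ and then converts the $\Delta$-weighted sum into a $\mu$-weighted (hence $g_k$-type) sum through a ratio constant $\sigma\ge\Delta_{k,c}/\mu_{k,c}$; this avoids any linear algebra but the term-wise conversion implicitly requires sign control on the individual terms $\mu_{k,c}\big(w_{k,c}(t)-w^\star_{k,c}\big)$. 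You instead bound every coordinate of $\underline{\bd{w}}(t)-\underline{\bd{w}}^\star$ through the perturbation identity $(A^\star)^{-1}(A(t)-A^\star)\underline{\bd{w}}(t)$ on the $\kappa\times\kappa$ active-constraint system, paying a constant $\norm{(A^\star)^{-1}}_\infty$ in place of $\sigma$; since both constants are problem-dependent and absorbed into the $\cO(\cdot)$, the two bounds are equivalent in the stated form. One remark: the uniqueness concern you flag at the end is not actually needed — your identity only requires that $\underline{\bd{w}}(t)$ satisfy $A(t)\underline{\bd{w}}(t)=b$ and that $A^\star$ be invertible (which is exactly the non-degeneracy in Assumption~\ref{Assump: Strict feasibility}), not that $A(t)$ be invertible or that the time-$t$ LP have a unique solution. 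Your coordinate-wise control is arguably the more robust of the two, since it does not rely on the signs of the individual allocation differences.
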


\begin{proof}[Proposition~\ref{Prop: regret_when_saturating_the_correct_arms_under_pessimism}]
For the second phase, when using \textbf{LCB} becomes possible, we consider rounds $t$ where $\cI_{t}=\cI^\star$. This implies that both \(\textbf{LP}(\underline{\bd{{\mu}}},\underline{\bd{{\mu}}})\)  and \(\textbf{LP}(\underline{\bd{{\mu}}},\underline{\textbf{LCB}}(t))\)  saturate the same arms, i.e:
\begin{align}
    \forall k \in \cI_{t} \cap \cK, \quad g_{k}(\underline{\textbf{LCB}}(t),\underline{\bd{w}}(t))&= \lambda_{k}= g_{k}(\underline{\bd{\mu}},\underline{\bd{w}}^\star) \nonumber\\
    \implies \text{w.h.p:  }\, 1-\sfrac{1}{t}, \forall k \in \cI_{t} \cap \cK, \quad g_{k}(\underline{\bd{\mu}},\underline{\bd{w}}(t)) - \rho_{k}(\underline{\bd{\epsilon}}(t),\underline{\bd{w}}(t)) &\leq   g_{k}(\underline{\bd{\mu}},\underline{\bd{w}}^\star)\nonumber \\
    \implies \text{w.h.p:  }\, 1-\sfrac{1}{t}, \forall k \in \cI_{t} \cap \cK, \quad g_{k}(\underline{\bd{\mu}},\underline{\bd{w}}(t)) - g_{k}(\underline{\bd{\mu}},\underline{\bd{w}}^\star) &\leq \rho_{k}(\underline{\bd{\epsilon}}(t),\underline{\bd{w}}(t))  \label{eq: bounded of saturating arm perf under pessimism}  
\end{align}

Notice that:
\begin{align*}
    f(\underline{\bd{\mu}}, \underline{\bd{w}}^\star) 
    &= \sum_{c \in \cC} \sum_{k=1}^{K} \mu_{k,c} w_{k,c} \\
    &\overset{(a)}{=} \sum_{c \in \cC} \left( w_{k_{c}^{\star},c} \norm{{\bd{\mu}}_{c}}_{\infty} + \sum_{k \neq k_{c}^{\star}} \mu_{k,c} w_{k,c} \right) \\
    &= \sum_{c \in \cC} \left( \norm{{\bd{\mu}}_{c}}_{\infty} - \sum_{k \neq k_{c}^{\star}} \Delta_{k,c} w_{k,c} \right) \\
    &\overset{(b)}{=} \sum_{c \in \cC} \left( \norm{{\bd{\mu}}_{c}}_{\infty} - \sum_{k \in \cK \cap \cI^\star} \Delta_{k,c} w_{k,c} \right)
\end{align*}
where in (a), \(k_{c}^{\star} = \argmax_{k \in [K]} \mu_{k,c}\), and (b) uses Lemma~\ref{lem: best arm carac}, which shows that in a given context, the only non-saturating arm that may have non-zero probability is the best arm in that context.

And given that $\cI_{t}=\cI^\star$, then similarly:
\begin{align*}
    f(\underline{\bd{\mu}}, \underline{\bd{w}}(t)) = \sum_{c \in \cC} \left( \norm{{\bd{\mu}}_{c}}_{\infty} - \sum_{k \in \cK \cap \cI^\star} \Delta_{k,c} w_{k,c}(t) \right)
\end{align*}

Hence:
\begin{align*}
    f(\underline{\bd{\mu}}, \underline{\bd{w}}^\star) -  f(\underline{\bd{\mu}}, \underline{\bd{w}}(t)) &= \sum_{c \in \cC}  \sum_{k \in \cK \cap \cI^\star} \Delta_{k,c} \left( w_{k,c}(t) -w_{k,c} \right) \\
    &= \sum_{k \in \cK \cap \cI^\star} \sum_{c \in \cC} \frac{\Delta_{k,c}}{\mu_{k,c}} \mu_{k,c} \left( w_{k,c}(t) -w_{k,c} \right) \\
    &\leq \sigma \sum_{k \in \cK \cap \cI^\star} \sum_{c \in \cC} \mu_{k,c} \left( w_{k,c}(t) -w_{k,c} \right) \\
    &\leq \sigma \sum_{k \in \cK\cap \cI^\star} g_{k}(\underline{\bd{\mu}},\underline{\bd{w}}(t)) - g_{k}(\underline{\bd{\mu}},\underline{\bd{w}}^\star)
\end{align*}
Thus w.h.p $1-\sfrac{1}{t}$, we get:
\begin{align*}
     f(\underline{\bd{\mu}}, \underline{\bd{w}}^\star) -  f(\underline{\bd{\mu}}, \underline{\bd{w}}(t))&\overset{\eqref{eq: bounded of saturating arm perf under pessimism}}{\leq} \sigma \sum_{k \in \cK\cap \cI^\star} \rho_{k}(\underline{\bd{\epsilon}}(t),\underline{\bd{w}}(t))
\end{align*}
Taking the expectation:
\begin{align*}
    \lE \left[\cA_2 \right] &\leq \sigma \lE \left[ \sum_{t=1}^{T} \sum_{k \in \cK\cap \cI^\star} \rho_{k}(\underline{\bd{\epsilon}}(t),\underline{\bd{w}}(t)) \right] + \sum_{t=1}^{T} \frac{\mu}{t}\\
    &\overset{\text{same as Eq}~\eqref{eq: sqrt upperbound}}{\leq} \cO \left( \sqrt{\abs{\cC} .\abs{\cK \cap \cI^\star} \Log{T}T} + \mu \Log{T}\right)
\end{align*}
\end{proof}

\paragraph{Upper-bounding $\cA_3$.} This term corresponds to the rounds where step~\ref{step: pessimistic opt in OPPL} of \textbf{OPLP} is feasible, but the algorithm saturates the wrong set of constraints, i.e., $\cI_{t} \neq \cI^\star$.

\begin{proposition}\label{prop: regret_suboptimal_activated_constraints_under_pessimism}
Under \textbf{OPLP}, the following holds:
\begin{enumerate}
    \item If \( \cI_t \neq \cI^\star \) and $\overline{\cE(t)}$ holds, then w.h.p. $1 -\sfrac{1}{t}$:
    \[
    \rho(\underline{\bd{\epsilon}}(t), \underline{\bd{w}}(t)) \geq \rho^\star.
    \]
    
    \item Moreover, it holds that
    \[
    \lE \left[\cA_{3} \right] 
    \leq \cO \left(\frac{\kappa^2}{{\rho^\star}^2} \Log{T}^2 \right).
    \]
\end{enumerate}
\end{proposition}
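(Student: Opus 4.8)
The plan is to follow the template of the corresponding statement for \textbf{OLP}, Proposition~\ref{prop: OLP rho star}, adapting each step to the pessimistic constraint estimates and the two–phase structure of \textbf{OPLP}, and then to convert the gap inequality of part~(i) into the bound on $\cA_3$ by a crude per–round estimate together with a counting argument. Throughout I condition on the good event $\cG\cE$ of Proposition~\ref{prop:conf-set} (probability $1-\sfrac{1}{t}$). Note first that on $\overline{\cE(t)}$ Proposition~\ref{Prop: tau} guarantees that step~\ref{step: pessimistic opt in OPPL} is feasible, so $\underline{\bd{w}}(t)$ is the maximiser of $\textbf{LP}(\underline{\textbf{UCB}}(t),\underline{\textbf{LCB}}(t))$ and $\cI_t$ is the active set of that program.

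For~(i), I split as in Proposition~\ref{prop: OLP rho star} into a feasibility bound and a performance bound. The feasibility bound is the pessimistic analogue of Lemma~\ref{Lem: w(t) belongs to Psi set}: on $\cG\cE$, \textbf{LCB}-feasibility gives $g_k(\underline{\textbf{LCB}}(t),\underline{\bd{w}}(t))\ge\lambda_k$ and hence $g_k(\underline{\bd{\mu}},\underline{\bd{w}}(t))\ge\lambda_k\ge\lambda_k-\rho(\underline{\bd{\epsilon}}(t),\underline{\bd{w}}(t))$; for $k\in\cK\cap\cI_t$ the active equality $g_k(\underline{\textbf{LCB}}(t),\underline{\bd{w}}(t))=\lambda_k$ together with $g_k(\underline{\bd{\mu}},\underline{\bd{w}}(t))\le g_k(\underline{\textbf{LCB}}(t),\underline{\bd{w}}(t))+\rho_k(\underline{\bd{\epsilon}}(t),\underline{\bd{w}}(t))$ gives $g_k(\underline{\bd{\mu}},\underline{\bd{w}}(t))\le\lambda_k+\rho(\underline{\bd{\epsilon}}(t),\underline{\bd{w}}(t))$; and $h_k(c,\underline{\bd{w}}(t))=0$ on $\cJ\cap\cI_t$ by definition of $\cI_t$. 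Thus $\underline{\bd{w}}(t)\in\psi(\rho(\underline{\bd{\epsilon}}(t),\underline{\bd{w}}(t)),\cI_t)$, so $\rho(\underline{\bd{\epsilon}}(t),\underline{\bd{w}}(t))\ge s(\cI_t)$ by Definition~\ref{def: Psi set and S(I)}. For the performance bound, the inequalities $f(\underline{\bd{\mu}},\underline{\bd{w}}(t))\le z_{\rho(\underline{\bd{\epsilon}}(t),\underline{\bd{w}}(t))}(\underline{\bd{\mu}},\cI_t)$ and $z_{\rho(\underline{\bd{\epsilon}}(t),\underline{\bd{w}}(t))}(\underline{\bd{\mu}},\cI_t)-z_{s(\cI_t)}(\underline{\bd{\mu}},\cI_t)\le\cL(\cI_t)\,\rho(\underline{\bd{\epsilon}}(t),\underline{\bd{w}}(t))$ (Definition~\ref{def: Perf I}, using $\rho(\underline{\bd{\epsilon}}(t),\underline{\bd{w}}(t))\ge s(\cI_t)$) carry over from the \textbf{OLP} proof. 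The one new ingredient is the analogue of the ``optimism'' inequality: since $\underline{\bd{w}}^\star$ is in general infeasible for the pessimistic program, I would contract $\underline{\bd{w}}^\star$ towards a point of $\Phi(\gamma^\star)$ by a margin $\sigma$ — small on $\overline{\cE(t)}$ — chosen so the contracted allocation is feasible for $\textbf{LP}(\underline{\textbf{UCB}}(t),\underline{\textbf{LCB}}(t))$ on $\cG\cE$; optimism of the \textbf{UCB} objective and Definition~\ref{def: Sgamma} (with $S_{\gamma^\star}<\infty$ by Proposition~\ref{Prop: Sgamma}) then yield $f(\underline{\bd{\mu}},\underline{\bd{w}}^\star)-\max(1,S_{\gamma^\star})\,\rho(\underline{\bd{\epsilon}}(t),\underline{\bd{w}}(t))\le f(\underline{\bd{\mu}},\underline{\bd{w}}(t))$. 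Chaining the three displays and dividing by $\max(1,S_{\gamma^\star})+\cL(\cI_t)$ gives $\cP(\cI_t)\le\rho(\underline{\bd{\epsilon}}(t),\underline{\bd{w}}(t))$, so $\rho(\underline{\bd{\epsilon}}(t),\underline{\bd{w}}(t))\ge\max(s(\cI_t),\cP(\cI_t))=\rho(\cI_t)\ge\rho^\star$ because $\cI_t\ne\cI^\star$ (Lemma~\ref{lem: subotimality characterisation}).

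For~(ii), unlike in \textbf{OLP} there is no inequality bounding the per–round performance loss on these rounds by $\rho(\underline{\bd{\epsilon}}(t),\underline{\bd{w}}(t))$, so I would bound the per–round regret by the constant $\mu=\sum_{c\in\cC}\norm{\bd{\mu}_{c}}_{\infty}$ and count rounds. By part~(i), on $\cG\cE$ any round with $\overline{\cE(t)}$ and $\cI_t\ne\cI^\star$ has $\rho(\underline{\bd{\epsilon}}(t),\underline{\bd{w}}(t))\ge\rho^\star$, so $\indicator{\overline{\cE(t)}}\indicator{\cI_t\ne\cI^\star}\le\indicator{\rho(\underline{\bd{\epsilon}}(t),\underline{\bd{w}}(t))\ge\rho^\star}\le\rho(\underline{\bd{\epsilon}}(t),\underline{\bd{w}}(t))^2/{\rho^\star}^2$, while on $\overline{\cG\cE}$ (probability $\le\sfrac{1}{t}$) the loss is at most $\mu$; hence
\[
\lE[\cA_3]\ \le\ \frac{\mu}{{\rho^\star}^2}\sum_{t=1}^{T}\lE\!\left[\rho(\underline{\bd{\epsilon}}(t),\underline{\bd{w}}(t))^2\right]\ +\ \sum_{t=1}^{T}\frac{\mu}{t}\ \le\ \cO\!\left(\frac{\kappa^2}{{\rho^\star}^2}\Log{T}^2\right),
\]
using Eq.~\eqref{eq: sqaure upperboud} (via Proposition~\ref{prop: Pairwise Estimation Error Upper Bound}) for $\sum_t\lE[\rho(\underline{\bd{\epsilon}}(t),\underline{\bd{w}}(t))^2]=\cO(\kappa^2\Log{T}^2)$ and Fact~\ref{Fact 2} for $\sum_t\sfrac{1}{t}=\cO(\Log{T})$. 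This is precisely where the extra factor $1/\rho^\star$ relative to \textbf{OLP} appears: there optimism bounds the per–round loss by $\rho(\underline{\bd{\epsilon}}(t),\underline{\bd{w}}(t))$, turning the count into $\rho(\underline{\bd{\epsilon}}(t),\underline{\bd{w}}(t))^2/\rho^\star$, whereas here only the crude bound $\mu$ is available and the indicator itself must be paid for as $\rho(\underline{\bd{\epsilon}}(t),\underline{\bd{w}}(t))^2/{\rho^\star}^2$.

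The main obstacle is the performance half of part~(i): one must pick the contraction margin $\sigma$ driven by the current confidence radius on $\overline{\cE(t)}$ so that the contracted allocation is \textbf{LCB}-feasible, while keeping the incurred loss $S_{\gamma^\star}\sigma$ within the budget set by the denominator $\max(1,S_{\gamma^\star})+\cL(\cI_t)$ of $\cP(\cI_t)$. This is exactly where Assumption~\ref{Assump: Strict feasibility} — through the margin $\gamma^\star$ and the finiteness of $S_{\gamma^\star}$ — is indispensable, and it is the only genuine departure from the \textbf{OLP} analysis; everything else is routine bookkeeping.
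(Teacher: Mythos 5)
Your proposal is correct and follows essentially the same route as the paper: part~(i) combines the feasibility inclusion $\underline{\bd{w}}(t)\in\psi(\rho(\underline{\bd{\epsilon}}(t),\underline{\bd{w}}(t)),\cI_t)$ with a performance chain through $\Phi$, $S_{\gamma^\star}$ and $\cL(\cI_t)$ (the paper routes this through the value of $\textbf{LP}(\underline{\bd{\mu}},\underline{\textbf{LCB}}(t))$ rather than through $f(\underline{\bd{\mu}},\underline{\bd{w}}(t))$, a cosmetic difference), and part~(ii) is the identical counting argument paying $\mu$ per round and converting the indicator into $\rho(\underline{\bd{\epsilon}}(t),\underline{\bd{w}}(t))^2/{\rho^\star}^2$. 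The only quibble is that your ``optimism'' display should read $f(\underline{\bd{\mu}},\underline{\bd{w}}^\star)-(1+S_{\gamma^\star})\rho(\underline{\bd{\epsilon}}(t),\underline{\bd{w}}(t))\le f(\underline{\bd{\mu}},\underline{\bd{w}}(t))$ (one extra $\rho$ for the UCB-to-true objective error), which only changes constants and does not affect the stated bounds.
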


\begin{proof}[Proposition~\ref{prop: regret_suboptimal_activated_constraints_under_pessimism}]
We prove each point of the Proposition separately.

\paragraph{Proof of Point 1.} Recall the definition of set $\Psi$ already introduced in Definition~\ref{def: Psi set and S(I)}:
\[
\psi(s, \cI) = \left\{ \underline{\bd{w}} \in \pi_{K}^{|\cC|} :
\begin{array}{l}
\forall k \in \cK, \quad g_k(\underline{\bd{\mu}}, \underline{\bd{w}}) \geq \lambda_k -s, \\
\forall k \in \cK \cap \cI, \quad g_k(\underline{\bd{\mu}}, \underline{\bd{w}}) \leq \lambda_k + s, \\
\forall (k, c) \in \cJ \cap \cI, \quad h_k(c, \underline{\bd{w}}) = 0
\end{array}
\right\}.\]

It is straightforward to verify that w.h.p $1 - \frac{1}{t}$:
\begin{align}\label{eq:belongs_to_the_characteristic_set_under_pessimism}
    \underline{\bd{w}}(t) \in \psi\left(\rho(\underline{\bd{\epsilon}}(t), \underline{\bd{w}}(t)), \cI_t \right).
\end{align}

\paragraph{Infeasibility.} Given that $\underline{\bd{w}}(t) \in \psi\left(\rho(\underline{\bd{\epsilon}}(t), \underline{\bd{w}}(t)), \cI_t \right)$ then the latter is not empty, implying that $\rho(\underline{\bd{\epsilon}}(t), \underline{\bd{w}}(t)) \geq s(\cI_t)$.

\paragraph{Performance Gap.}  Recall the set $\Phi$ introduced in Definition~\ref{def: gamma}:

\[
\Phi(s) = \left\{ \underline{\bd{w}} \in \pi_{K}^{|\cC|} :
\begin{array}{l}
\forall k \in [K], \quad g_k(\underline{\bd{\mu}}, \underline{\bd{w}}) \geq \lambda_k + s
\end{array}
\right\}.\]

It is clear that \(\forall s \in [0, \gamma]\), the set \(\Phi(s)\) is non-empty. Furthermore, using Definition~\ref{def: Sgamma} of $S_{\gamma^\star}$, with \(s_2 = \rho(\underline{\bd{\epsilon}}(t), \underline{\bd{w}}(t))\leq \gamma\) and \(s_1 = 0\) yields:
\begin{align*}
    \max_{\underline{\bd{w}} \in \Phi(0)} f(\underline{\bd{\mu}}, \underline{\bd{w}}) -
    \max_{\underline{\bd{w}} \in \Phi(\rho(\underline{\bd{\epsilon}}(t), \underline{\bd{w}}(t)))} f(\underline{\bd{\mu}}, \underline{\bd{w}})
    \leq S_{\gamma^\star} \rho(\underline{\bd{\epsilon}}(t), \underline{\bd{w}}(t)),
\end{align*}
and hence:
\begin{align*}
    f(\underline{\bd{\mu}}, \underline{\bd{w}}^\star) - 
    \max_{\underline{\bd{w}} \in \Phi(\rho(\underline{\bd{\epsilon}}(t), \underline{\bd{w}}(t)))} f(\underline{\bd{\mu}}, \underline{\bd{w}})
    &\leq S_{\gamma^\star} \rho(\underline{\bd{\epsilon}}(t), \underline{\bd{w}}(t))\\
    \implies f(\underline{\bd{\mu}}, \underline{\bd{w}}^\star) -  \textbf{LP}(\underline{\bd{\mu}}, \textbf{LCB}(t)) &\leq S_{\gamma^\star} \rho(\underline{\bd{\epsilon}}(t), \underline{\bd{w}}(t))  \\
    \implies f(\underline{\bd{\mu}}, \underline{\bd{w}}^\star) -  \textbf{LP}(\underline{\bd{\mu}}, \textbf{LCB}(t)) &\leq \max(1,S_{\gamma^\star} )\rho(\underline{\bd{\epsilon}}(t), \underline{\bd{w}}(t)) .
\end{align*}

Now using Definition~\ref{def: Perf I}:
\begin{align*}
    \max_{\underline{\bd{w}} \in \psi(\rho(\underline{\bd{\epsilon}}(t), \underline{\bd{w}}(t)),\cI_{t})} f(\underline{\bd{\mu}}, \underline{\bd{w}}) - 
    \max_{\underline{\bd{w}} \in \psi(s(\cI_t),\cI_{t})} f(\underline{\bd{\mu}}, \underline{\bd{w}}) &\leq \cL(\cI_t) \left(\rho(\underline{\bd{\epsilon}}(t), \underline{\bd{w}}(t)) -s(\cI_t) \right) \\
    \implies  \textbf{LP}(\underline{\bd{\mu}}, \textbf{LCB}(t))- 
    \max_{\underline{\bd{w}} \in \psi(s(\cI_t),\cI_{t})} f(\underline{\bd{\mu}}, \underline{\bd{w}}) &\leq \cL(\cI_t)\left(\rho(\underline{\bd{\epsilon}}(t), \underline{\bd{w}}(t)) -s(\cI_t) \right).
\end{align*}

We conclude that:
\begin{align*}
    f(\underline{\bd{\mu}}, \underline{\bd{w}}^\star) - \max_{\underline{\bd{w}} \in \psi(s(\cI_t),\cI_{t})} f(\underline{\bd{\mu}}, \underline{\bd{w}}) + \cL(\cI_t)s(\cI_t)&\leq (\max(1,S_{\gamma^\star})+\cL(\cI_t)) \rho(\underline{\bd{\epsilon}}(t), \underline{\bd{w}}(t)) \\
    \implies \frac{f(\underline{\bd{\mu}}, \underline{\bd{w}}^\star) - \underset{\underline{\bd{w}} \in \psi(0,\cI_{t})} {\max}f(\underline{\bd{\mu}}, \underline{\bd{w}})+ \cL(\cI_t)s(\cI_t)}{\max(1,S_{\gamma^\star})+\cL(\cI_t)} &\leq \rho(\underline{\bd{\epsilon}}(t), \underline{\bd{w}}(t))   \\
    \implies \frac{f(\underline{\bd{\mu}}, \underline{\bd{w}}^\star) - \underset{\underline{\bd{w}} \in \psi(0,\cI_{t})} {\max}f(\underline{\bd{\mu}}, \underline{\bd{w}})}{\max(1,S_{\gamma^\star})+\cL(\cI_t)} &\leq \rho(\underline{\bd{\epsilon}}(t), \underline{\bd{w}}(t)) \\
    \implies \rho(\cI_t)&\leq \rho(\underline{\bd{\epsilon}}(t), \underline{\bd{w}}(t)).
\end{align*}
Hence, if $\cI_t$ is sub-optimal then $\rho(\cI_t) \geq \rho^\star$ which concludes the proof of first point.

\paragraph{Proof of Point 2.} 
\begin{align*}
   \cA_{3}&=\sum_{t =1}^{T} \left( f(\underline{\bd{\mu}}, \underline{\bd{w}}^\star) - f(\underline{\bd{\mu}}, \underline{\bd{w}}(t))\right)_{+} \indicator{ \cI_t \neq \cI^\star} \indicator{ \overline{\cE(t)}}\\
   &\leq \mu  \sum_{t =1}^{T} \indicator{\cI_t \neq \cI^\star} \indicator{ \overline{\cE(t)}} \\
   \implies \lE [\cA_{3}] &\leq \mu  \sum_{t=1}^{T} \lE \left[ \indicator{\cI_t \neq \cI^\star} \indicator{ \overline{\cE(t)}}\right] \\
   &\leq \mu  \sum_{t=1}^{T} \lE \left[ \indicator{\cI_t \neq \cI^\star} \indicator{ \overline{\cE(t)}}\right] \\
   &\leq \mu  \sum_{t=1}^{T} \lE \left[ \indicator{\cI_t \neq \cI^\star} \indicator{ \overline{\cE(t)}} \mid \cG\cE\right].1 + \sum_{t=1}^{T} \frac{\mu}{t} \\
   &\leq \mu  \sum_{t=1}^{T} \lE \left[ \indicator{\rho(\underline{\bd{\epsilon}}(t), \underline{\bd{w}}(t)) \geq \rho^\star} \indicator{ \overline{\cE(t)}} \mid \cG\cE\right] + \cO(\mu \Log{T}) \\
   &\leq \mu  \sum_{t=1}^{T} \lE \left[ \frac{\rho(\underline{\bd{\epsilon}}(t),\underline{\bd{w}}(t))^2}{  {\rho^\star}^2}\right] + \cO(\mu \Log{T}) \\
   &\overset{\text{same as Eq}~\eqref{eq: sqaure upperboud}}{\leq} \cO \left({\frac{\kappa^2}{{\rho^\star}^2} \Log{T}^2} \right).
\end{align*}
\end{proof}

In conclusion, combining Proposition~\ref{prop: regret_of_first_phase_under_pessimism}, Proposition~\ref{Prop: regret_when_saturating_the_correct_arms_under_pessimism}, and Proposition~\ref{prop: regret_suboptimal_activated_constraints_under_pessimism} completes the proof of the upper bound on the regret of \textbf{OPLP}.

\section{Lower Bound }
\subsection{Lower Bound Theorem}\label{sec: LowerBoundProof}
\ThLowerBound*

\begin{proof}[Theorem~\ref{Th: Lower Bound}]
We proof each point separately. 

\paragraph{(i) First Lower Bound.}We start by proving the first lower bound focusing on the sum of the regret and constraints violation.
\paragraph{1. Used Instances.}
Consider the nominal instance $\bd{\bd{\nu}}^{(0)}$ and two perturbed instances, $\bd{\bd{\nu}}_{+}$ and $\bd{\bd{\nu}}_{-}$, both belonging to the uncertainty set $\Upsilon(\bd{\bd{\nu}}^{(0)}, \varepsilon)$ with Gaussian distributions $\cN(.,1)$. These instances are respectively defined in Table~\ref{tab:nu+ instance} and Table~\ref{tab:nu- instance}, with the perturbation parameter $\varepsilon$ constrained to the interval $\left(0, \tfrac{1}{4}\right)$. Each of these instances admits a distinct optimal allocation, summarized in Table~\ref{tab:allocation-nu-plus} and Table~\ref{tab:allocation-nu-minus} respectively.

%Both bandits are Gaussian with a variance 1, and they share the same mean reward values except for \(\mu_{2,2}\). 

\begin{table}[ht]
\centering
\begin{minipage}{0.43\linewidth}
\centering
\begin{tabular}{c|ccc|c}
\toprule
\multirow{2}{*}{Arm \(k\)} & \multicolumn{3}{c|}{\(p_{c} \,\mu_{k,c}\)} & \multirow{2}{*}{\(\lambda\)} \\
\cmidrule(lr){2-4}
 & \(c=1\) & \(c=2\) & \(c=3\) & \\
\midrule
k=1 & $\mu_{1,1}=3$ & $\mu_{1,2}=1$ & $\mu_{1,3}=1$ & 1 \\
k=2 & $\mu_{2,1}=0$ & \textcolor{red}{$\mu_{2,2}=\frac{1 + \varepsilon}{2}$} & $\mu_{2,3}=0$ & {$\frac{1}{4}$} \\
k=3 & $\mu_{3,1}=0$ & $\mu_{3,2}=0$ & $\mu_{3,3}=2$ & 1 \\
\bottomrule
\end{tabular}
\caption{ \(\bd{\bd{\nu}}_{+}\) instance.}
\label{tab:nu+ instance}
\end{minipage}
\hfill
\begin{minipage}{0.43\linewidth}
\centering
\begin{tabular}{c|ccc|c}
\toprule
\multirow{2}{*}{Arm \(k\)} & \multicolumn{3}{c|}{\(p_{c} \,\mu_{k,c}\)} & \multirow{2}{*}{\(\lambda\)} \\
\cmidrule(lr){2-4}
 & \(c=1\) & \(c=2\) & \(c=3\) & \\
\midrule
k=1 & $\mu_{1,1}=3$ & $\mu_{1,2}=1$ & $\mu_{1,3}=1$ & 1 \\
k=2 & $\mu_{2,1}=0$ & \textcolor{red}{$\mu_{2,2}=\frac{1 - \varepsilon}{2}$} & $\mu_{2,3}=0$ & {$\frac{1}{4}$} \\
k=3 & $\mu_{3,1}=0$ & $\mu_{3,2}=0$ & $\mu_{3,3}=2$ & 1 \\
\bottomrule
\end{tabular}
\caption{ \(\bd{\bd{\nu}}_{-}\) instance.}
\label{tab:nu- instance}
\end{minipage}
\end{table}

\begin{table}[ht]
\centering
\begin{minipage}[t]{0.48\textwidth}
\centering
\begin{tabular}{c|ccc|c}
\toprule
\multirow{2}{*}{Arm \(k\)} & \multicolumn{3}{c|}{\(\underline{\boldsymbol{w}}^\star\)} & \multirow{2}{*}{\(\lambda\)} \\
\cmidrule(lr){2-4}
 & \(c=1\) & \(c=2\) & \(c=3\) & \\
\midrule
1 & 1 & $\frac{1+2\varepsilon}{2(1+\varepsilon)}$ & 0 & 1 \\
2 & 0 & $\frac{1}{2(1 + \varepsilon)}$ & 0 & \(\frac{1}{4}\) \\
3 & 0 & 0 & 1 & 1 \\
\bottomrule
\end{tabular}
\caption{Optimal allocation for instance $\bd{\bd{\nu}}_+$.}
\label{tab:allocation-nu-plus}
\end{minipage}%
\hfill
\begin{minipage}[t]{0.48\textwidth}
\centering
\begin{tabular}{c|ccc|c}
\toprule
\multirow{2}{*}{Arm \(k\)} & \multicolumn{3}{c|}{\(\underline{\boldsymbol{w}}^\star\)} & \multirow{2}{*}{\(\lambda\)} \\
\cmidrule(lr){2-4}
 & \(c=1\) & \(c=2\) & \(c=3\) & \\
\midrule
1 & 1 & $\frac{1-2\varepsilon}{2(1-\varepsilon)}$ & 0 & 1 \\
2 & 0 & $\frac{1}{2(1 - \varepsilon)}$ & 0 & \(\frac{1}{4}\) \\
3 & 0 & 0 & 1 & 1 \\
\bottomrule
\end{tabular}
\caption{Optimal allocation for instance $\bd{\bd{\nu}}_-$.}
\label{tab:allocation-nu-minus}
\end{minipage}
\end{table}

\paragraph{2. Effect of Wrong Beliefs on the Regret–Constraint Violations Trade-off.}
The lower bound is derived from the fact that an incorrect belief about the ground truth leads to either non-zero regret when the belief is overly pessimistic, or a constraint violation when the belief is overly optimistic.

Let \( w_{2,2}^{0}=\sfrac{1}{2} \) be the optimal allocation under the nominal instance $\bd{\bd{\nu}}^{0}$ of the second arm at the secon context:
% \(
% w_{2,2}^{0} = \frac{1}{2} \left( \frac{1}{2(1 + \varepsilon)} + \frac{1}{2(1 - \varepsilon)} \right) = \frac{1}{2(1 - \varepsilon^2)}.
% \)
 \begin{itemize}
     \item If the instance is \(\bd{\bd{\nu}}_{+}\) and \(w_{2,2}(t) \geq w_{2,2}^{0}\), then the algorithm leads to a regret of at least
        \[
        r(t) \geq \left(w_{2,2}^{0} - \frac{1}{2(1 + \varepsilon)}\right) \left(1 - \frac{1 + \varepsilon}{2}\right) \geq \left(\frac{1}{2} - \frac{1}{2(1 + \varepsilon)}\right) \cdot \frac{1 - \varepsilon}{2} = \frac{\varepsilon(1 - \varepsilon)}{4(1 + \varepsilon)} \geq \frac{\varepsilon}{10}.
        \]
    \item If the instance is \(\bd{\bd{\nu}}_{-}\) and \(w_{2,2}(t) \leq w_{2,2}^{0}\), then the algorithm leads to a constraint violation of at least
    \[
      v(t) \geq \frac{1}{4} - \frac{1 - \varepsilon}{2} w_{2,2}^{0}  \geq \frac{1}{4} -\frac{1 - \varepsilon}{4}  \geq \frac{\varepsilon}{10}
    \]

 \end{itemize}

\paragraph{3. Information Theory.}
Let \(\P_{\bd{\nu}_{+}}\) and \(\P_{\bd{\nu}_{-}}\) denote the distributions induced by the learning algorithm under the two problem instances  \(\bd{\nu}_{+}\) and \(\bd{\nu}_{-}\), respectively. Explicitly consider the policy \(\pi\), and let \(\mathcal{F}_T\) denote the trajectory induced by that policy. Given the assumptions on the rewards, the KL-divergence between the distributions over the trajectory of \(T\) rounds satisfies:
\begin{align} \label{eq: Information Theory}
 D\left( \P_{\bd{\nu}_{+}}(\cF_{T}) \,\|\, \P_{\bd{\nu}_{-}}(\cF_{T}) \right) \leq \frac{T\varepsilon^2}{2}      
\end{align}
\begin{proof}[equation~\eqref{eq: Information Theory}]
  We denote by $G_{\pi}(t)$ the aggregated gain received by the player at time $t$ under policy $\pi$. Hence:
  \begin{align*}
       D\left( \P_{\bd{\nu}_{+}}(G_{\pi}(t)) \,\|\, \P_{\bd{\nu}_{-}}(G_{\pi}(t)) \right) &= \sum_{c=1}^{3}\sum_{k=1}^{3} w_{k,c}^{\pi}(t) D\left( \cN(\mu_{k,c}^{\bd{\nu}_{+}},1) \,\|\, \cN(\mu_{k,c}^{\bd{\nu}_{-}},1) \right)\\
       &=  w_{2,2}^{\pi}(t) D\left( \cN(\mu_{2,2}^{\bd{\nu}_{+}},1) \,\|\, \cN(\mu_{2,2}^{\bd{\nu}_{-}},1)\right) \\
       &\leq \frac{\varepsilon^2}{2}
  \end{align*}
  Summing over the trajectory ends the proof.
\end{proof}

Consider the event
\[
\Gamma = \left\{ \sum_{t=1}^{T} \mathbb{I} \left\{ w_{2,2}^{\pi}(t) \geq w_{2,2}^{0} \right\} \geq \frac{T}{2} \right\}.
\]
Note that:
\[
\Gamma \text{ holds under } \bd{\nu}_{+} \quad \Longrightarrow \quad \mathcal{R}_T \geq \frac{\varepsilon T}{10},
\]
and similarly,
\[
\Bar{\Gamma} \text{ holds under } \bd{\nu}_{-}  \quad \Longrightarrow \quad \mathcal{V}_T \geq \frac{\varepsilon T}{10}.
\]
Using the Bretagnolle-Huber inequality~\cite{lattimore2020bandit}:
\begin{align*}
    \P_{\bd{\nu}_{+}}(\Gamma) + \P_{\bd{\nu}_{-}}(\Bar{\Gamma}) \geq \frac{1}{2} \Exp{-D\left( \P_{\bd{\nu}_{+}}(\cF_{T}) \,\|\, \P_{\bd{\nu}_{-}} (\cF_{T})\right)} \geq \frac{1}{2} \Exp{-\frac{T\varepsilon^2}{2}}
\end{align*}

\paragraph{4. Lower Bound Explicitly.}
For any policy \(\pi\) generated by any learning algorithm:
\begin{align*}
\lE \left[ \cR_{\bd{\nu}_{+},\pi}(T) + \cV_{\bd{\nu}_{+},\pi}(T) \right] + \lE \left[ \cR_{\bd{\nu}_{-},\pi}(T) + \cV_{\bd{\nu}_{-},\pi}(T) \right]
&\geq 
\lE \left[ \cR_{\bd{\nu}_{+},\pi}(T) \indicator{\Gamma} \right]
+ \lE \left[ \cV_{\bd{\nu}_{-},\pi}(T) \indicator{\Bar{\Gamma}} \right] \\
&\geq \frac{T\varepsilon}{10}  \left(\P_{\bd{\nu}_{+}}(\Gamma) + \P_{\bd{\nu}_{-}}(\Bar{\Gamma})\right) \\
&\geq \frac{T\varepsilon}{20}  \Exp{-\frac{T\varepsilon^2}{2}}
\end{align*}
Choosing $\varepsilon=\frac{1}{\sqrt{T}}$ for $T \geq 16$ leads to:
\begin{align*}
    \lE \left[ \cR_{\bd{\nu}_{+},\pi}(T) + \cV_{\bd{\nu}_{+},\pi}(T)\right] + \lE \left[ \cR_{\bd{\nu}_{-},\pi}(T)+\cV_{\bd{\nu}_{-},\pi}(T) \right] \geq \frac{\sqrt{T}}{20e^2}
\end{align*}
Let $\varepsilon_T = T^{-1/2}$ (where $T \geq 16$). For any policy $\pi$, since both $\bd{\nu}_{+}$ and $\bd{\nu}_{-}$ belong to $\Upsilon(\bd{\nu}^{(0)}, \varepsilon_T)$, the following holds:
\begin{align*}
     \max_{\boldsymbol{\nu} \in \Upsilon(\boldsymbol{\nu}^{(0)}, \varepsilon_T)} 
\mathbb{E} \left[ \mathcal{R}_{\boldsymbol{\nu}, \pi}(T) + \mathcal{V}_{\boldsymbol{\nu}, \pi}(T) \right] &\geq \frac{1}{2} \left(\lE \left[ \cR_{\bd{\nu}_{+},\pi}(T) + \cV_{\bd{\nu}_{+},\pi}(T)\right] + \lE \left[ \cR_{\bd{\nu}_{-},\pi}(T)+\cV_{\bd{\nu}_{-},\pi}(T) \right] \right) \\
\implies \max_{\boldsymbol{\nu} \in \Upsilon(\boldsymbol{\nu}^{(0)}, \varepsilon_T)} 
\mathbb{E} \left[ \mathcal{R}_{\boldsymbol{\nu}, \pi}(T) + \mathcal{V}_{\boldsymbol{\nu}, \pi}(T) \right] &\geq \frac{\sqrt{T}}{40e^2} \\
\implies \min_{\pi} \max_{\boldsymbol{\nu} \in \Upsilon(\boldsymbol{\nu}^{(0)}, \varepsilon_T)} 
\mathbb{E} \left[ \mathcal{R}_{\boldsymbol{\nu}, \pi}(T) + \mathcal{V}_{\boldsymbol{\nu}, \pi}(T) \right] &\geq \frac{\sqrt{T}}{40e^2} .
\end{align*}
This concludes the proof of (i).

\paragraph{This instance is not a corner case.}
It is worth noting that the instance used in the lower bound is not a corner case; that is, the characterizing gaps are non-zero. In particular, strict feasibility is ensured by setting \(\gamma^\star = \frac{1}{8}\), and the performance gap \(\rho^\star > 0\) because the problem is not degenerate. Specifically, there exists a unique solution, which results in a non-zero performance gap while activating suboptimal indices.

\paragraph{(ii) Second Lower Bound.} We now derive the second lower bound on the regret. 

\paragraph{Used Instance.}Consider the same nominal instance $\bd{\nu}^{(0)}$ as previously defined, and introduce a modified instance $\bd{\nu}^\prime$ that differs from $\bd{\nu}^{(0)}$ only in the reward parameter of arm~1 in the third context: 
\(
\mu_{1,3}(\bd{\nu}^\prime) = 2 + \varepsilon^\prime, \quad \varepsilon^\prime \in (0,1].
\)
The modified instance $\bd{\nu}^\prime$ and its corresponding optimal allocations are summarized in Table~\ref{tab: instance nu prime} and Table~\ref{tab:allocation-nu-prime}, respectively.

\begin{table}[ht]
\centering
\begin{minipage}[t]{0.48\textwidth}
\centering
\begin{tabular}{c|ccc|c}
\toprule
\multirow{2}{*}{Arm \(k\)} & \multicolumn{3}{c|}{\(p_{c} \,\mu_{k,c}\)} & \multirow{2}{*}{\(\lambda\)} \\
\cmidrule(lr){2-4}
 & \(c=1\) & \(c=2\) & \(c=3\) & \\
\midrule
1 & $\mu_{1,1} = 3$ & $\mu_{1,2} = 1$ & {\color{red}$\mu_{1,3} = 2+\varepsilon^\prime$} & $1$ \\
2 & $\mu_{2,1} = 0$ & $\mu_{2,2} = \frac{1}{2}$ & $\mu_{2,3} = 0$ & $\frac{1}{4}$ \\
3 & $\mu_{3,1} = 0$ & $\mu_{3,2} = 0$ & $\mu_{3,3} = 2$ & $1$ \\
\bottomrule
\end{tabular}
\caption{Instance $\bd{\nu}^\prime$.} 
\label{tab: instance nu prime}
\end{minipage}%
\hfill
\begin{minipage}[t]{0.48\textwidth}
\centering
\begin{tabular}{c|ccc|c}
\toprule
\multirow{2}{*}{Arm \(k\)} & \multicolumn{3}{c|}{\(\underline{\boldsymbol{w}}^\star\)} & \multirow{2}{*}{\(\lambda\)} \\
\cmidrule(lr){2-4}
 & \(c=1\) & \(c=2\) & \(c=3\) & \\
\midrule
1 & 1 & $\frac{1}{2}$ & $\frac{1}{2}$ & 1 \\
2 & 0 & $\frac{1}{2}$ & 0 & $\frac{1}{4}$ \\
3 & 0 & 0 & $\frac{1}{2}$ & 1 \\
\bottomrule
\end{tabular}
\caption{Optimal allocation for instance $\bd{\bd{\nu}}^\prime$.}
\label{tab:allocation-nu-prime}
\end{minipage}
\end{table}

% \begin{table}[h]
% \centering
% \begin{tabular}{c|ccc|c}
% \toprule
% \multirow{2}{*}{Arm \(k\)} & \multicolumn{3}{c|}{\(p_{c} \,\mu_{k,c}\)} & \multirow{2}{*}{\(\lambda\)} \\
% \cmidrule(lr){2-4}
%  & \(c=1\) & \(c=2\) & \(c=3\) & \\
% \midrule
% 1 & $\mu_{1,1} = 3$ & $\mu_{1,2} = 1$ & {\color{red}$\mu_{1,3} = 2+\varepsilon^\prime$} & $1$ \\
% 2 & $\mu_{2,1} = 0$ & $\mu_{2,2} = \frac{1}{2}$ & $\mu_{2,3} = 0$ & $\frac{1}{4}$ \\
% 3 & $\mu_{3,1} = 0$ & $\mu_{3,2} = 0$ & $\mu_{3,3} = 2$ & $1$ \\
% \bottomrule
% \end{tabular}
% \caption{Instance $\bd{\nu}^\prime$.} 
% \label{tab: instance nu prime}
% \end{table}

Note that context \( c = 3 \) does not contribute to satisfying the constraint of arm \( k = 1 \), while under $\bd{\nu}^{(0)}$ arm \( k = 3 \) is the best-performing arm in that context and can only satisfy its constraint due to rewards obtained in context \( c = 3 \).

Let $\pi$ be any consistent policy such that there exists a strictly positive constant $\beta$ such that for sufficiently large $T$\; 
\begin{equation}
\mathbb{E}[\mathcal{R}_{ \bd{\nu}^{(0)},\pi }(T)+\mathcal{R}_{\bd{\nu}^\prime, \pi}(T)] \leq \beta \sqrt{T}. \label{eq: consistent policy}
\end{equation}

\paragraph{Information Theory.}
Consider the event
\[
\Gamma^\prime = \left\{ n_{1,3}(T)\geq \frac{p_{3}T}{4} \right\}.
\]
Note that:
\begin{align*}
\Gamma^\prime \text{ holds under } \bd{\nu}^{(0)} \quad &\Longrightarrow \quad \mathcal{R}_{ \bd{\nu}^{(0)},\pi }(T) \geq \frac{(2-1)T}{4}=\frac{ T}{4}, \\
\Bar{\Gamma^\prime} \text{ holds under } \bd{\nu}^\prime  \quad &\Longrightarrow \quad \mathcal{R}_{ \bd{\nu}^\prime,\pi }(T) \geq \varepsilon^\prime (\frac{ T}{2}- \frac{ T}{4}) \geq \frac{ \varepsilon^\prime T}{4}.
\end{align*}

Using the Bretagnolle-Huber inequality~\cite{lattimore2020bandit}:
\begin{align}
    \P_{\bd{\nu}^{(0)}}(\Gamma^\prime) + \P_{\bd{\nu}^{\prime}}(\Bar{\Gamma^\prime}) \geq \frac{1}{2} \Exp{-D\left( \P_{\bd{\nu}^{(0)}}(\cF_{T}) \,\|\, \P_{\bd{\nu}^{\prime}} (\cF_{T})\right)} \geq \frac{1}{2} \Exp{-\frac{\lE_{\bd{\nu}^{(0)},\pi}[n_{1,3}(T)](1+\varepsilon^\prime)^2}{2}} \label{eq: bretagnol}
\end{align}

\paragraph{Lower Bound Explicitly.}
Leveraging Equation~\eqref{eq: bretagnol}:
\begin{align*}
\lE \left[ \cR_{\bd{\nu}^{(0)},\pi}(T) + \cR_{\bd{\nu}^{\prime},\pi}(T) \right] 
&\geq 
\lE \left[ \cR_{\bd{\nu}^{(0)},\pi}(T) \indicator{\Gamma^\prime} \right]
+ \lE \left[ \cR_{\bd{\nu}^{\prime},\pi}(T) \indicator{\Bar{\Gamma^\prime}} \right] \\
&\geq \frac{T\varepsilon^\prime}{4}  \left(\P_{\bd{\nu}^{(0)}}(\Gamma^\prime) + \P_{\bd{\nu}^{\prime}}(\Bar{\Gamma^\prime})\right) \\
&\geq \frac{T\varepsilon^\prime}{8}  \Exp{-\frac{\lE_{\bd{\nu}^{(0)},\pi}[n_{1,3}(T)](1+\varepsilon^\prime)^2}{2}} \\
\implies \lE_{\bd{\nu}^{(0)},\pi}[n_{1,3}(T)] & \geq \frac{2}{(1+\varepsilon^\prime)^2} \left( \Log{\frac{T\varepsilon^\prime}{8}}- \Log{\lE \left[ \cR_{\bd{\nu}^{(0)},\pi}(T) + \cR_{\bd{\nu}^{\prime},\pi}(T) \right]}\right)\\
&\overset{Eq.\eqref{eq: consistent policy}}{\geq} \frac{2}{(1+\varepsilon^\prime)^2} \left( \Log{\frac{T\varepsilon^\prime}{8}}- \Log{\beta \sqrt{T}}\right) \\
&\geq \frac{2}{(1+\varepsilon^\prime)^2} \left( \frac{1}{2}\Log{T}+ \Log{\frac{\varepsilon^\prime}{8\beta} }\right)
\end{align*}

Consequently, for all $T \geq T_0$, where $T_0$ is defined by the condition $\frac{1}{2}\Log{T_0} \gg \abs{\log(\varepsilon^\prime/(8\beta))}$, we obtain the following regret lower bound:
\begin{align*}
    \mathbb{E}\left[\cR_{\bd{\nu}^{(0)},\pi}(T)\right] 
    &\geq (2-1) \mathbb{E}_{\bd{\nu}^{(0)},\pi}[n_{1,3}(T)] \\
    &= \Omega\left(\Log{T}\right).
\end{align*}
\end{proof}

\subsection{Rich Family of MAB-ARC with No Free Exploration}\label{sec: proof of zero entry lemma}
\lemZeroEntry*

\begin{proof}[Lemma~\ref{lem: zero entry}]
We proceed by contradiction. Suppose there exists a feasible instance such that \( |\mathcal{C}| > \frac{K}{K - 1} \), and for every \( (k, c) \in \mathcal{J} \), it holds that \( w_{k,c}^\star \neq 0 \). 

By Assumption~\ref{Assump: Strict feasibility}, the linear program is feasible and non-degenerate. Hence, the optimal solution must saturate exactly \( \kappa \) constraints.

There are \( |\mathcal{C}| \) equality constraints (i.e., the \( q_c \) constraints). Thus, \( |\mathcal{C}| \) constraints are already saturated. The remaining \( \kappa - |\mathcal{C}| = |\mathcal{C}|(K - 1) \) constraints must be saturated by the arm-level aggregated reward constraints (i.e., the \( g_k \) constraints) because by assumption, no variable \( w_{k,c}^\star \) is zero, meaning that none of the non-negativity constraints \( w_{k,c} \geq 0 \) is active, and hence no saturation occurs there. This implies that all \( |\mathcal{C}|(K - 1) \) yet to saturate constraints must come from the \( K \) minimum reward constraints, which is only possible if \( K \geq |\mathcal{C}|(K - 1) \). But this contradicts the assumption that \( |\mathcal{C}| > \frac{K}{K - 1} \). Therefore there must exist at least one pair \( (k, c) \) such that \( w_{k,c}^\star = 0 \). Given that the function $x \mapsto \frac{x}{x-1}$ is strictly decreasing for all $x \geq 3$, we obtain the inclusion relationship
\[
\{\textbf{MAB-ARC}: K>2, |\mathcal{C}|>1\} \subset \{\textbf{MAB-ARC}: |\mathcal{C}| > \tfrac{K}{K-1}\}.
\]
This completes the proof.

\end{proof}
\section{Numerical Illustrations} \label{sec: simulation}
We validate our theoretical results through numerical experiments on simulated data. Specifically, we consider the instance $\boldsymbol{\nu}$ defined in Table~\ref{tab:simulation-egs}, where rewards follow Gaussian distributions $\cN(.,1)$ and contexts are uniformly distributed. The corresponding optimal allocation, which serves as our benchmark, is provided in Table~\ref{tab:allocation-nu-egs}. For this instance the optimal set of active constraints is: 
\[\cI^\star=\{2,3, (2,1),(3,1),(3,2),(2,3)\}.\]  
For comparison, we evaluate both our algorithms (\textbf{OLP} and \textbf{OPLP}) alongside \textbf{Optimistic}$^3$ from~\cite{guo_2025} and the \textbf{DOC} and \textbf{SPOC} algorithms from~\cite{pmlr-v238-baudry24a}. There is no established baseline in the literature that directly addresses contextual multi-armed bandits with revenue constraints for benchmarking our algorithms. However, \cite{guo_2025} introduced \textbf{Optimistic}$^3$ for MABs with general stochastic constraints, which can be readily adapted to our setting. In addition, one may adapt non-contextual algorithms such as \textbf{DOC} and \textbf{SPOC} by disregarding contextual information. While this leads to an unfair comparison---since the algorithms operate under different informational assumptions---it underscores the importance of leveraging contextual information when available, as doing so yields markedly superior performance. Indeed, \textbf{DOC} and \textbf{SPOC} inherently neglect the contextual dimension of the problem, instead estimating quantities of the form 
\(\sfrac{\lambda_k}{\sum_{c \in \mathcal{C}} p_c \mu_{k,c}}\) for each arm and proceeding accordingly.

% Note that \textbf{DOC} and \textbf{SPOC} ignore the contextual nature of the problem, focusing instead on estimating $\sfrac{\lambda_k}{\sum_{c \in \mathcal{C}} p_c \mu_{k,c}}$ for each arm and acting accordingly. 
We conduct experiments over $T = 50 \times 10^3$ rounds, repeated for 5 independent epochs. 

\vspace{3em}

\begin{table}[ht]
\centering
\begin{minipage}[t]{0.46\linewidth}
\centering
\renewcommand{\arraystretch}{1.2}
\begin{tabular}{c|ccc|c}
\toprule
\multirow{2}{*}{Arm \(k\)} & \multicolumn{3}{c|}{\(p_{c} \,\mu_{k,c}\)} & \multirow{2}{*}{\(\lambda\)} \\
\cmidrule(lr){2-4}
 & \(c=1\) & \(c=2\) & \(c=3\) & \\
\midrule
1 & \(\mu_{1,1} = 3\) & \(\mu_{1,2} = 1\) & \(\mu_{1,3} = 2\) & 1 \\
2 & \(\mu_{2,1} = 0\) & \(\mu_{2,2} = \frac{1}{2}\) & \(\mu_{2,3} = 0\) & \(\frac{1}{4}\) \\
3 & \(\mu_{3,1} = 0\) & \(\mu_{3,2} = 0\) & \(\mu_{3,3} = 1\) & \(\frac{1}{2}\) \\
\bottomrule
\end{tabular}
\caption{Instance \(\boldsymbol{\nu}\).} 
\label{tab:simulation-egs}
\end{minipage}%
\hfill
\begin{minipage}[t]{0.48\linewidth}
\centering
\renewcommand{\arraystretch}{1.2}
\begin{tabular}{c|ccc|c}
\toprule
\multirow{2}{*}{Arm \(k\)} & \multicolumn{3}{c|}{\(\underline{\boldsymbol{w}}^\star\)} & \multirow{2}{*}{\(\lambda\)} \\
\cmidrule(lr){2-4}
 & \(c=1\) & \(c=2\) & \(c=3\) & \\
\midrule
1 & 1 & \(\frac{1}{2}\) & \(\frac{1}{2}\) & 1 \\
2 & 0 & \(\frac{1}{2}\) & 0 & \(\frac{1}{4}\) \\
3 & 0 & 0 & \(\frac{1}{2}\) & \(\frac{1}{2}\) \\
\bottomrule
\end{tabular}
\caption{Optimal allocation for instance \(\boldsymbol{\nu}\).}
\label{tab:allocation-nu-egs}
\end{minipage}
\end{table}

\begin{figure}[ht]
    \centering
    \includegraphics[width=1\linewidth]{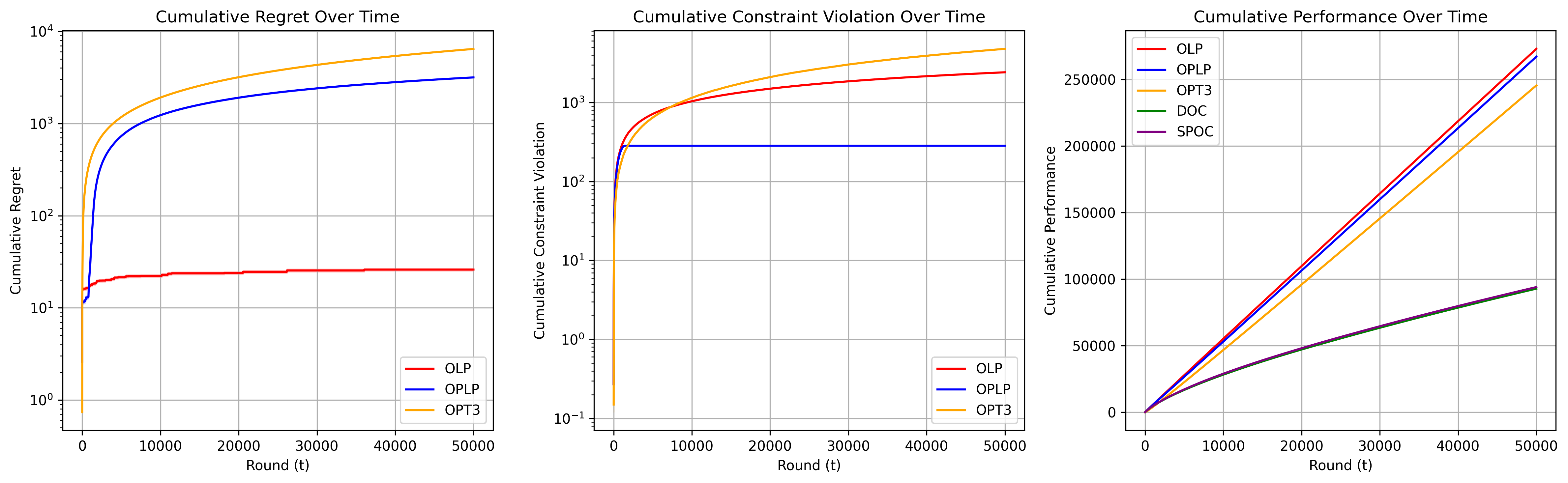}%{NeurIPS_2025/Figure/Egs-k=3, C=3, T=70000,Distribution=gaussian, epochs=5.png}
    \caption{Display of the cumulative regret and constraint violation for \textbf{OLP}, \textbf{OPLP} and \textbf{Optimistic}$^3$ (denoted by OPT3), and the cumulative performance of all five algorithms, under identical conditions ($K=3$, $|\mathcal{C}|=3$, $T=50,\!000$, Gaussian distributions $\cN(.,1)$, 5 epochs).}
    \label{Fig: experiment}
\end{figure}

Figure~\ref{Fig: experiment} reports:
\begin{itemize}
    \item The cumulative regret and constraint violation for \textbf{OLP}, \textbf{OPLP} and \textbf{Optimistic}$^3$.
    \item The cumulative performance of all five algorithms
\end{itemize}
From a regret perspective, \textbf{OLP} achieves superior performance compared to \textbf{OPLP}, exhibiting logarithmic regret versus the $\mathcal{O}(\sqrt{T})$ regret of \textbf{OPLP}. Conversely, \textbf{OPLP} ensures stronger constraint satisfaction than \textbf{OLP}, achieving logarithmic rather than $\mathcal{O}(\sqrt{T})$ constraint violation. In contrast, \textbf{Optimistic}$^3$ yields $\mathcal{O}(\sqrt{T})$ bounds for both regret and constraint violation. Hence, \textbf{OLP} and \textbf{OPLP} are better suited to the considered setting, as they achieve a polylogarithmic regime compared to the $\mathcal{O}(\sqrt{T})$ behavior of \textbf{Optimistic}$^3$.

The third plot highlights the performance advantage of our contextual approach: both \textbf{OLP} and \textbf{OPLP} outperform the non-contextual baselines \textbf{DOC} and \textbf{SPOC}, justifying the need for additional work beyond existing literature to better adapt to the \textbf{MAB-ARC} setting.

% Additionally, Figure~\ref{fig:occu_algorithms_comparison} presents the empirical allocation distributions across arms and contexts for all algorithms. These results demonstrate that in the contextual setting, higher-performing algorithms must preserve the sparsity structure induced by contextual information—a crucial advantage over context-free approaches. This observation aligns with our earlier lower bound analysis and further challenges the feasibility of free exploration in the \textbf{MAB-ARC} framework.

% \begin{figure}[ht]
%     \centering
%     \subfloat[OLP\label{fig:OLP}]{
%         \includegraphics[width=0.28\linewidth]{NeurIPS_2025/Figure/OLP-k=3, C=3, T=70000,Distribution=gaussian, epochs=5.png}
%     }
%     \hfill
%     \subfloat[OPLP\label{fig:OPLP}]{
%         \includegraphics[width=0.28\linewidth]{NeurIPS_2025/Figure/OPLP-k=3, C=3, T=70000,Distribution=gaussian, epochs=5.png}
%     }
%     \hfill
%     \subfloat[DOC\label{fig:DOC}]{
%         \includegraphics[width=0.28\linewidth]{NeurIPS_2025/Figure/DOC-k=3, C=3, T=70000,Distribution=gaussian, epochs=5.png}
%     }
%     \hfill
%     \subfloat[SPOC\label{fig:SPOC}]{
%         \includegraphics[width=0.28\linewidth]{NeurIPS_2025/Figure/SPOC-k=3, C=3, T=70000,Distribution=gaussian, epochs=5.png}
%     }
%     \caption{Display of the empirical percentage of pulling each arm in each context across all algorithms: (a) OLP, (b) OPLP, (c) DOC and (d) SPOC under identical conditions (k=3, C=3, T=70000, Gaussian distribution $\cN(.,1)$, 5 epochs).}
%     \label{fig:occu_algorithms_comparison}
% \end{figure}

%\newpage
\paragraph{Non Saturating Arms.}For the sake of completeness, we include an additional experiment to illustrate the correct adaptability of our analysis in the regime where no arm saturates its revenue constraints. We consider an instance $\bd{\nu}^\prime$ defined in Table~\ref{tab:allocation-nu-NSegs},where rewards follow Gaussian distributions $\cN(.,1)$ and contexts are uniformly distributed, and, according to the oracle solution given in Table~\ref{tab:allocation-nu-NSegs}, none of the arms saturates its respective revenue constraint.  For this instance the optimal set of active constraints is: 
\[\cI^\star=\{ (2,1),(3,1),(1,2),(3,2),(1,3),(2,3)\}.\] 

\begin{table}[ht]
\centering
\begin{minipage}[t]{0.46\linewidth}
\centering
\renewcommand{\arraystretch}{1.2}
\begin{tabular}{c|ccc|c}
\toprule
\multirow{2}{*}{Arm \(k\)} & \multicolumn{3}{c|}{\(p_{c} \,\mu_{k,c}\)} & \multirow{2}{*}{\(\lambda\)} \\
\cmidrule(lr){2-4}
 & \(c=1\) & \(c=2\) & \(c=3\) & \\
\midrule
1 & \(\mu_{1,1} = 3\) & \(\mu_{1,2} = 1\) & \(\mu_{1,3} = 1\) & 1 \\
2 & \(\mu_{2,1} = 0\) & \(\mu_{2,2} = 3\) & \(\mu_{2,3} = 1\) & \(1\) \\
3 & \(\mu_{3,1} = 1\) & \(\mu_{3,2} = 1\) & \(\mu_{3,3} = 3\) & \(1\) \\
\bottomrule
\end{tabular}
\caption{Instance \(\boldsymbol{\nu}^\prime\).} 
\label{tab:simulation-NSegs}
\end{minipage}%
\hfill
\begin{minipage}[t]{0.48\linewidth}
\centering
\renewcommand{\arraystretch}{1.2}
\begin{tabular}{c|ccc|c}
\toprule
\multirow{2}{*}{Arm \(k\)} & \multicolumn{3}{c|}{\(\underline{\boldsymbol{w}}^\star\)} & \multirow{2}{*}{\(\lambda\)} \\
\cmidrule(lr){2-4}
 & \(c=1\) & \(c=2\) & \(c=3\) & \\
\midrule
1 & 1 & 0 & 0 & 1 \\
2 & 0 & 1 & 0 & 1\\
3 & 0 & 0 & 1 & 1 \\
\bottomrule
\end{tabular}
\caption{Optimal allocation for instance \(\boldsymbol{\nu}^\prime\).}
\label{tab:allocation-nu-NSegs}
\end{minipage}
\end{table}

In accordance with Theorems~\ref{Th: Results of OLP} and~\ref{Th: Results of OPLP}, Figure~\ref{Fig: experiment NS} shows that both the regret and the constraint violations for \textbf{OLP} and \textbf{OPLP} exhibit logarithmic behavior.

\begin{figure}[h]
    \centering
    \includegraphics[width=0.9\linewidth]
    {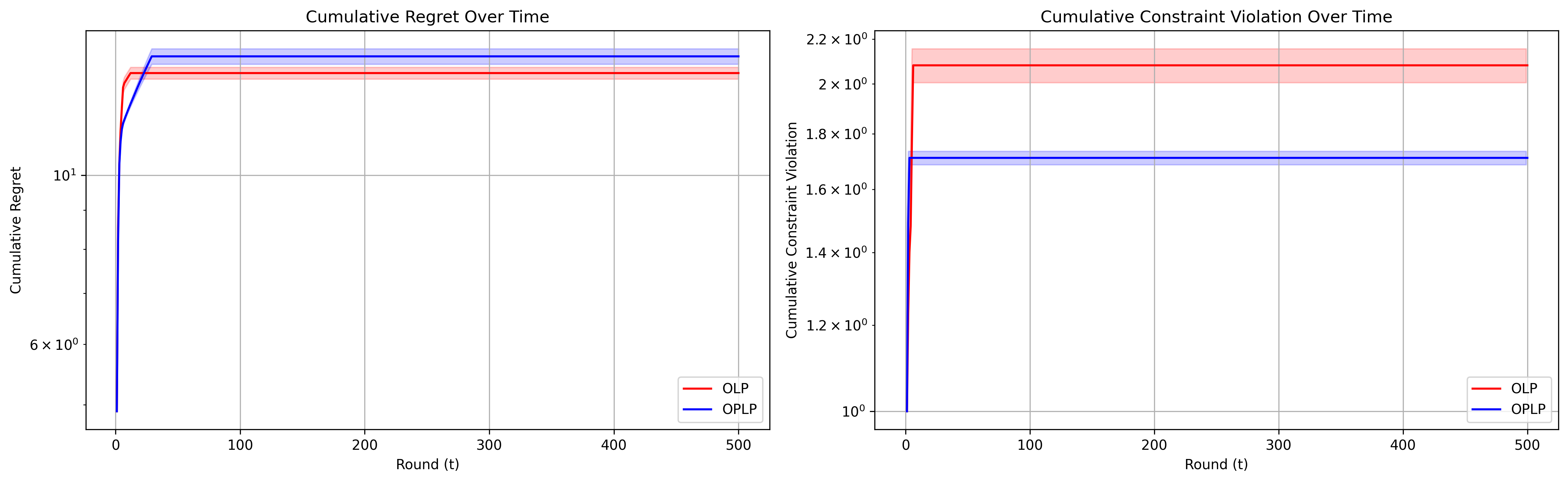}
    %{NeurIPS_2025/Figure/Egs-NS-k=3, C=3, T=10000,Distribution=gaussian, epochs=5.png}%{NeurIPS_2025/Figure/Egs-NS-k=3, C=3, T=300,Distribution=gaussian, epochs=5.png}
    \caption{Cumulative regret and constraint violation for \textbf{OLP}and \textbf{OPLP}, evaluated under identical conditions ($K=3$, $|\mathcal{C}|=3$,$T=500$, Gaussian distributions $\cN(.,1)$, 5 epochs) on an instance where, according to the optimal stationary policy, no arm saturates its revenue constraint.}
    \label{Fig: experiment NS}
\end{figure}

\subsection{Sensitivity of \textbf{OPLP} to the Feasibility Margin}
\begin{figure}[h]
    \centering
    \includegraphics[width=0.9\linewidth]{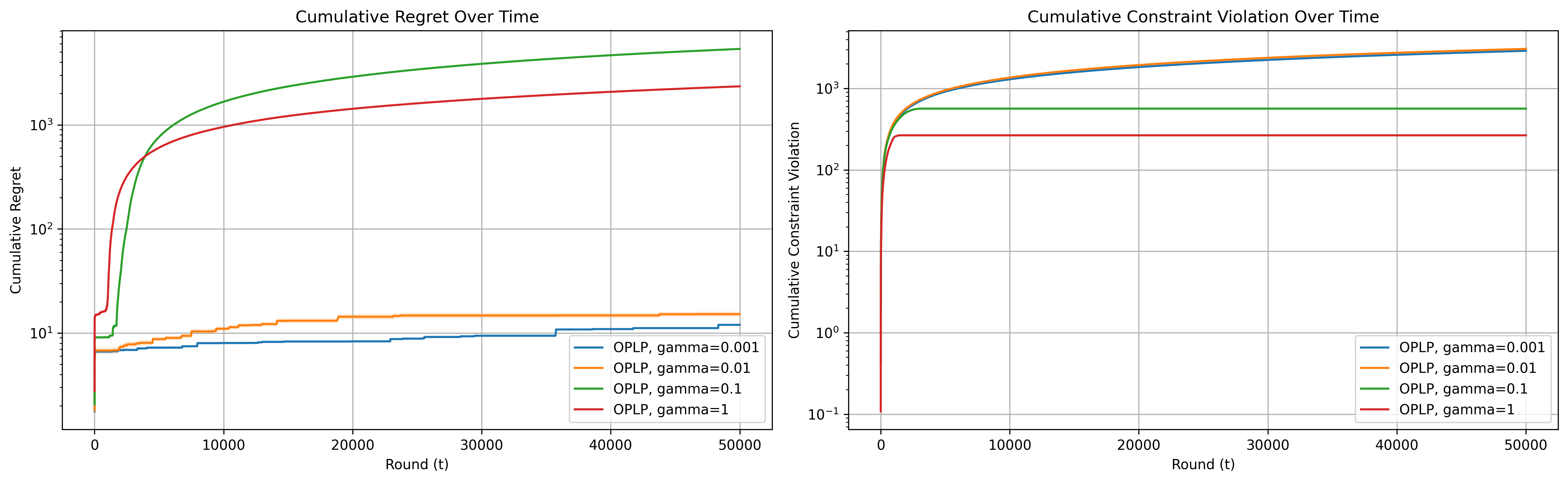}%
    \caption{Cumulative regret and constraint violation of \textbf{OPLP} under different values of the feasibility margin $\gamma^\star$. ($K=3$, $|\mathcal{C}|=3$,$T=50,\!000$, Gaussian distributions $\cN(.,1)$, 5 epochs)}
    \label{Fig: experiment OPLP sensitivity}
\end{figure}

The \textbf{OPLP} algorithm heavily relies on the feasibility margin $\gamma^\star$, as it adopts a conservative strategy that prioritizes constraint satisfaction through the use of a \textbf{LCB} estimator for the constraints. However, this approach may not always be feasible, which motivates the use of a \textbf{UCB}-based strategy as a fallback. To illustrate this, we deploy \textbf{OPLP} under different values of $\gamma^\star$, as shown in Figure~\ref{Fig: experiment OPLP sensitivity}.
In fact, for small values of $\gamma^\star$ (i.e., $\gamma^\star = 0.001$ and $\gamma^\star = 0.01$), the \textbf{LCB} estimator was never feasible during the entire time horizon, and the behavior of \textbf{OLP} was observed instead, yielding logarithmic regret and $\cO(\sqrt{T})$ constraint violations. However, for larger values such as $\gamma^\star = 0.1$ and $\gamma^\star = 1$ , the \textbf{LCB} estimator becomes feasible, and the standard behavior of \textbf{OPLP}; logarithmic constraint violations and $\cO(\sqrt{T})$ regret, is recovered. Notably, we observe the expected rate of $\tfrac{1}{{\gamma^\star}^2}$ scaling in front of the logarithmic behavior of the constraint violation under \textbf{OPLP}.

%\newpage
\section{Counterexample Demonstrating the Inefficiency of Greedy Behavior}\label{sec: greedy_counter_egs}
Greedy achieves sublinear regret in the single-context setting because the constraints enforce exploration: in order to satisfy the revenue constraint for each arm, Greedy is forced to play all arms and eventually refines its estimates. 
However, in the multi-context setting, the constraints do not necessarily enforce exploration. 
For instance, consider the example in Table~\ref{tab:greedy_counter_example}.

\begin{table}[h]
\centering
\begin{tabular}{c|cc|c}
\toprule
\multirow{2}{*}{Arm \(k\)} & \multicolumn{2}{c|}{\(p_{c} \,\mu_{k,c}\)} & \multirow{2}{*}{\(\lambda\)} \\
\cmidrule(lr){2-3}
 & \(c=1\) & \(c=2\) & \\
\midrule
1 & \(\mu_{1,1} = 1\) & \(\mu_{1,2} = 2\) & 0.1 \\
2 & \(\mu_{2,1} = 2\) & \(\mu_{2,2} = 1\) & 0.1 \\
\bottomrule
\end{tabular}
\caption{Counterexample instance illustrating Greedy’s inefficiency in the multi-context setting.}
\label{tab:greedy_counter_example}
\end{table}

% \begin{table}[h]\label{tab:greedy_counter_example}
% \centering
% \begin{tabular}{|l|ll|l|}
% \hline
% \multirow{2}{*}{$k$} & \multicolumn{2}{l|}{$p_c \mu_{k,c}$} & \multirow{2}{*}{$\lambda$} \\ \cline{2-3}
%                    & \multicolumn{1}{l|}{$c=1$}             & $c=2$            &                                         \\ \hline
% $1$                  & \multicolumn{1}{l|}{$1$}               & $2$              & $0.1$                                     \\ \cline{2-3}
% $2$                  & \multicolumn{1}{l|}{2}               & $1$              & $0.1$                                     \\ \hline
% \end{tabular}
% \end{table}

In this setting, the optimal allocation is $p_{1,2} = 1, p_{2,1} = 1$. 
However, the Greedy algorithm may incorrectly conclude, with constant probability, that the optimal allocation is $p_{1,1} = 1, p_{2,2} = 1$, thereby incurring linear regret. 
This inefficiency arises because the constraints are not tight enough to enforce sufficient exploration. 
It is worth noting that in the single-constraint case, Greedy may also yield linear regret if $\lambda_k = 0$ for the best arm in the instance.

%%%%%%%%%%%%%%%%%%%%%%%%%%%%%%%%%%%%%%%%%%%%%%%%%%%%%%%%%%%%

\end{document}